\newcommand{\rone}[1]{{\textcolor{black}{#1}}}
\newcommand{\cD}{\mathcal{D}}
\newcommand{\cK}{\mathcal{K}}
\newcommand{\cL}{\mathcal{L}}
\newcommand{\cN}{\mathcal{N}}
\newcommand{\bI}{\mathbb{I}}
\newcommand{\bP}{\mathbb{P}}
\newcommand{\bR}{\mathbb{R}}
\par\vspace{4mm}}
\newcommand*{\argmin}{\mathop{\mathrm{argmin}}}
\newcommand{\BlackBox}{\rule{1.5ex}{1.5ex}}  %
\def\QED{~\rule[-1pt]{5pt}{5pt}\par\medskip}
\newtheorem{theorem}{Theorem}[section]
\newtheorem{proposition}{Proposition}[section]
\newtheorem{remark}{Remark}
\newtheorem{claim}[theorem]{Claim}
\newtheorem{assumption}{Assumption}
\title{Benign Overfitting in Classification: Provably Counter Label Noise with Larger Models}
\author{Kaiyue Wen$^{1,}$\thanks{Equal contribution.}~, Jiaye Teng$^{1,2,3,*}$, Jingzhao Zhang$^{1,2,3,}$\thanks{Corresponding author.} \\
$^1$Institute for Interdisciplinary Information Sciences, Tsinghua University\\ 
$^2$Shanghai Qizhi Institute\\
$^3$Shanghai Artificial Intelligence Laboratory\\
\texttt{\{wenky20,tjy20\}@mails.tsinghua.edu.cn},  \texttt{jingzhaoz@mail.tsinghua.edu.cn}\\
}
\newbox{\bigpicturebox}
\begin{document}

\maketitle

\begin{abstract}

Studies on benign overfitting provide insights for the success of overparameterized deep learning models. In this work, we examine whether overfitting is truly benign in real-world classification tasks. We start with the observation that a ResNet model overfits benignly on Cifar10 but \textbf{not} benignly on ImageNet. To understand why benign overfitting fails in the ImageNet experiment, we theoretically analyze benign overfitting under a more restrictive setup where the number of parameters is not significantly larger than the number of data points. Under this mild overparameterization setup, our analysis identifies a phase change: unlike in the previous heavy overparameterization settings, benign overfitting can now fail in the presence of label noise. Our analysis explains our empirical observations, and is validated by a set of control experiments with ResNets. Our work highlights the importance of understanding implicit bias in underfitting regimes as a future direction.

\end{abstract}

\section{Introduction}
\label{sec: intro}

Modern deep learning models achieve good generalization performances even with more parameters than data points. This surprising phenomenon is referred to as benign overfitting, and differs from the canonical learning regime where good generalization requires limiting the model complexity~\citep{mohri2018foundations}. One widely accepted explanation for benign overfitting is that optimization algorithms benefit from implicit bias and find good solutions among the interpolating ones under the overparametrized settings. The implicit bias can vary from problem to problem. Examples include the min-norm solution in regression settings or the max-margin solution in classification settings{~\citep{DBLP:conf/icml/GunasekarLSS18,DBLP:conf/iclr/SoudryHNS18,DBLP:conf/nips/GunasekarLSS18}}. These types of bias in optimization can further result in good generalization performances~\citep{bartlett2020benign, DBLP:conf/colt/ZouWBGK21,DBLP:journals/corr/abs-2202-05928}. 
These studies provide novel insights, yet they sometimes differ from the deep learning practice: {state-of-the-art models, despite being overparameterized, often do not interpolate the data points (\emph{e.g.},~\citet{DBLP:conf/cvpr/HeZRS16, devlin2018bert})}.

We first examine the existence of benign overfitting in realistic setups.
In the rest of this work, 
we term \emph{benign overfitting} as the observation that \emph{validation performance does not drop while the model fits more training data points}\footnote{ {A more detailed discussion can be found in Appendix~\ref{appendix: benign overfitting criterion in practice}. This definition is slightly different from existing theoretical literature but can be verified more easily in practice. }}.
We test whether ResNet~\citep{DBLP:conf/cvpr/HeZRS16} models overfit data benignly for image classification on CIFAR10 and ImageNet. Our results are shown in Figure~\ref{fig1} below. 
\begin{figure*}[th]  
\vspace{-0.9em}
\centering
\label{fig:imagenet}
\subfigure[ImageNet]{
\begin{minipage}{0.315\linewidth}
\centerline{\includegraphics[width=1\textwidth]{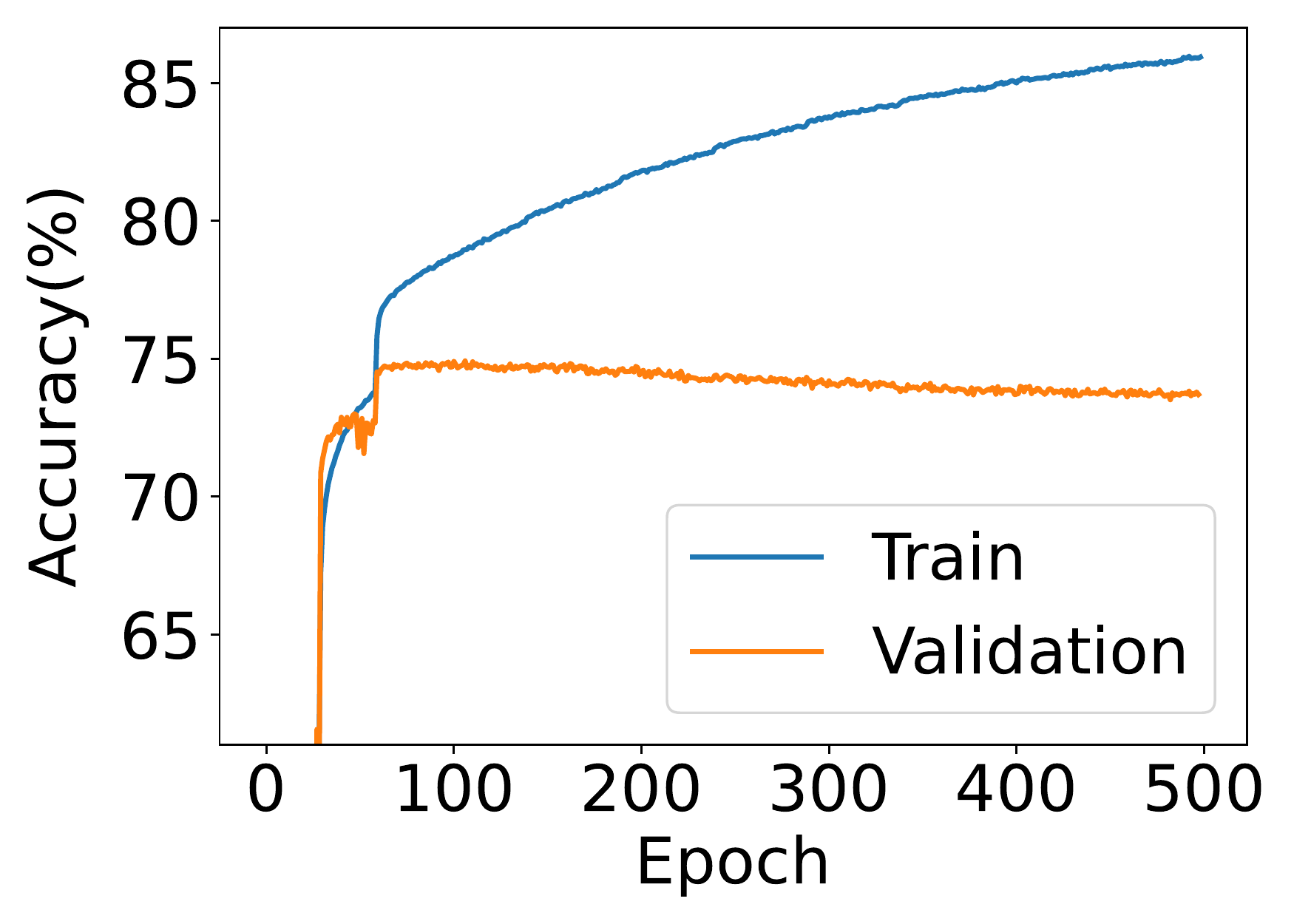}}
\end{minipage}
}
\quad
\label{fig:CIFAR10}
\subfigure[CIFAR10]{
\begin{minipage}{0.3\linewidth}
\centerline{\includegraphics[width=1\textwidth]{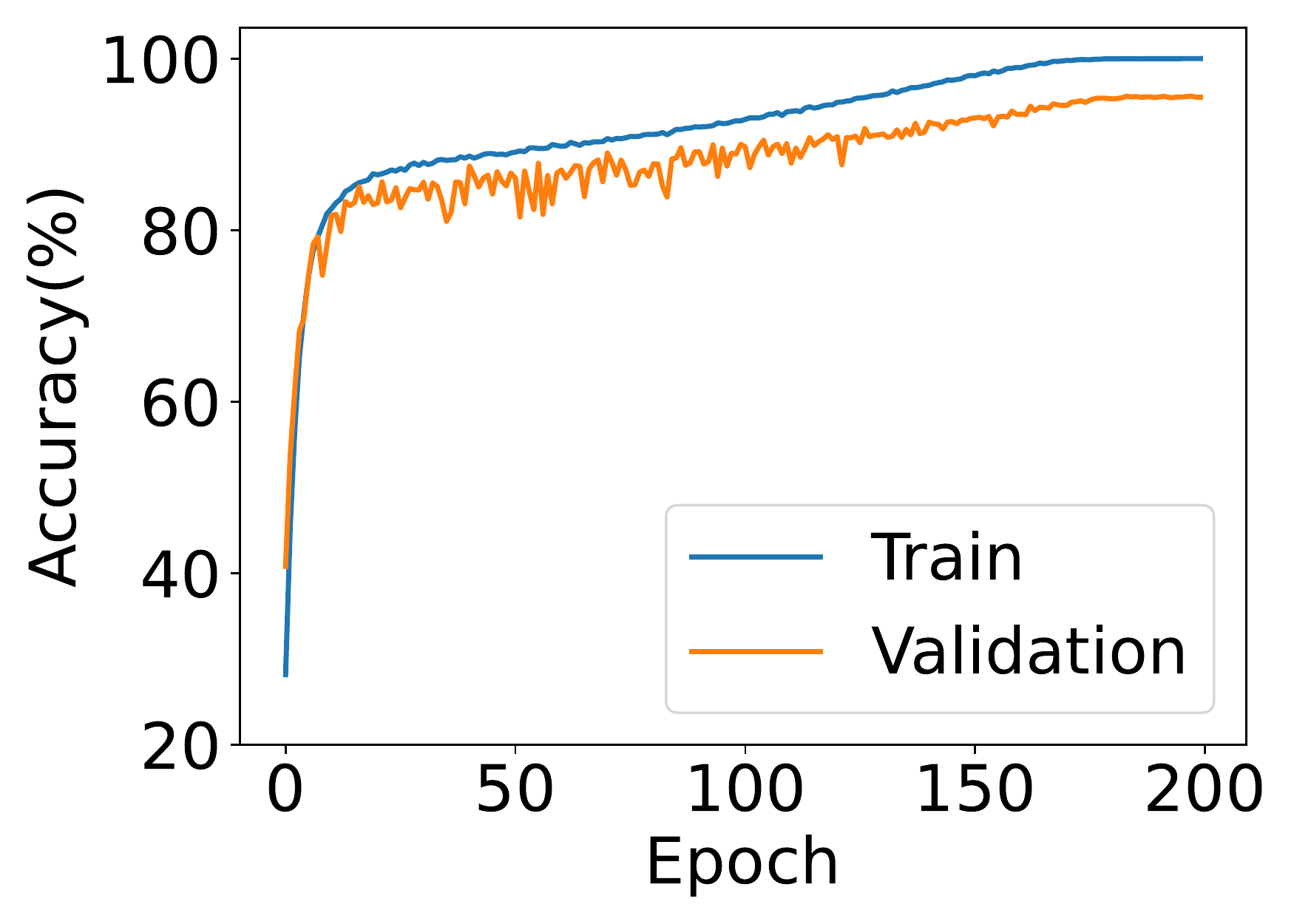}}
\end{minipage}
}
\quad
\vspace{-0.3cm}
\caption{\textbf{Different Overfitting Behaviors on ImageNet and CIFAR10.}
We use ResNet50/ResNet18 to train ImageNet/CIFAR10 and plot the training loss as well as the validation loss.
We find that ResNet50 overfits the ImageNet non-benignly, while ResNet18 overfits CIFAR10 benignly.
}
\vspace{-0.4em}
\label{fig1}
\end{figure*}

In particular, we first trained ResNet18 on CIFAR10 for 200 epochs and the model interpolates the training data. In addition, we also trained ResNet50 on ImageNet for 500 epochs, as opposed to the common schedule that stops at 90 epochs. Surprisingly, we found that although benign overfitting happens on the CIFAR10 dataset, \emph{overfitting is not benign on the ImageNet dataset---the test loss increased as the model further fits the train set.}
More precisely, the ImageNet experiment does not overfit benignly since the best model is achieved in the middle of the training.
The different overfitting behaviors cannot be explained by known analysis for classification tasks, as no negative results have been studied yet.

\begin{table*}[t]
    \centering
    \begin{tabular}{c|c|c|c}
    \hline
         \ & \makecell{Classification \\ Noiseless} & \makecell{Classification \\ Noisy} & \makecell{Regression \\ Noisy} \\
         \hline
        \makecell{Mild  over-\\parameterization \\ $p = \Theta(n)$} & \makecell{ \textbf{Benign} \\ (Ours)} & \makecell{\textbf{Not Benign} \\ (Ours)} & \makecell{Not Benign \\ \citep{bartlett2020benign}} \\
        \hline
        \makecell{Heavy  over-\\parameterization  \\ $p = \omega(n)$} & \makecell{Benign \\ \citep{DBLP:conf/nips/CaoGB21} \\ \citep{DBLP:conf/icassp/WangT21}} & \makecell{ Benign \\ \citep{DBLP:journals/jmlr/ChatterjiL21} \\ \citep{DBLP:journals/corr/abs-2202-05928}\\ \citep{DBLP:conf/icassp/WangT21}} & \makecell{Benign \\ \citep{bartlett2020benign} \\ \citep{DBLP:conf/colt/ZouWBGK21} }\\
        \hline
    \end{tabular}
    \caption{
    For classification problems, previous work~\citep{DBLP:conf/nips/CaoGB21,DBLP:journals/jmlr/ChatterjiL21} provided upper bounds under the heavy overparameterization setting.  
    This paper focuses on a mild overparameterization setting, and shows that the label noise may break benign overfitting.
    Similar results were known in regression problems. 
    For regression with noisy responses, \citet{bartlett2020benign} shows that the interpolator fails under mild overparameterization regimes while may work under heavy overparameterization, as is consistent with the double descent curve. 
    However, the analysis under mild overparameterization in the classification task remained unknown.}
    \vspace{-0.7em}
    \label{tab: comparison}
\end{table*}

Motivated by the above observation, our work aims to understand the cause for the two different overfitting behaviors in ImageNet and CIFAR10, and to reconcile the empirical phenomenon with previous analysis on benign overfitting. Our first hint comes from the level of overparameterization. Previous results on benign overfitting in the classification setting usually requires that $p = \omega(n)$, where $p$ denotes the number of parameters and $n$ denotes the training sample size~\citep{DBLP:conf/icassp/WangT21,DBLP:conf/nips/CaoGB21,DBLP:journals/jmlr/ChatterjiL21,DBLP:journals/corr/abs-2202-05928}. However, in practice many deep learning models fall in the  \emph{mild overparameterization} regime, where the number of parameters is only slightly larger than the number of samples despite overparameterization. In our case, the sample size  $n=10^6$ in ImageNet, whereas the parameter size $p \approx 10^7$ in ResNets. 

To close the gap, we study the overfitting behavior of classification models under \emph{mild overparameterization} setups where $p = \Theta(n)$ (this is sometimes referred to as the asymptotic regimes). In particular, following~\citet{DBLP:conf/icassp/WangT21,DBLP:conf/nips/CaoGB21,DBLP:journals/jmlr/ChatterjiL21,DBLP:journals/corr/abs-2202-05928}, we analyze the solution of stochastic gradient descent for the Gaussian mixture models. We found that a \textbf{phase change} happens when we move from $p = \Omega(n \log n)$ (studied in~\citet{DBLP:conf/icassp/WangT21}) to $p = \Theta(n)$. Unlike previous analysis, we show that benign overfitting now provably fails in the presence of label noise (see Table~\ref{tab: comparison} and Figure~\ref{simulation}).
This aligns with our empirical findings as ImageNet is known to suffer from mislabelling and multi-labels~\citep{yun2021re, shankar2020evaluating}. 

More specifically, our analysis (see Theorem~\ref{thm:main thm} for details) under the \emph{mild overparameterization ($p = \Theta(n)$)} setup  supports the following statements that align with our empirical observations in Figure~\ref{fig1} and ~\ref{simulation}:
\begin{itemize}[leftmargin=0.5cm, itemsep=0.5pt,topsep=0pt,parsep=0pt]
    \item When the labels are noiseless, benign overfitting holds under similar conditions as in previous analyses. 
    \item When the labels are noisy, the interpolating solution can provably lead to a positive excess risk that does not diminish with the sample size.
    \item When the labels are noisy, early stopping can provably lead to better generalization performances compared to interpolating models.
\end{itemize}

Our analysis is supported by both synthetic and deep learning experiments. First, we train a linear classifier in Figure~\ref{simulation}. Figure~(a) shows that training on noiseless data does not cause overfitting, Figure~(b) shows that training on noisy data with a label noise ratio 0.4 causes overfitting with mild overparameterization, and Figure~(c) shows that applying early-stopping on the same noisy data can effectively avoid overfitting.
Besides, we find that overfitting consistently happens under noisy regimes, despite the theoretical guarantee that the model overfits benignly when $p = \omega(n)$.

Furthermore, our analysis is validated by control experiments in Section~\ref{sec:exp} with ResNets. We first show that injecting random labels into the CIFAR10 dataset indeed leads to the failure of benign overfitting. We further show that, surprisingly, increasing the width of ResNet alone can alleviate overfitting incorrect labels, as our theory predicted.

More broadly, the empirical and theoretical results in our work point out that modern models may not operate under benign interpolating regimes. Hence, it highlights the importance of characterizing implicit bias when the model, though mildly overparameterized, does not interpolate the training data.

\begin{figure*}[t]  
\centering
\label{fig:Noiseless Sim}
\subfigure[Noiseless]{
\begin{minipage}{0.30\linewidth}
\centerline{\includegraphics[width=1\textwidth]{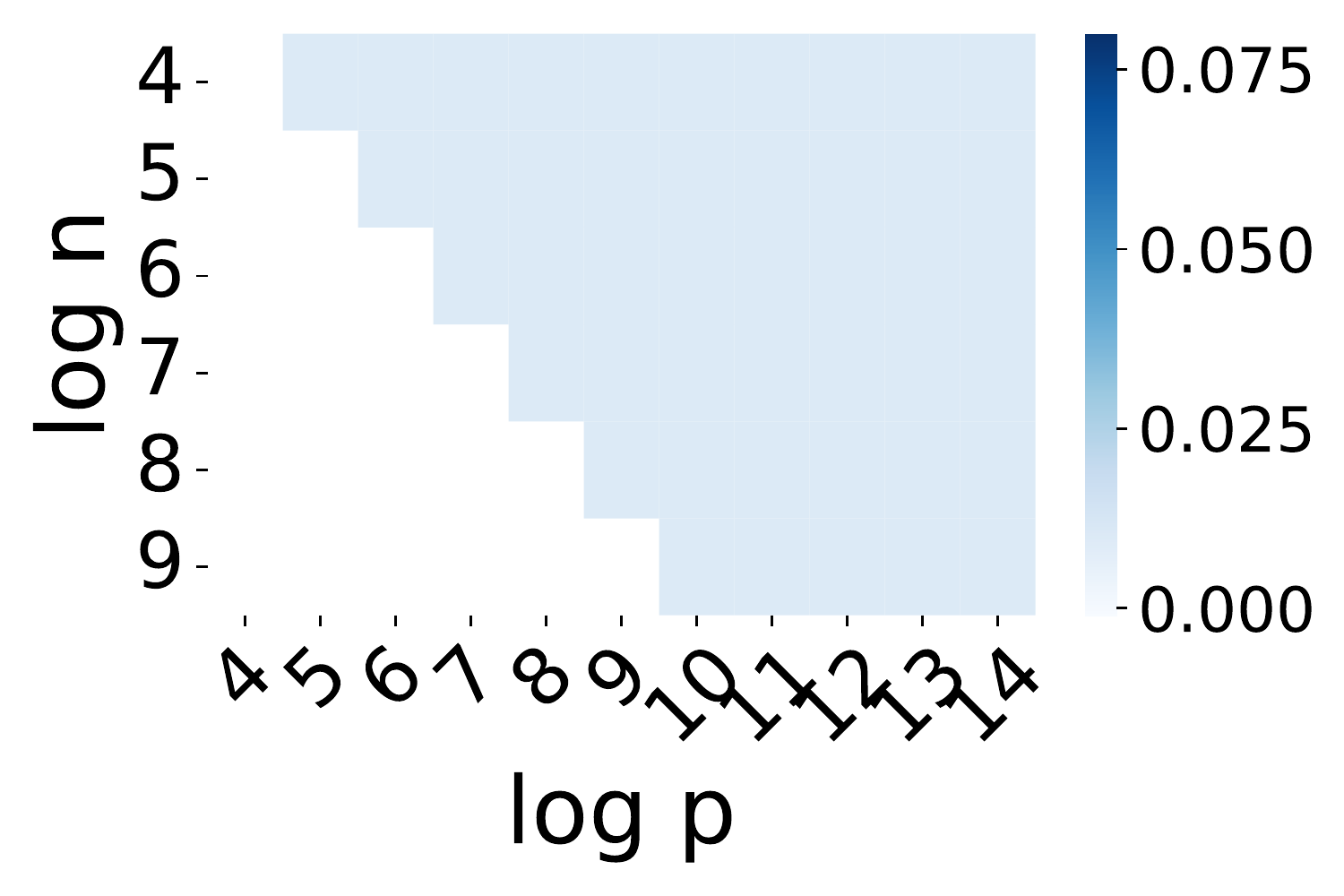}}
\end{minipage}
}
\quad
\label{fig:Noisy Sim}
\subfigure[Noisy]{
\begin{minipage}{0.30\linewidth}
\centerline{\includegraphics[width=1\textwidth]{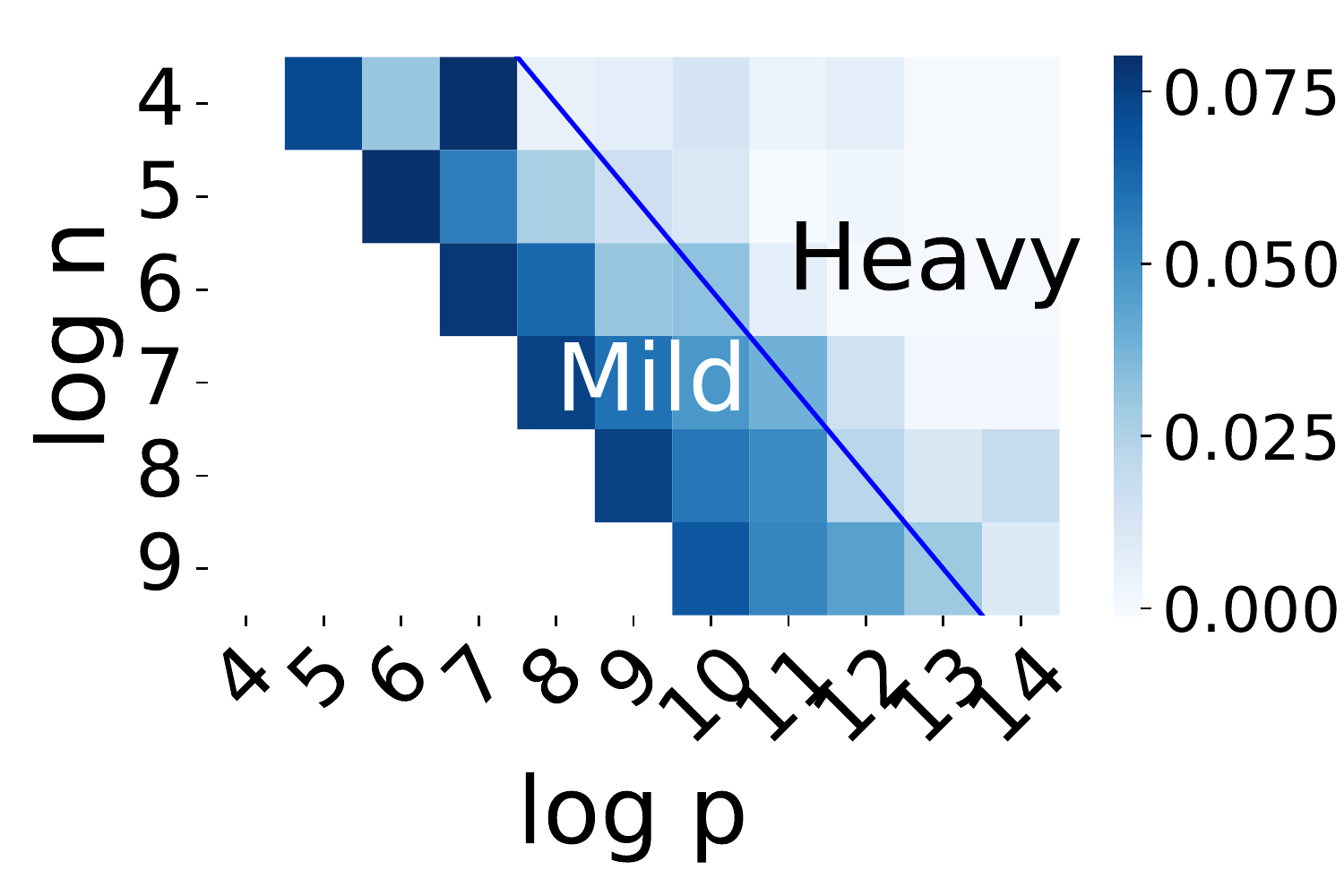}}
\end{minipage}
}
\quad
\label{fig: Noisy Sim Early}
\subfigure[Early Stop]{
\begin{minipage}{0.30 \linewidth}
\centerline{\includegraphics[width = 1\textwidth]{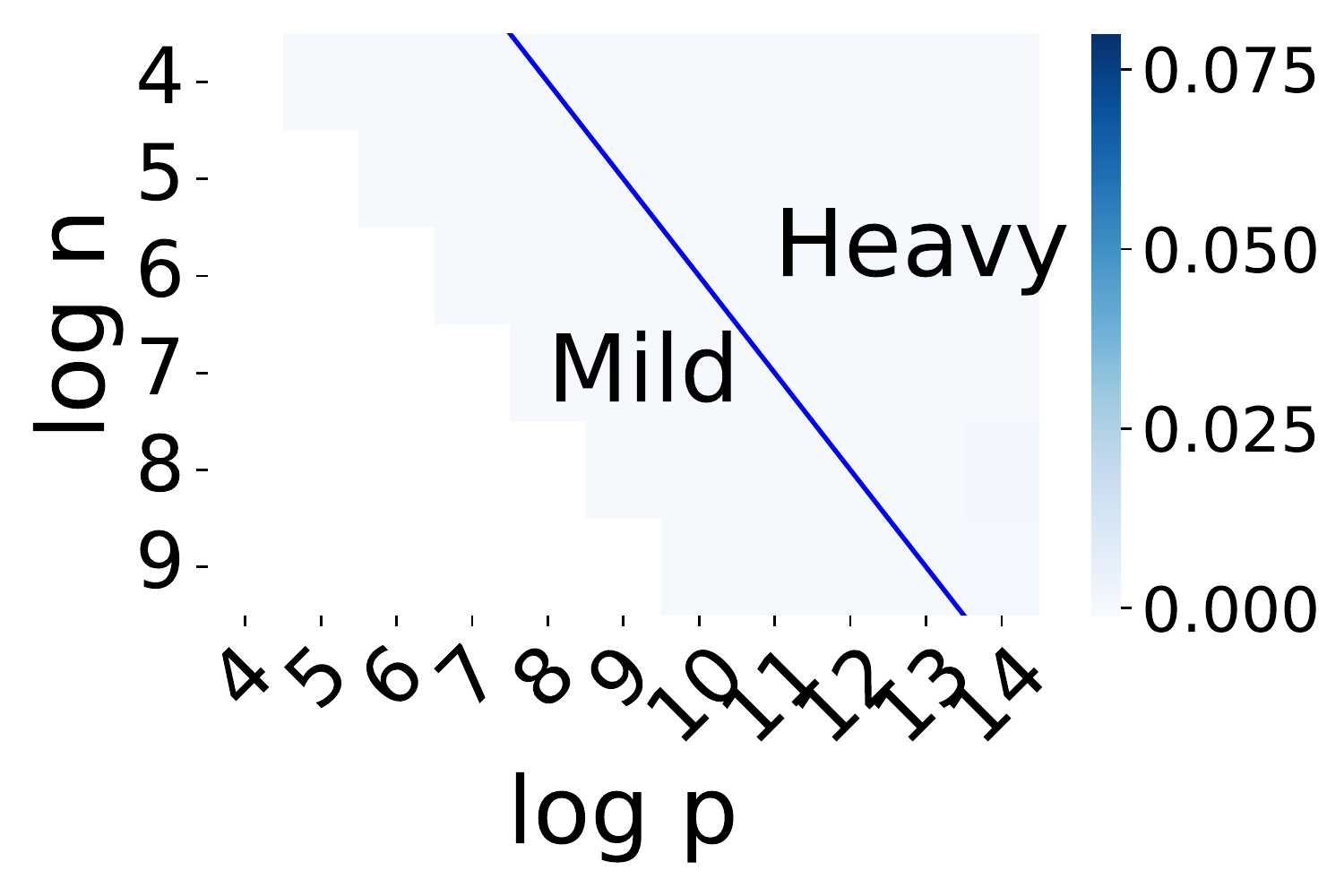}}
\end{minipage}
}
\caption{\textbf{Phase Transition in Noisy Regimes: parameter size $p$ vs sample size $n$.}
We conduct experiments on simulated Gaussian mixture model (GMM) with signal strength $|\mu| = 40$ and noise level $\sigma = 1$. We use SGD to train a linear classifier and plot the excess risk.
The brighter grid means that the classifier has smaller excess error, and therefore does not overfit.
Figure~(a) and Figure~(b) correspond to the classifier at the last epoch. Figure~(c) corresponds to the best classifier through the course of training.
}
\label{simulation}
\end{figure*}

\section{Related Works}

Although the benign overfitting phenomenon has been systematically studied both empirically~\citep{belkin2018understand, belkin2019reconciling, nakkiran2021deep} and theoretically (see later), our work differs in that we aim to understand why benign overfitting fails in the classification setup. In particular, among all theoretical studies, the closest ones to us are~\citet{DBLP:journals/jmlr/ChatterjiL21}, \citet{DBLP:conf/nips/WangMT21} and \citet{DBLP:conf/nips/CaoGB21}, as the analyses are done under the classification setup with Gaussian mixture models. Our work provides new results by moving from heavy overparameterization  $p = \omega(n)$ to mild overparameterization $p = \Theta(n)$. We found that this new condition leads to an analysis that is consistent with our empirical observations, and can explain why benign overfitting fails. In short, compared to all previous studies on benign overfitting in classification tasks, we focus on \emph{identifying the condition that breaks benign overfitting in our ImageNet experiments.} The comparison is summarized in Table~\ref{tab: comparison}.

\textbf{More related works on benign overfitting:}
Researchers have made a lot of efforts to generalize the notation of benign overfitting beyond the novel work on linear regression~\citep{bartlett2020benign}, \emph{e.g.}, 
variants of linear regression~\citep{tsigler2020benign,DBLP:journals/jsait/MuthukumarVSS20,DBLP:conf/colt/ZouWBGK21,xu2022models},
linear classification~\citep{DBLP:journals/corr/abs-2002-01586,belkin2018overfitting} with different distribution assumptions~(instead of Gaussian mixture model), kernel-based estimators~\citep{DBLP:journals/corr/abs-1808-00387,DBLP:conf/colt/LiangRZ20,mei2022generalization}, neural networks~\citep{DBLP:journals/corr/abs-2202-05928}.
Among them, \citet{cao2022benign} also identifies a phase change in benign overfitting when the overfitting can be provably harmful with a different data distribution under noiseless regimes.

\textbf{Gaussian Mixture Model} (GMM) represents the data distribution where the input is drawn from a Gaussian distribution with different centers for each class. 
The model was widely studied in hidden Markov models, anomaly detection, and many other fields~\citep{DBLP:books/sp/HastieFT01,DBLP:conf/icip/XuanZC01,DBLP:reference/bio/Reynolds09,DBLP:conf/iclr/ZongSMCLCC18}.
A closely related work is \citet{jin2009impossibility} which analyzes the lower bound for excess risk of the Gaussian Mixture Model under noiseless regimes.
However, their analysis cannot be directly extended to either the overparameterization or noisy label regimes.
Another closely related work~\citep{DBLP:journals/corr/abs-1905-13742} focuses on the GMM setting with mild overparameterization, but they require a noiseless label regime and rely on a small signal-to-noise ratio on the input, and thus cannot be generalized to our theoretical results.

\textbf{Asymptotic (Mildly Overparameterized) Regimes.}
This paper considers asymptotic regimes where the ratio of parameter dimension and the sample size is upper and lower bounded by absolute constants.
Previous work studied this setup with different focus than benign overfitting (\emph{e.g.}, double descent), both in regression perspective~\citep{DBLP:journals/corr/abs-1903-08560} and classification perspective~\citep{sur2019modern,DBLP:journals/corr/abs-1905-13742,DBLP:journals/corr/abs-1911-05822}. 
We study the mild overparameterization case because it generally happens in realistic machine learning tasks, where the number of parameters exceeds the number of training samples but not extremely.

\newcommand{\x}{{\boldsymbol{x}}}
\newcommand{\y}{{y}}
\newcommand{\Unif}{{\operatorname{Unif}}}
\newcommand{\beps}{{\boldsymbol{\epsilon}}}
\newcommand{\bmu}{{\boldsymbol{\mu}}}
\newcommand{\margin}{{\gamma}}
\newcommand{\ratio}{{r}}
\newcommand{\dimension}{{p}}
\newcommand{\para}{{\boldsymbol{w}}}
\newcommand{\paranoiseless}{{\para_{+}}}
\newcommand{\paranoisy}{{\para_{-}}}

\section{Overfitting under Mild Overparameterization}
\label{sec: main results}

We observed in Figure~\ref{fig1} that training ResNets on CIFAR10 and ImageNet can result in different overfitting behaviors. This discrepancy was not reflected in the previous analysis of benign overfitting in classification tasks.  In this section, we provide a theoretical analysis by studying the Gaussian mixture model under the mild overparameterization setup. Our analysis shows that mild overparameterization along with label noise can break benign overfitting.

\subsection{Overparameterized Gaussian Mixture Model}
\label{sec: Overparameterized Linear Classification}
In this subsection, we study the generalization performance of linear models on the Gaussian Mixture Model (GMM). We assume that linear models are obtained by solving logistic regression with stochastic gradient descent (SGD).
This simplified model may help explain phenomenons in neural networks, since previous works show that neural networks converge to linear models as the width goes to infinity~\citep{DBLP:conf/icml/AroraDHLW19,DBLP:conf/icml/Allen-ZhuLS19}.

We next introduce GMM under two setups of overparameterized linear classification: the noiseless regime and the noisy regime. 

\textbf{Noiseless Regime.}
Let $\y \sim \text{Unif}\{-1, 1\} \in \bR$ denote the ground truth label.
The corresponding feature is generated by 
\begin{equation*}
    \x = \y \bmu + \beps \in \bR^{\dimension},
\end{equation*}
where $\beps$ denotes noise drawn from a subGaussian distribution, and $\bmu \in \bR^p$ denotes the signal vector. 
We denote the dataset $\cD = \{ (\x_i, \y_i) \}_{i \in [n]}$ where $(\x_i, \y_i)$ are generated by the above mechanism.

\textbf{Noisy Regime with contamination rate $\rho$.}
For the noisy regime, we first generate noiseless data $(\x, \y)$ using the noiseless regime, and then consider the data point $(\x, \tilde{y})$ where $\tilde{\y}$ is the contaminated version of $\y$.
Formally, given the contamination rate $\rho$, the contaminated label $\tilde{\y} = -\y$ with probability $\rho$ and $\tilde{\y} = \y$ with probability $1-\rho$.
And the returned dataset is $\tilde{\cD} = \{ (\x_i, \tilde{\y}_i) \}$.

For simplicity, we assume that the data points in the train set are linearly separable in both noiseless and noisy regimes. This assumption holds almost surely under mild overparameterization.
Besides, we make the following assumptions about the data distribution:

\begin{assumption}[Assumptions on the data distribution]
\label{assump: main assump}\ 
\begin{enumerate}
[itemsep=0.5pt,topsep=0pt,parsep=0pt]
\item[A1] The feature noise $\beps$ is drawn from the Gaussian distribution, i.e., $\beps \sim \cN(0, \sigma^2 I)$.
\item[A2] The signal-to-noise ratio satisfies $\frac{\|\bmu \| }{\sigma} \geq c \sqrt{\log n}$ for a given constant $c$.
\item[A3] The ratio $\dimension/n = \ratio > 1$ is a fixed constant.
\end{enumerate}
\end{assumption}

The three assumptions are all crucial but can be made slightly more general (see Section~\ref{sec: proof sketch}).
The first Assumption~[A1] stems from the requirement to derive a lower bound for excess risk under a noisy regime.
The second Assumption~[A2] is widely used in the analysis~\citep{DBLP:journals/jmlr/ChatterjiL21,DBLP:journals/corr/abs-2202-05928}. For a smaller ratio, the model may be unable to learn even under the noiseless regime and return vacuous bounds.
The third Assumption~[A3] differs from the previous analysis, where we consider a mild overparameterization instead of heavy overparameterization (i.e., $\dimension = \omega(n)$).

\subsection{Training Procedure}
We consider the multi-pass SGD training with logistic loss $\ell(\para; \x, \y) = \log(1 + \exp(-\y \x^\top \para))$. 
During each epoch, each data is visited exactly once randomly without replacement.
Formally, at the beginning of each epoch $E$, we uniformly random sample a permutation $P_E:\{1,...,n\} \to \{1,...,n\}$, then at iteration $t$, given the learning rate $\eta$ we have 
\begin{equation*}
        {\para}(t+1) = {\para}(t) - \eta \nabla l({\para}(t); \x(t), \y(t)),
\end{equation*}
where $\x(t) = \x_{P_E(i)},\y(t) = \y_{P_E(i)}$, given $t = nE + i, 1 \le i \le n$.

Under the above procedure, Proposition~\ref{prop:max-margin} shows that the classifier under the GMM regime with multi-pass SGD training will converge in the direction of the max-margin interpolating classifier.

\begin{proposition}[Interpolator of multi-pass SGD under GMM regime, from \cite{DBLP:conf/aistats/NacsonSS19}] %
\label{prop:max-margin}
Under the regime of GMM with logistic loss, denote the iterates in multi-pass SGD by ${\para}(t)$.
Then for any initialization $w(0)$, the iterates ${\para}(t)$ converges to the max-margin solution almost surely, namely,
\begin{equation*}
    \lim_{t\to \infty} \frac{{\para}(t)}{\|{\para}(t) \|} = \frac{\tilde{\para}}{\| \tilde{\para}\|},
\end{equation*}
where $\tilde{\para} = \argmin_{\para \in \bR^d} \| \para \|^2 \ s.t. \ \para^\top \x_i\geq 1, \ \forall \ i \in [n]$ denotes the max-margin solution.
\end{proposition}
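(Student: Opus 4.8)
The plan is to reduce the statement to the implicit-bias theory for SGD on linearly separable data developed in \cite{DBLP:conf/aistats/NacsonSS19}, verifying that the present GMM setup with logistic loss meets its hypotheses, and to sketch the directional-convergence mechanism in a self-contained way. First I would record the two structural facts that drive the argument: (i) the training set is linearly separable almost surely (this is assumed, and holds under $p > n$), so the max-margin program $\tilde{\para} = \argmin \|\para\|^2$ subject to $\para^\top \x_i \ge 1$ has a unique solution with an associated support set; and (ii) the logistic loss $\ell(\para; \x, y) = \log(1 + \exp(-y \x^\top \para))$ is smooth, strictly decreasing in the margin, with an exponential tail, i.e. its derivative behaves like $\exp(-y \x^\top \para)$ for large margins. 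These are exactly the ingredients required by the Soudry--Nacson style analysis, so the bulk of the work is checking applicability rather than inventing new machinery.

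Next I would establish that the empirical loss is driven to zero, forcing $\|\para(t)\| \to \infty$. For a sufficiently small fixed learning rate $\eta$, each epoch acts as a descent step on the empirical loss up to a second-order term of order $\eta^2$ that is controllable because the per-sample gradients are bounded; since the data are separable there is no finite minimizer, so the only way the loss can keep decreasing is for the iterate norm to diverge while every margin $y_i \x_i^\top \para(t)$ grows without bound. This step pins down the \emph{scale} of the iterate but not yet its direction.

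The core step is directional convergence. I would decompose $\para(t) = g(t)\,\tilde{\para} + \rho(t)$ with $g(t) \to \infty$, and argue that the residual $\rho(t)$ stays bounded. The mechanism is that once all margins are large, the gradient magnitude at sample $i$ scales like $\exp(-y_i \x_i^\top \para(t))$, so the update is dominated by the points of smallest margin, which are precisely the support vectors; by the KKT conditions of the max-margin program their weighted directions combine to point along $\tilde{\para}$. Consequently the growing component of $\para(t)$ aligns with $\tilde{\para}$ and $\para(t)/\|\para(t)\| \to \tilde{\para}/\|\tilde{\para}\|$.

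The hard part will be controlling the stochasticity under a \emph{fixed}, non-vanishing step size, since ordinarily SGD needs decaying steps to suppress variance. Here the without-replacement multi-pass schedule is what saves the argument: over one epoch every sample contributes its gradient exactly once, so the aggregate update equals the full-batch gradient plus within-epoch cross-terms; each cross-term carries a factor $\eta$ multiplying products of per-sample gradients, and those gradients decay exponentially in the margin. Thus the effective noise shrinks as fast as the signal, and the fixed-step trajectory tracks the full-batch max-margin direction. Making this tracking quantitative---bounding the accumulated cross-terms and showing $\rho(t)$ remains bounded almost surely---is the delicate technical core, and is exactly what \cite{DBLP:conf/aistats/NacsonSS19} supplies once separability and the exponential-tail property are in hand.
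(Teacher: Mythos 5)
The paper never proves this proposition itself---it is imported verbatim from \citet{DBLP:conf/aistats/NacsonSS19}---and your proposal does exactly what that citation implicitly rests on: verify the hypotheses (almost-sure linear separability, smooth strictly-decreasing loss with exponential tail) and then invoke, with a faithful sketch of its mechanism, the fixed-step-size directional-convergence analysis of that reference. Your sketch is sound, and in one respect more careful than the paper's statement: you make explicit the ``sufficiently small fixed learning rate'' condition that the cited theorem requires but that the proposition's wording omits.
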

For simplicity, we denote $\paranoiseless(t)$ as the parameter at iteration $t$ in the noiseless setting, and $\paranoisy(t)$ as the parameter in the noisy setting. By the proposition above, we know that both $\paranoiseless(\infty)$ and $\paranoisy(\infty)$ are max-margin classifiers on the training data points. During the evaluation process, we also focus on the 0-1 loss, where the population 0-1 loss is $\cL_{01}(\para) = \bP(\y \x^\top \para < 0)$.

\subsection{Main Theorem}

Based on the above assumptions and discussions, we state the following Theorem~\ref{thm:main thm}, indicating the different performances between noiseless and noisy settings.

\begin{theorem}
\label{thm:main thm}
We consider the above GMM regime with Assumption~[A1-A3].
Specifically, denote the noise level by $\rho$ and the mild overparameterization ratio by $r = \dimension/n$.
Then there exists absolute constant $c_1, c_2, c_3, c_4, c_5 > 0$ such that 
the following statements hold with probability\footnote{The probability is taken over the training set and the randomness of the algorithm.} at least $1-c_1/n$:
\begin{enumerate}
[itemsep=0.5pt,topsep=0pt,parsep=0pt]
    \item Under the noiseless setting, the max-margin classifier $\paranoiseless(\infty)$ obtained from SGD has a non-vacuous 0-1 loss, namely,
    \begin{equation*}
         \cL_{01}\left(\paranoiseless{(\infty)}\right) \lesssim n^{-c_2}. 
    \end{equation*}

    \item Under the noisy setting, the max-margin classifier $\paranoisy(\infty)$ has vacuous 0-1 loss with a constant lower bound, namely, the following inequality holds for any training sample size $n$,
    \begin{equation*}
    \cL_{01}(\paranoisy{(\infty)})\geq \min\left\{ \Phi(-2), \frac{\rho}{c_3 \ratio} \exp \left(- \frac{c_3 \ratio}{\rho}\right)\right\}.
    \end{equation*} 
    
    \item Under the noisy setting, if the learning rate satisfies $\eta < \frac{1}{c_5 n \max_i\| \x_i \|^2}$, there exists a time $t$ such that the trained early-stopping classifier $\paranoisy{(t)}$ has non-vacuous 0-1 loss, namely,
    \begin{equation*}
     \inf_{t \leq n} \cL_{01}(\paranoisy{(t)}) \lesssim  n^{-c_4}. 
    \end{equation*}    
    
\end{enumerate}
\end{theorem}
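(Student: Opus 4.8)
The plan is to exploit the fact that, with the prescribed tiny learning rate, the first epoch of SGD stays in a \emph{linearized} regime in which the logistic slope is essentially frozen at its value at the origin, so that the iterate after one pass is, up to a controllable perturbation, proportional to the empirical signal direction $\sum_i \tilde{\y}_i \x_i$. Since the flipped labels contribute only a $\rho$-fraction, this direction still aligns with $\bmu$, and one pass suffices to produce a near-optimal classifier before the interpolation dynamics (which eventually memorize the noisy points, cf.\ statement~2) take over. Concretely, I would take $\paranoisy(0)=0$ and instantiate the infimum $\inf_{t\le n}$ at $t=n$, i.e.\ at the end of the first epoch; because sampling within an epoch is without replacement, the leading term $\tfrac{\eta}{2}\sum_{s<n}\tilde{\y}(s)\x(s)=\tfrac{\eta}{2}\sum_{i=1}^n \tilde{\y}_i\x_i$ is independent of the permutation.

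First I would reduce the $0$-$1$ loss to an alignment computation. For a fresh sample $\x=\y\bmu+\beps$ with $\beps\sim\cN(0,\sigma^2 I)$, a direct calculation gives the closed form $\cL_{01}(\para)=\Phi\big(-\bmu^\top\para/(\sigma\|\para\|)\big)$, so it suffices to lower bound the alignment $\bmu^\top\para/(\sigma\|\para\|)$. Writing $\svec=\sum_i \tilde{\y}_i\x_i=\big(\sum_i \tilde{\y}_i\y_i\big)\bmu+\sum_i \tilde{\y}_i\beps_i$, I would use a Bernstein/Hoeffding bound to get $\sum_i \tilde{\y}_i\y_i\gtrsim (1-2\rho)n$ and Gaussian concentration to bound $\|\sum_i \tilde{\y}_i\beps_i\|\lesssim \sigma\sqrt{n\dimension}$ and $|\bmu^\top\sum_i \tilde{\y}_i\beps_i|\lesssim \sigma\|\bmu\|\sqrt{n}$. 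Substituting $\dimension=\ratio n$, and noting the scalar $\tfrac{\eta}{2}$ cancels in the ratio, yields an alignment of order $(1-2\rho)\|\bmu\|^2/\sigma^2$ divided by $\sqrt{(1-2\rho)^2\|\bmu\|^2/\sigma^2+\ratio}$; under Assumption~[A2] this is $\gtrsim \sqrt{\log n}$ (the hidden constant depending on $\rho,\ratio$), so $\cL_{01}\lesssim \Phi(-c\sqrt{\log n})\lesssim n^{-c_4}$.

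The hard part is showing that the true SGD iterate $\paranoisy(n)$ differs from the idealized $\tfrac{\eta}{2}\svec$ by a perturbation that is negligible in this alignment estimate. I would control the deviation $\Delta(t)=\paranoisy(t)-\tfrac{\eta}{2}\sum_{s<t}\tilde{\y}(s)\x(s)$ by a discrete Grönwall argument: the per-step error is $\eta\,\tilde{\y}(t)\x(t)\big(\tfrac{1}{1+\exp(\tilde{\y}(t)\x(t)^\top\paranoisy(t))}-\tfrac12\big)$, which the $\tfrac14$-Lipschitzness of the sigmoid bounds by $\tfrac{\eta}{4}\|\x(t)\|^2\|\paranoisy(t)\|$. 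Using the step-size hypothesis $\eta<1/(c_5 n\max_i\|\x_i\|^2)$, the per-step amplification factor $1+\tfrac14\eta\max_i\|\x_i\|^2$ is $1+O(1/(c_5 n))$, so over the $n$ steps of the epoch a Grönwall bound keeps $\|\Delta(n)\|$ within an $O(1/c_5)$ fraction of $\max_{t\le n}\|\tfrac{\eta}{2}\sum_{s<t}\tilde{\y}(s)\x(s)\|$, which is comparable to $\|\tfrac{\eta}{2}\svec\|$; taking $c_5$ large makes this fraction arbitrarily small. I would then propagate this $\ell_2$-perturbation bound through $\cL_{01}(\para)=\Phi(-\bmu^\top\para/(\sigma\|\para\|))$, checking that a small relative perturbation of $\para$ changes the alignment by at most a constant factor and hence preserves the $n^{-c_4}$ rate. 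The two places demanding care are the \emph{self-consistent} (rather than a priori) bound on $\|\paranoisy(t)\|$ fed into the Grönwall step, and keeping all concentration events inside the global $1-c_1/n$ probability budget shared with statements~1 and~2.
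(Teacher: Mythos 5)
Your proposal addresses only Statement~3 of the theorem, and for that statement your route is essentially the paper's own: the paper also initializes at zero, analyzes one pass of SGD, introduces the surrogate classifier $\tilde{\para}_n=\frac{\eta}{2}\sum_t \x_t\tilde{\y}_t$, shows the true iterate stays close to it under the step-size condition $\eta<1/(c_5 n\max_i\|\x_i\|^2)$ (via $|\exp(u)-1|\le 2|u|$ for $|u|\le\frac12$ and a recursive unrolling that plays exactly the role of your discrete Gr\"onwall step), lower-bounds the alignment $\bmu^\top\para_n/(\sigma\|\para_n\|)$, and converts alignment into a 0-1 loss bound via a (sub)Gaussian tail. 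Your outline is sound; in fact your centering $\sum_i\tilde{\y}_i\y_i\gtrsim(1-2\rho)n$ is cleaner than the paper's invocation of Claim~\ref{claim: number of noisy data} at the analogous point, and using the exact Gaussian CDF $\cL_{01}(\para)=\Phi\bigl(-\bmu^\top\para/(\sigma\|\para\|)\bigr)$ is legitimate under Assumption~[A1]. The only technical caveats are ones you already flag: the deviation recursion must be run with $\max_{s\le t}\|\paranoisy(s)\|$ rather than $\|\paranoisy(t)\|$, and your bound $|\bmu^\top\sum_i\tilde{\y}_i\beps_i|\lesssim\sigma\|\bmu\|\sqrt{n}$ needs an extra $\sqrt{\log n}$ factor to hold with probability $1-O(1/n)$, which is harmless under Assumption~[A2].

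The genuine gap is coverage: the theorem makes three claims and you prove only the third. Statements~1 and~2 concern $\paranoiseless(\infty)$ and $\paranoisy(\infty)$, the limits of multi-pass SGD, which by Proposition~\ref{prop:max-margin} are max-margin interpolators; your one-pass linearized analysis says nothing about these limits, and your parenthetical ``cf.\ statement~2'' presupposes, rather than proves, that interpolation memorizes the noisy points. The paper proves Statement~1 by a margin-comparison argument: since $\paranoiseless(\infty)$ is max-margin, its margin is at least that of the reference direction $\bmu/\|\bmu\|$, which concentration shows is $\gtrsim\|\bmu\|-\sigma\sqrt{\log n}\gtrsim\|\bmu\|$; combining with the upper bound $\min_i\y_i\x_i^\top\para\le\bmu^\top\para+\bigl\|\frac1n\sum_i\y_i\beps_i\bigr\|\lesssim\bmu^\top\para+\sigma\sqrt{p/n}$ yields $\bmu^\top\para\gtrsim\|\bmu\|$ and then a subGaussian tail gives $n^{-c_2}$. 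Statement~2 uses a different idea entirely: the empirical center $\bar{\x}_\cK=\bmu+\frac{1}{|\cK|}\sum_{i\in\cK}\beps_i$ of the wrongly-labeled cluster must satisfy $\bar{\x}_\cK^\top\paranoisy(\infty)<0$ because the max-margin solution interpolates the flipped labels, forcing $|\bmu^\top\paranoisy(\infty)|\le\|\bar{\x}_\cK-\bmu\|\lesssim\sigma\sqrt{p/(\rho n)}$ since $|\cK|\gtrsim\rho n$; the Gaussian CDF then yields the constant lower bound $\min\bigl\{\Phi(-2),\frac{\rho}{c_3\ratio}\exp(-c_3\ratio/\rho)\bigr\}$. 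Neither argument, nor any substitute for them, appears in your proposal, so as a proof of the full theorem it is incomplete.
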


Intuitively, Theorem~\ref{thm:main thm} illustrates that although SGD leads to benign overfitting under noiseless regimes (statement 1), it provably overfits when the labels are noisy (statement 2). In particular, it incurs $\Omega(1)$ error on noiseless data and hence would incur  $\Omega(1) + \rho$ error on noisy labeled data, since label noise in the test set is independent of the algorithm.
Furthermore, statement~3 shows that overfitting is avoidable through early stopping.

One may doubt the fast convergence rate\footnote{Here the convergence rate is with respect to the sample size $n$, as stated in Theorem~\ref{thm:main thm}.} in Statement~One and Statement~Three, which seems too good to be true. 
The strange phenomenon happens because the high probability guarantee is in order $O(1/n)$ with respect to the randomness in the sampling of the training set and the algorithm, and therefore, the expected 0-1 loss is approximately $O(1/n)$ after union bound. 
We refer to \citet{DBLP:conf/nips/CaoGB21,DBLP:conf/icassp/WangT21} for similar types of bounds.

\begin{remark}[Comparison against  heavy overparameterization]
Previous work usually analyze the GMM model under the \emph{heavy overparameterization} regime, \emph{e.g.}, {$\dimension = \Omega(n^2)$~\citep{DBLP:conf/nips/CaoGB21,DBLP:journals/jmlr/ChatterjiL21} or $\dimension = \Omega(n \log (n))$}~\citep{DBLP:conf/icassp/WangT21}.
In comparison, our paper focuses on the mild overparameterization regime, where $\dimension = \Theta(n)$.
We note that this leads to a phase change that the overfitting model under noisy settings now provably overfits.
\end{remark}

\section{Experiments}\label{sec:exp}

\begin{figure*}[t]  
\centering

\label{fig:noisycifar0.1}
\subfigure[$\rho = 0.1$]{
\begin{minipage}{0.3\linewidth}
\centerline{\includegraphics[width=1\textwidth]{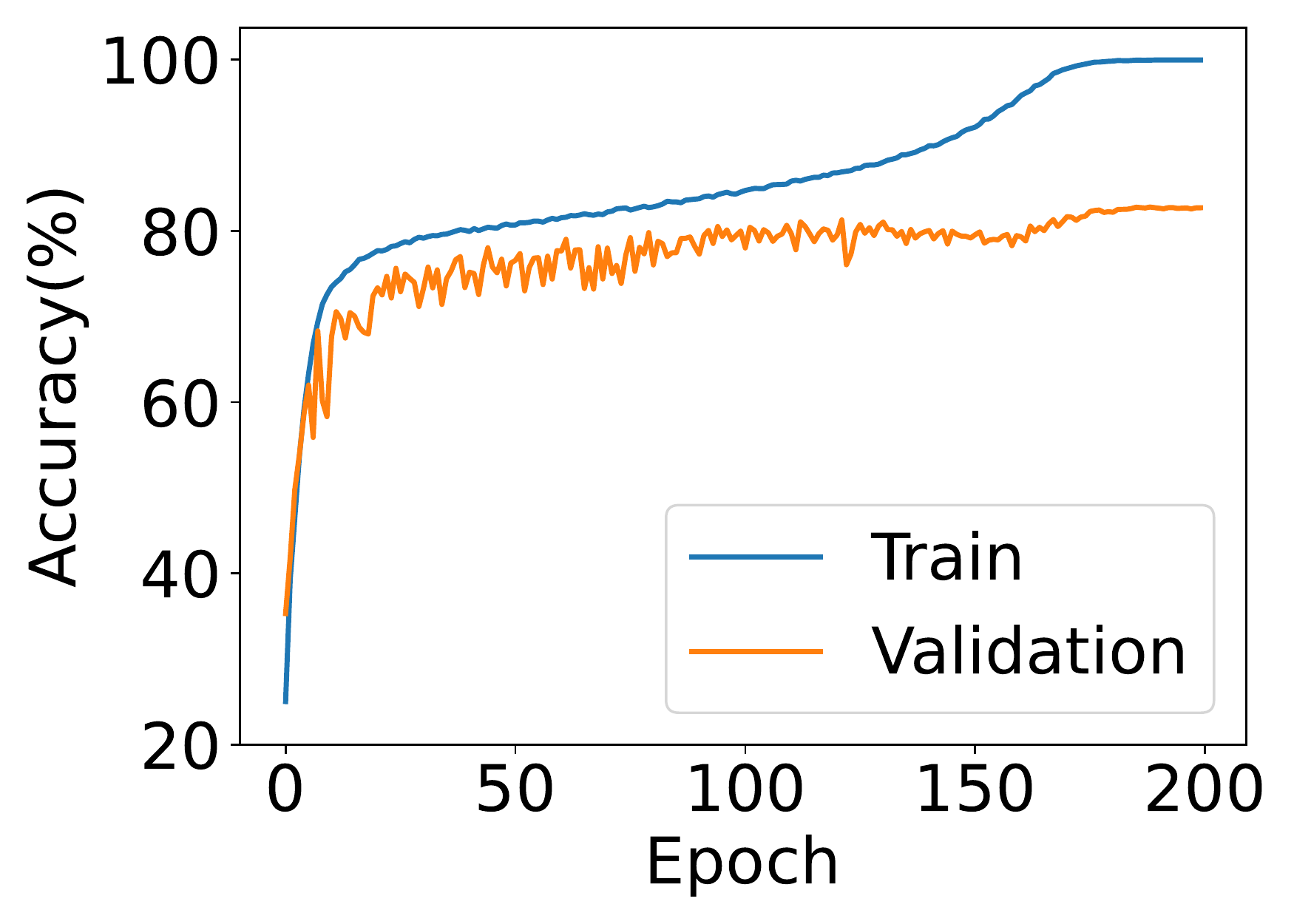}}
\end{minipage}
}
\label{fig:noisycifar0.2}
\subfigure[$\rho = 0.2$]{
\begin{minipage}{0.3\linewidth}
\centerline{\includegraphics[width=1\textwidth]{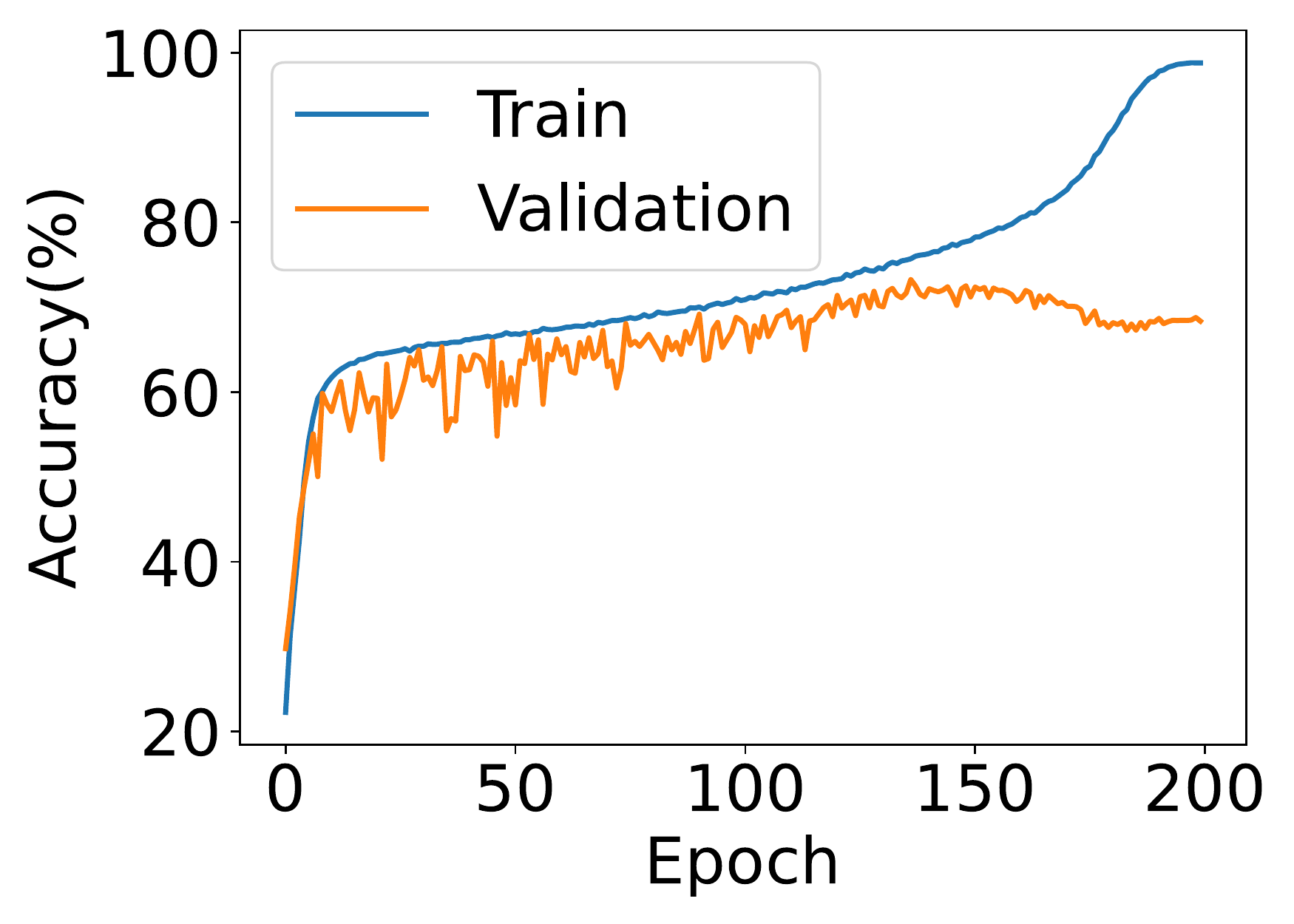}}
\end{minipage}
}
\label{fig:noisycifar0.3}
\subfigure[$\rho = 0.3$]{
\begin{minipage}{0.3\linewidth}
\centerline{\includegraphics[width=1\textwidth]{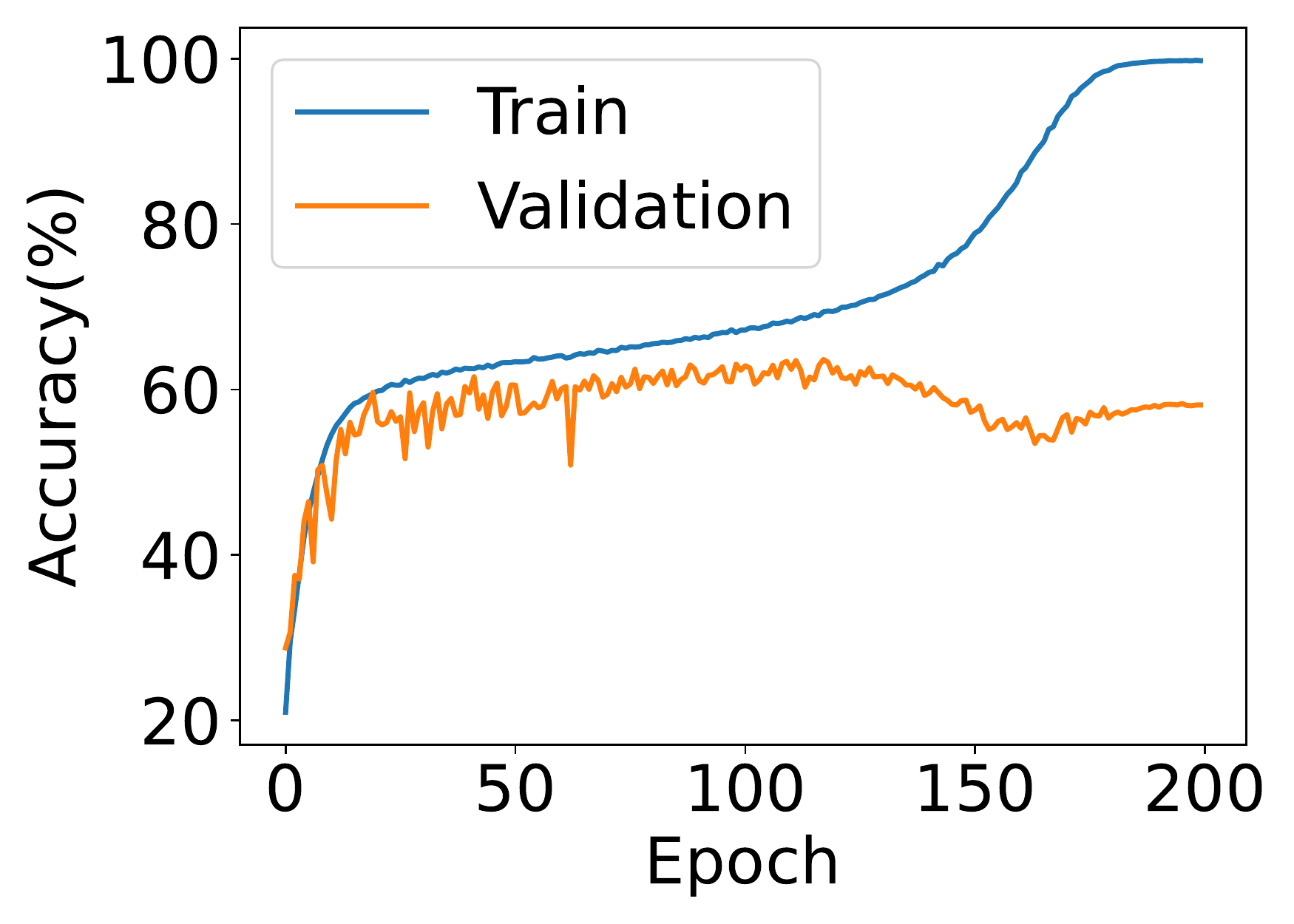}}
\end{minipage}
}
\label{fig:noisycifar0.4}
\subfigure[$\rho = 0.4$]{
\begin{minipage}{0.3\linewidth}
\centerline{\includegraphics[width=1\textwidth]{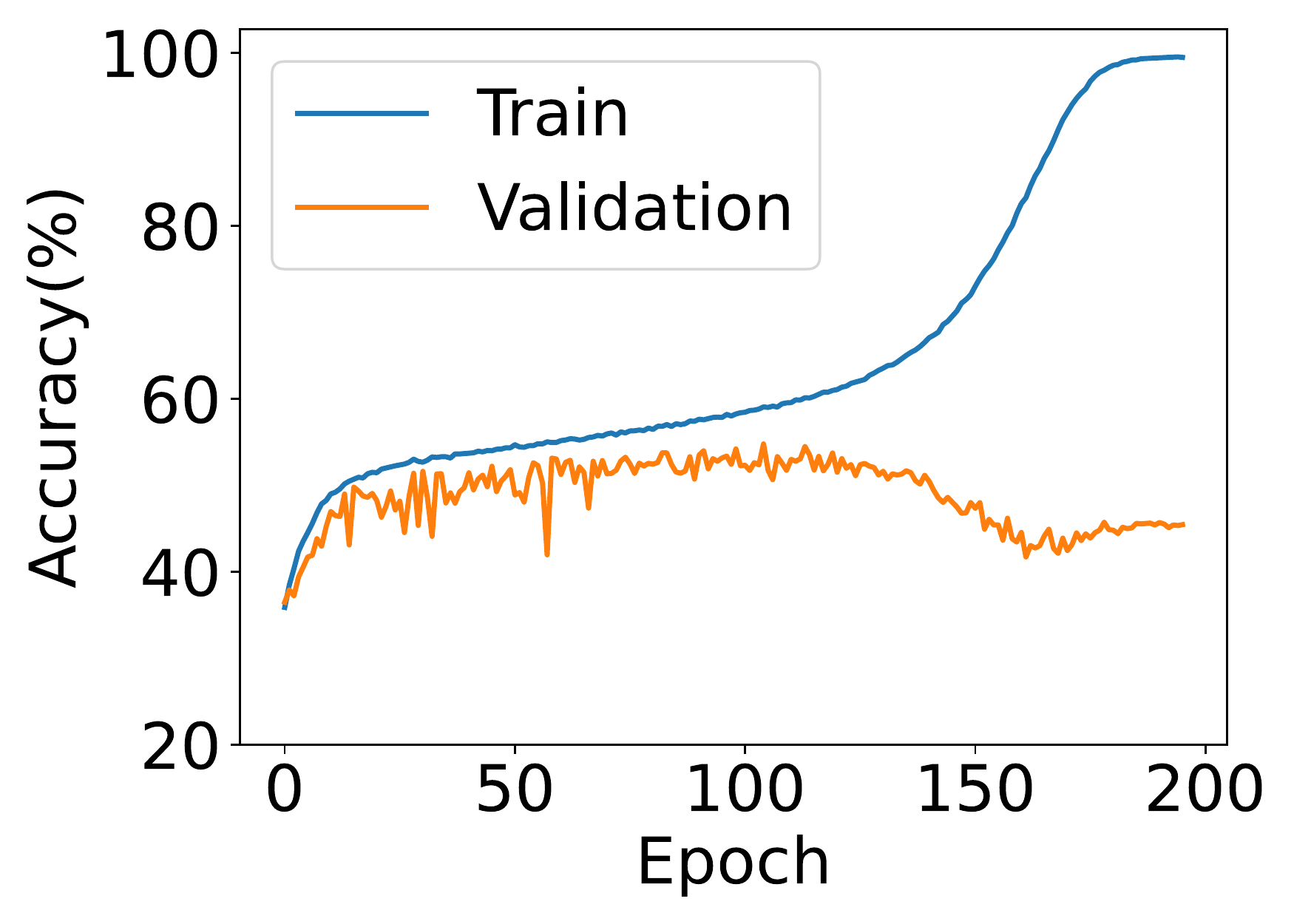}}
\end{minipage}
}
\label{fig:noisycifar0.5}
\subfigure[$\rho = 0.5$]{
\begin{minipage}{0.3\linewidth}
\centerline{\includegraphics[width=1\textwidth]{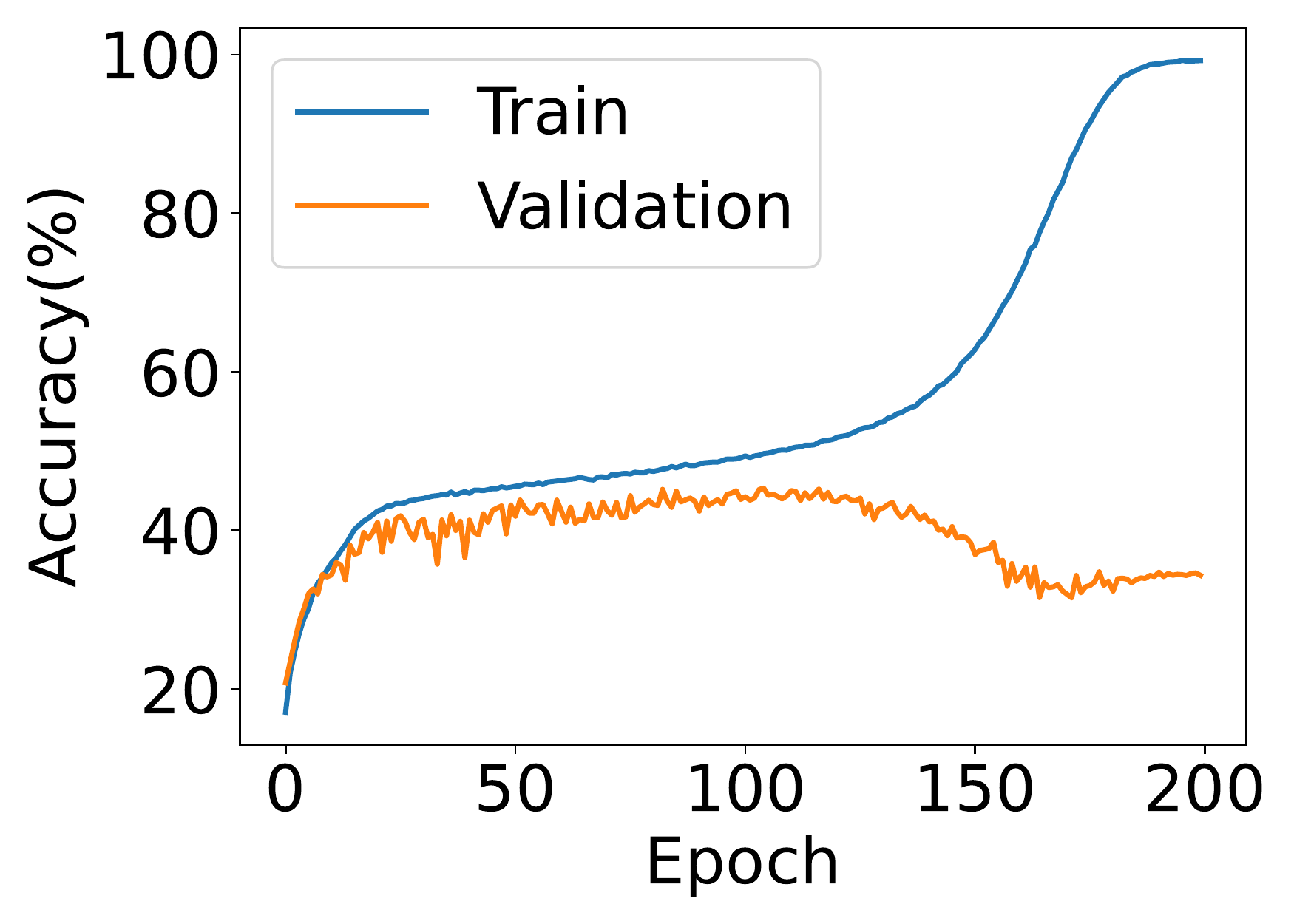}}
\end{minipage}
}
\label{fig:noisycifar0.6}
\subfigure[$\rho = 0.6$]{
\begin{minipage}{0.3\linewidth}
\centerline{\includegraphics[width=1\textwidth]{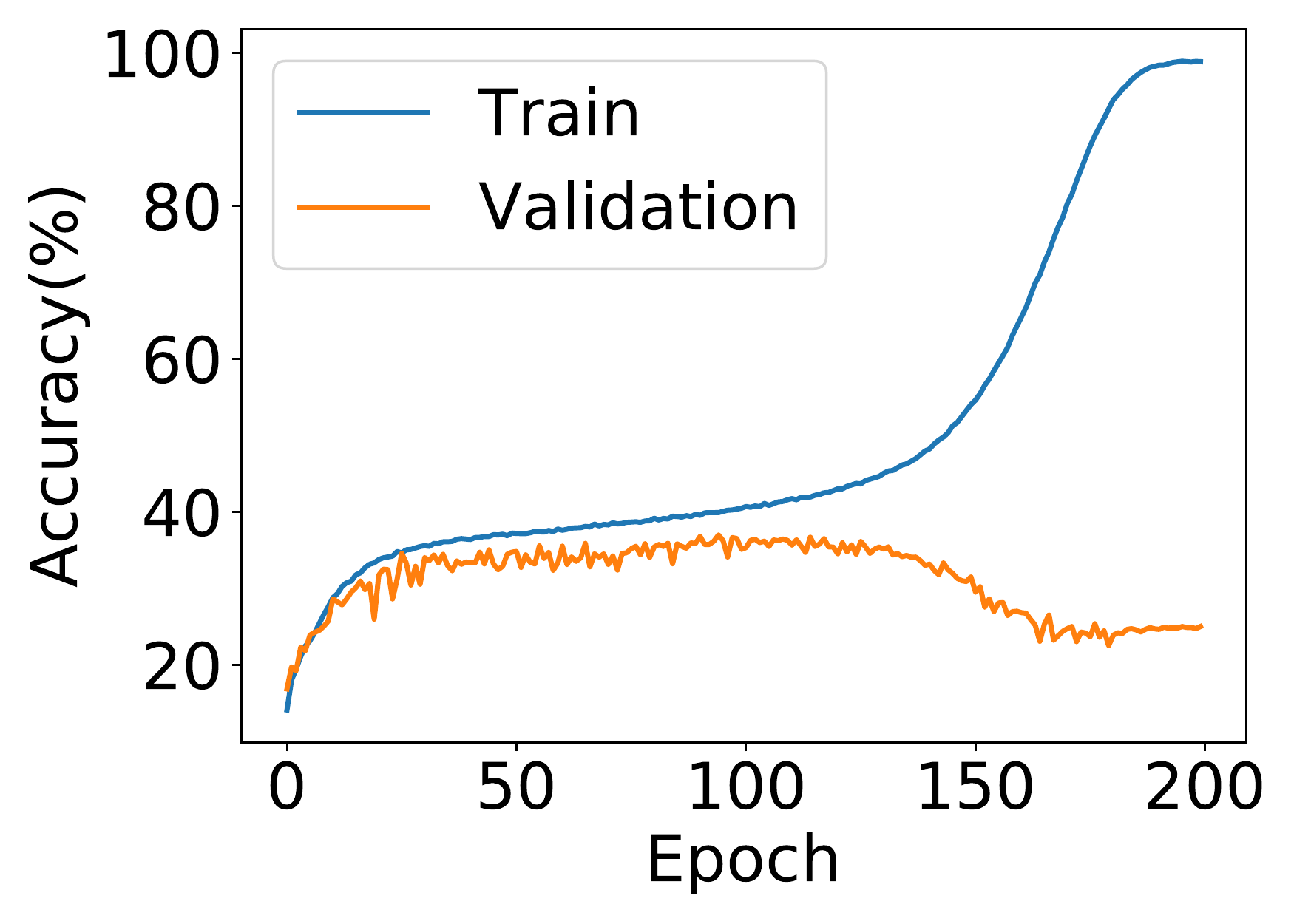}}
\end{minipage}
}

\caption{\textbf{Noisy CIFAR10 under mild overparameterization.} In each experiment, the validation accuracy first increases and then dramatically decreases. This confirms points 2 and 3 in Theorem~\ref{thm:main thm} that model would overfit under noisy label regimes. }
\label{fig: noisy CIFAR10}
\end{figure*}

Theorem~\ref{thm:main thm} claims the crucial role of label noise and overparameterization level in benign overfitting.
This section aims at providing experimental evidence to validate the statements. However, as the dataset size is upper bounded, the criterion \emph{generalization error converges to zero} is hard to verify in experiments. Instead, we assume that (1) the gap of verification accuracy between the best-epoch model and the last-epoch model can work as an efficient proxy for the gap between the Bayesian optimal accuracy and the validation accuracy of the last-epoch model, and (2) training the model for long-enough epochs can work as an honest proxy for training for infinite epochs.
Therefore, as stated in the introduction part, we adopt a different notion of benign overfitting, namely, \emph{validation performance does not drop while the model fits more training data points.}
Specifically, we focus on the noisy CIFAR10 dataset, where we randomly flip a portion of the labels. 
From the label noise perspective, Section~\ref{sec: noisy CIFAR10} demonstrates that ResNets on noisy CIFAR10 would dramatically overfit non-benignly (\emph{c.f.}, ResNets on clean CIFAR10 overfit benignly).
From the overparameterization level perspective, Section~\ref{sec:expoverparameterization} demonstrates that a heavier overparameterization level helps increase the last-iterate performance and hence leads to benign overfitting.

\subsection{Label Noise Matters: Overfitting in Noisy CIFAR10}
\label{sec: noisy CIFAR10}

In Section~\ref{sec: Overparameterized Linear Classification}, we prove that under noisy label regimes with mild overparameterization, early-stopping classifiers and interpolators perform differently.
This section aims to verify if the phenomenon empirically happens in the real-world dataset.
Specifically, we generate a noisy CIFAR10 dataset where each label is randomly flipped with probability $\rho$, 
We train the ResNets again and the results show that the test error first increases and then dramatically decreases, demonstrating that the interpolator performs worse than the models in the middle of the training.

\paragraph{Setup.}
The base dataset is CIFAR10, where each sample is randomly flipped with probability $\rho \in \{0.1, 0.2, 0.3, 0.4,0.5,0.6\}$.
We use ResNet18 and SGD to train the model with cosine learning rate decay.
For each model, we train for 200 epochs, test the validation accuracy and plot the training accuracy and validation accuracy in Figure~\ref{fig: noisy CIFAR10}. More details can be found in the code.

Figure~\ref{fig: noisy CIFAR10} illustrates a similar phenomenon to the results in linear models under mild overparameterization analysis~(see Section~\ref{sec: Overparameterized Linear Classification}).
Precisely, the interpolator achieves suboptimal accuracy under the mild overparameterization regimes, but we can still find a better classifier through early stopping.
In Figure~\ref{fig: noisy CIFAR10}, such a phenomenon is manifested as the validation accuracy curve increases and decreases. We also notice that the degree becomes sharper as the noise level increases.

\subsection{Achieve
Benign Overfitting by Increasing Overparameterization Level}
\label{sec:expoverparameterization}

\begin{figure}[t]
\centering
\hspace{-0.4 in}
\begin{minipage}{0.7 \linewidth}
    \includegraphics[width = 2.9in, height = 2.0in]{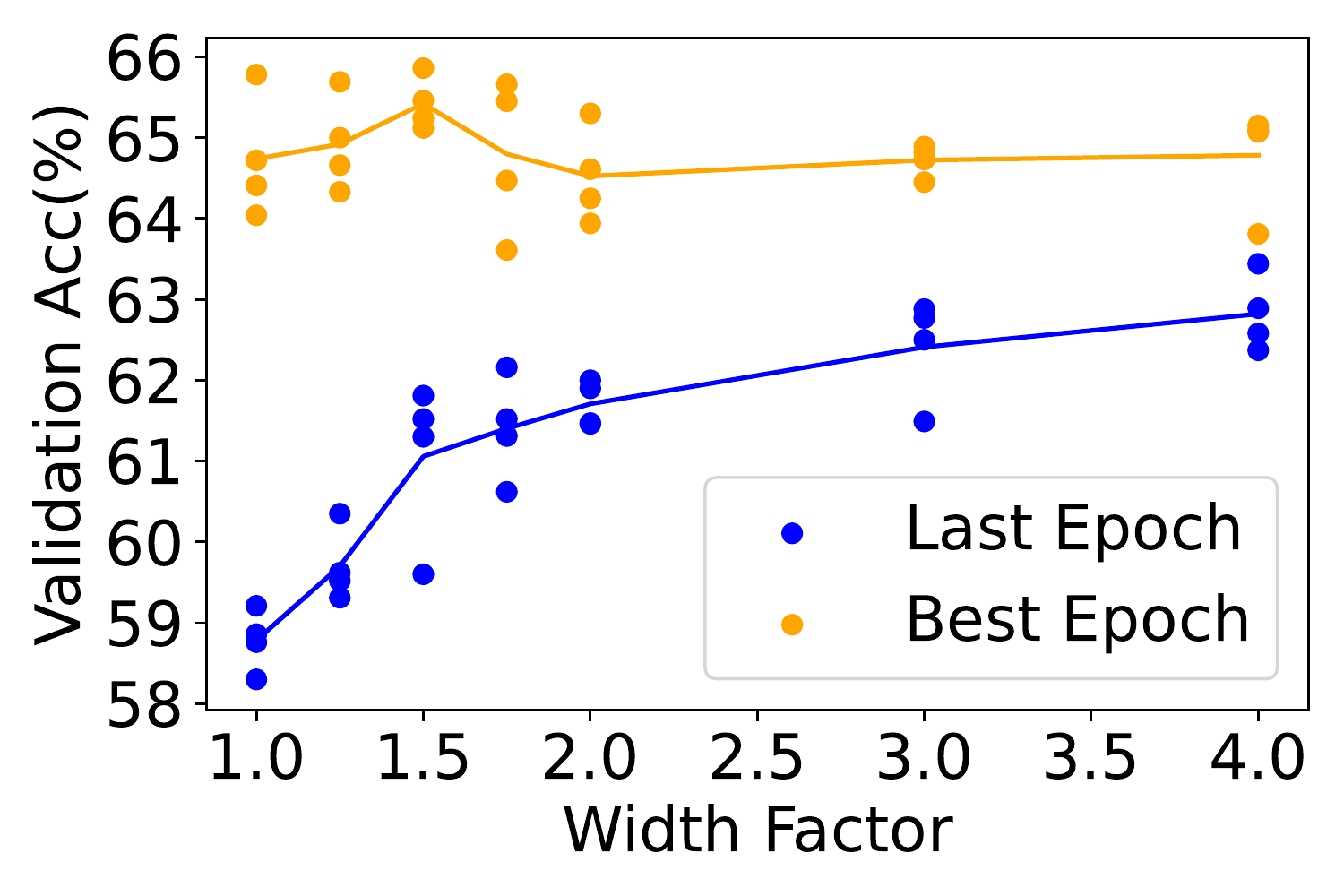}
\end{minipage}\hspace{-0.5 in}
\begin{minipage}{0.2 \linewidth}
    \includegraphics[width = 1.3in]{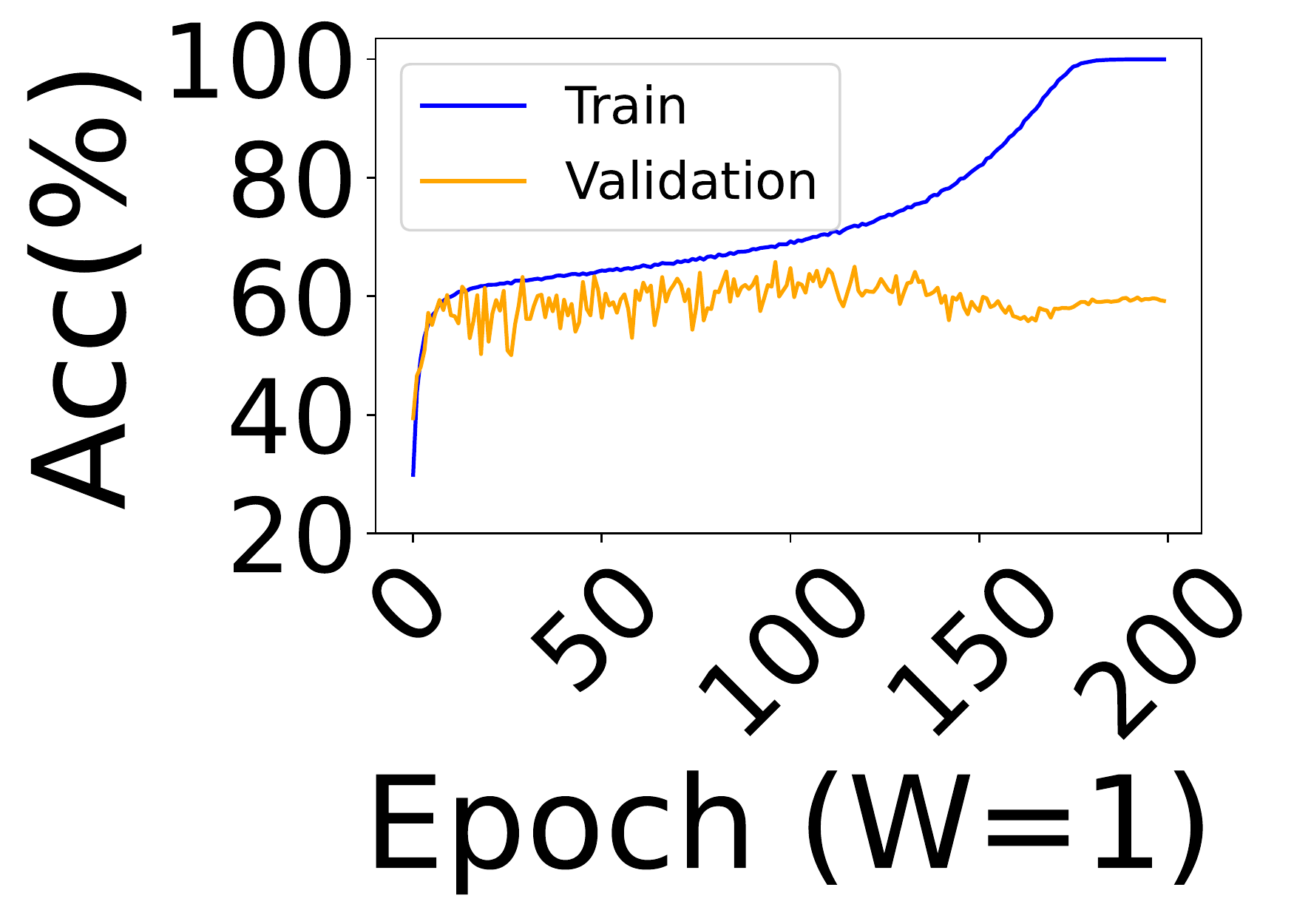}
    \includegraphics[width = 1.3in]{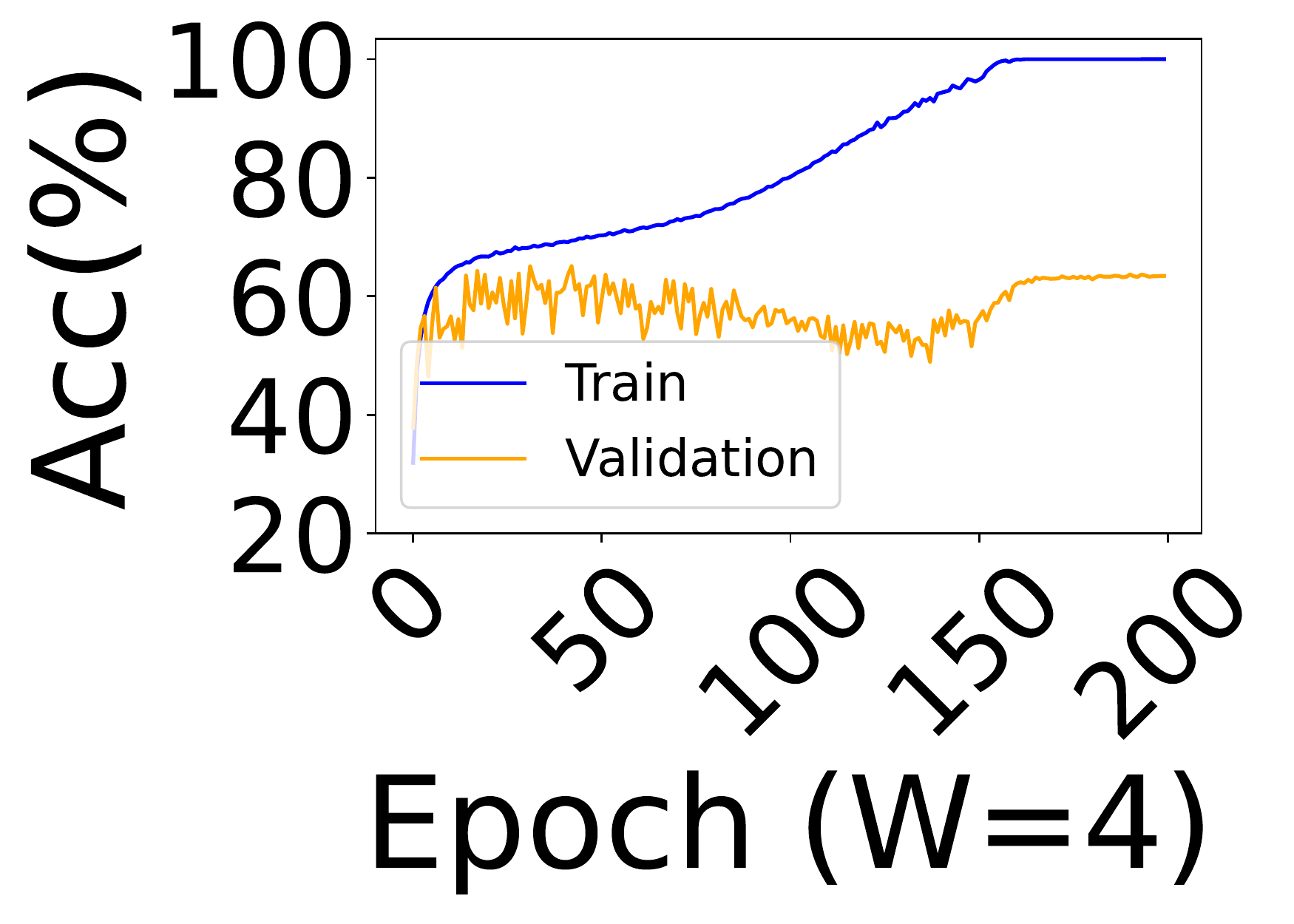} \vspace{-1em}
\end{minipage}

\caption{\textbf{Ablation Study on Overparameterization Level.} We fix the label noise ratio $\rho = 0.2$ and conduct an ablation study varying the width factor $W$ of WideResNet16-$W$. 
The left plot validates that generalization performances in the best iterate keep constant while the performances in the final iterate keep increasing, as the width factor increases.
The training trajectories for width factors 1 and 4 are shown in the right two plots.
}
\label{fig:width}
\end{figure}

Theorem~\ref{thm:main thm} implies that although noisy label leads to bad overfitting, increasing overparameterization can turn overfitting benign.
Specifically, when the overparameterization ratio $r = p/n$ increases, 
\begin{itemize}
    \item [(1)] The last-iterate classifier (max-margin classifier) may generalize better (see Argument~2 in Theorem~\ref{thm:main thm}), since the lower bound $\frac{\rho}{c_3 \ratio} \exp (- \frac{c_3 \ratio}{\rho})$ decreases with $r$.
    \item [(2)] The early-stopping classifiers have similar generalization performances (see Argument~3 in Theorem~\ref{thm:main thm}), since the upper bound is in order $n^{-c_4}$ which converges to zero as the sample size goes to infinity.
\end{itemize}

Figure~\ref{simulation} demonstrates the two observations on the synthetic dataset (linear models).
We next show similar observations on noisy CIFAR-10. 
Specifically, We conduct an ablation study on CIFAR10 dataset with a fixed label noise ratio $\rho = 0.2$. We increase the width of ResNets and effectively turn ResNet into Wide-ResNets. By varying the width factor $W$ of WideResNet16-$W$, we show that these observations remain in the deep learning regime.

\paragraph{Setup.} The base dataset is CIFAR10, where each sample is randomly flipped with probability $\rho=0.2$.
We use the WideResNet16-$W$ for $W \in \{1,1.25,1.5,1.75,2,3,4\}$ and use SGD to train the model with cosine learning rate decay.
For each model, we train for 200 epochs, test the validation accuracy and compare the best validation performance and the validation performance upon convergence in Figure~\ref{fig:width}.

Figure~\ref{fig:width} verifies our theoretical observation where the early stopped classifier enjoys generalization performance nearly independent of overparameterization level, again stressing the necessity to study the bias of neural networks beyond the interpolation regime. Besides, we observe that the validation performance of the converged classifier increases with respect to the overparameterization level. Moreover, we notice that the increment is more significant for smaller overparameterization level, which is qualitatively similar to  $\frac{\rho}{c_3 \ratio} \exp (- \frac{c_3 \ratio}{\rho})$.

We further observe the \emph{epoch-wise double descent} phenomenon in Figure~\ref{fig:width}, where the accuracy first decreases and then increases while the model reaches interpolation~\citep{DBLP:conf/iclr/NakkiranKBYBS20,DBLP:conf/iclr/HeckelY21}.
\citet{DBLP:journals/corr/abs-2108-12006} argues that the phenomenon is also closely related to label noise and overparameterization level. Their work differs from ours in that they focus on studying the training dynamics and removing the double-descent phenomenon.
We do not observe a similar phenomenon in the ImageNet experiment with mild overparameterization in 500 epochs.
We leave the discussion of epoch-wise double descent under mild overparameterization as future work.

\section{Challenges in Proving Theorem~\ref{thm:main thm}}
\label{sec: proof sketch}

This section provides more details about the three statements in Theorem~\ref{thm:main thm} with milder assumptions than those in \rone{Assumption~\ref{assump: main assump}}. We also explain why existing analysis cannot be readily applied.

\begin{assumption} The following assumptions are more general,
\label{assump: weaker assump}
\begin{enumerate}[itemsep=0.5pt,topsep=0pt,parsep=0pt]
    \item[A4] The noise $\beps$ in $\x$ is generated from a $\sigma$-subGaussian distribution.
    \item[A5] The signal-to-noise \rone{ratio} satisfies $\frac{\|\bmu \| }{\sigma} \geq  c_6 \left(\frac{\dimension}{n}\right)^{\frac{1}{2}}$.
    \item[A6] The signal-to-noise \rone{ratio} satisfies $\frac{\|\bmu \| }{\sigma} = \omega\left( \left(\frac{\dimension}{n}\right)^{\frac{1}{4}}  \right)$.
\end{enumerate}
\end{assumption}
\vspace{-1.5pt}
We compare the assumptions in Assumption~\ref{assump: main assump} and Assumption~\ref{assump: weaker assump}.
Assumption~[A4] is a relaxation of Assumption~[A1], and Assumption~[A5, A6] can be obtained by Assumption~[A2, A3].
Therefore, we conclude that Assumption~\ref{assump: weaker assump} is weaker than Assumption~\ref{assump: main assump}.
We next introduce the generalized version of the three arguments in Theorem~\ref{thm:main thm}, including Theorem~\ref{thm: interpolator, noiseless}, Theorem~\ref{thm: interpolator, noisy} and Theorem~\ref{thm: earlystop, noisy}.

\begin{restatable}[Statement~One]{thm}{stateone}
\label{thm: interpolator, noiseless}
Under the noiseless setting, for a fixed $\delta_1 \geq \max\{ \frac{c_7}{n}, \exp(-c_8p)\}$, under Assumption~[A2, A4, A5], 
there exists constant $c_2>0$ such that the following statement holds with probability at least $1-\delta_1$,
\begin{equation*}
     \cL_{01}(\paranoiseless{(\infty)}) \lesssim n^{-c_2}. 
    \end{equation*}
\end{restatable}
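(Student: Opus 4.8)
The plan is to strip away the optimization dynamics, reduce the $0$--$1$ loss to a one-dimensional subGaussian tail, and then control just two scalars attached to the max-margin solution: its norm and its inner product with the signal. By Proposition~\ref{prop:max-margin}, $\paranoiseless(\infty)$ points in the direction of the minimum-norm interpolator $\tilde{\para}$, and since $\cL_{01}(\para)=\bP(\y\,\x^\top\para<0)$ is invariant under positive rescaling, it suffices to bound $\cL_{01}(\tilde{\para})$. For a fresh test point, $\y\,\x^\top\tilde{\para}=\bmu^\top\tilde{\para}+\y\,\beps^\top\tilde{\para}$; conditionally on the training set $\tilde{\para}$ is fixed while $\beps$ is $\sigma$-subGaussian (Assumption~[A4]), so $\beps^\top\tilde{\para}$ is $\sigma\|\tilde{\para}\|$-subGaussian and
\begin{equation*}
\cL_{01}(\tilde{\para})\;\le\;2\exp\!\left(-\frac{(\bmu^\top\tilde{\para})^2}{2\sigma^2\|\tilde{\para}\|^2}\right).
\end{equation*}
The theorem therefore reduces to showing that the normalized margin satisfies $\bmu^\top\tilde{\para}/(\sigma\|\tilde{\para}\|)\gtrsim\sqrt{\log n}$ with probability at least $1-\delta_1$ over the draw of the training data.

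Next I would upper bound $\|\tilde{\para}\|$. Writing $X\in\bR^{n\times\dimension}$ for the matrix with rows $\y_i\x_i^\top$, the equality interpolator $\para_0=X^\top(XX^\top)^{-1}\one$ is feasible for the margin constraints, so $\|\tilde{\para}\|^2\le\one^\top(XX^\top)^{-1}\one$. Expanding, $XX^\top=\|\bmu\|^2\one\one^\top+(\text{signal--noise cross terms})+G$, where $G=[\,\y_i\y_j\,\beps_i^\top\beps_j\,]_{ij}$ is a signed noise Gram matrix with the same spectrum as $[\beps_i^\top\beps_j]_{ij}$. The key estimate is that $G$ is well conditioned, $\lambda_{\min}(G)\gtrsim\sigma^2\dimension$, which follows from a sharp lower bound on the least singular value of a tall subGaussian matrix and uses $\ratio=\dimension/n>1$ crucially. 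Given this, a Sherman--Morrison computation shows the rank-one signal term dominates and yields $\|\tilde{\para}\|^2\lesssim 1/(\sigma^2\theta)$, where $\theta:=\|\bmu\|^2/\sigma^2$.

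Then I would lower bound the alignment $\bmu^\top\tilde{\para}$, avoiding any ``all points are support vectors'' hypothesis by merely summing the feasibility constraints $\y_i\tilde{\para}^\top\x_i\ge 1$ over $i\in[n]$:
\begin{equation*}
n\,\bmu^\top\tilde{\para}+\tilde{\para}^\top z\;\ge\;n,\qquad z:=\sum_{i=1}^n \y_i\beps_i .
\end{equation*}
Concentration gives $\|z\|\lesssim\sigma\sqrt{n\dimension}$, hence $\tilde{\para}^\top z\le\|\tilde{\para}\|\,\|z\|\lesssim\sqrt{n\dimension/\theta}=n\sqrt{\ratio/\theta}$. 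Under Assumption~[A5] with the constant $c_6$ taken large enough, $\theta\gtrsim c_6^2\,\ratio$ forces $\tilde{\para}^\top z\le n/2$, whence $\bmu^\top\tilde{\para}\gtrsim 1$. Combining with the norm bound, $\bmu^\top\tilde{\para}/(\sigma\|\tilde{\para}\|)\gtrsim\sqrt{\theta}\gtrsim\sqrt{\log n}$ by Assumption~[A2], and plugging this into the tail bound gives $\cL_{01}(\tilde{\para})\lesssim n^{-c_2}$.

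The main obstacle is the spectral control of $G$ under \emph{mild} overparameterization. When $\dimension=\omega(n)$ (the heavy regime of prior work) $G$ is essentially diagonally dominant and invertibility is immediate; when $\ratio=\Theta(1)$ the off-diagonal inner products $\beps_i^\top\beps_j$ accumulate, so one must invoke a least-singular-value estimate for an $n\times\dimension$ subGaussian matrix, and it is exactly here that $\ratio>1$ is indispensable. Assembling the failure probability of the Gram conditioning ($\exp(-c_8\dimension)$) with those of the scalar concentrations of $z$ and the cross terms ($c_7/n$) yields the stated high-probability guarantee for any $\delta_1\ge\max\{c_7/n,\exp(-c_8\dimension)\}$.
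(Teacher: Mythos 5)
Your overall architecture shares its endgame with the paper: both proofs reduce $\cL_{01}$ to a one-dimensional subGaussian tail in the direction of the classifier, so everything hinges on showing the normalized alignment $\bmu^\top\tilde{\para}/(\sigma\|\tilde{\para}\|)\gtrsim\sqrt{\log n}$. Your alignment step is also essentially the paper's: summing the feasibility constraints $\y_i\tilde{\para}^\top\x_i\geq 1$ and controlling $\|\sum_i\y_i\beps_i\|\lesssim\sigma\sqrt{n\dimension}$ is the same "minimum is at most the average" trick the paper applies to the margin, using the identical concentration claim. Where you genuinely diverge is the norm bound on $\tilde{\para}$, and that is where your proposal has a real gap under the hypotheses this theorem actually carries.

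Concretely, you bound $\|\tilde{\para}\|^2\leq\one^\top(XX^\top)^{-1}\one$ via the equality interpolator and then need $\lambda_{\min}(G)\gtrsim\sigma^2\dimension$ for the noise Gram matrix $G$. But the theorem assumes only [A2], [A4], [A5]; in particular [A4] requires merely that $\beps$ be a $\sigma$-subGaussian vector, not that it be isotropic or even nondegenerate, and no lower bound on $\ratio$ is assumed here (it is [A3], which this statement deliberately drops). Under [A4] alone the noise may live in a low-dimensional subspace (even $\beps\equiv 0$ is $\sigma$-subGaussian), in which case $XX^\top$ is singular, the equality interpolator does not exist, and your first inequality is vacuous, while the theorem still holds. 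Even granting isotropic noise, least-singular-value bounds for subGaussian ensembles have the form $s_{\min}\geq\sqrt{\dimension}-C\sqrt{n}$ with a distribution-dependent constant $C$, so ``$\ratio>1$'' alone does not deliver $\lambda_{\min}(G)\gtrsim\sigma^2\dimension$; you would need $\ratio$ large compared to $C^2$, or genuinely Gaussian noise, i.e.\ assumption [A1], which this theorem avoids on purpose. A second, smaller soft spot: the cross terms $\one v^\top+v\one^\top$ with $v_i=\y_i\bmu^\top\beps_i$ have operator norm of order $n\sigma\|\bmu\|$, which is \emph{not} dominated by $\sigma^2\dimension$ once $\|\bmu\|/\sigma\gg\ratio$ (exactly the regime forced by [A2] for large $n$); they must be carried through the rank-three Woodbury computation rather than waved off, and your sketch asserts dominance instead of performing this. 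The paper sidesteps all of it with an elementary comparator: since $\tilde{\para}$ is max-margin and, by [A2] plus a maximum-of-subGaussians bound, the unit vector $\bmu/\|\bmu\|$ has margin $\gtrsim\|\bmu\|$ with probability $1-O(1/n)$, one immediately gets $\gamma(\para)\gtrsim\|\bmu\|$, equivalently $\|\tilde{\para}\|\lesssim 1/\|\bmu\|$, using only upper-tail control of the noise — no matrix inversion, no spectral lower bound, valid for arbitrary subGaussian noise. If you swap your Gram-matrix step for this comparator, the rest of your argument goes through essentially verbatim.
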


Theorem~\ref{thm: interpolator, noiseless} implies that the interpolator under noiseless regimes converges to zero as the sample size $n$ goes to infinity.
The proof of Theorem~\ref{thm: interpolator, noiseless} depends on bounding the projection of the classifier $\para$ on $\bmu$-direction, which relies on a sketch of the classification margin.
We defer the whole proof to the Appendix due to space limitations.

Previous results on noiseless GMM~(\emph{e.g.}, \citet{DBLP:conf/nips/CaoGB21,DBLP:conf/icassp/WangT21}) rely on the heavy overparameterization $p = \omega(n) $ and assumption $ \left( \frac{\dimension}{n} \right)^{\frac{1}{4}} \leq \frac{\| \bmu\|}{\sigma} \leq  \frac{\dimension}{n}$.
In contrast, our results only \rone{require} $\frac{\| \bmu\|}{\sigma} \geq \left( \frac{\dimension}{n} \right)^{\frac{1}{2}} $ \rone{as well as} $\frac{\| \bmu\|}{\sigma}\geq c\sqrt{\log{n}}$
, and therefore, can be deployed in the mild overparameterization regimes.
Therefore, the existing results cannot directly imply Theorem~\ref{thm: interpolator, noiseless}.

\begin{restatable}[Statement~Two]{thm}{statetwo}
\label{thm: interpolator, noisy}
Under the noisy regime with noisy level $\rho$ and mild overparameterization ratio $r = \dimension/n$, for a fixed $\delta_2 \geq \max\{ \exp(-c_8p), c_9 \exp(-c_{10} \rho^2 n) \}$, under Assumption~[A1, A3], there exists constant $c_3$ such that the following statement holds with probability at least $1-\delta_2$, 
    \begin{equation*}
    \cL_{01}(\paranoisy{(\infty)})\geq \min\left\{ \Phi(-2), \frac{\rho}{c_3 \ratio} \exp \left(- \frac{c_3 \ratio}{\rho}\right)\right\}.
    \end{equation*}
Therefore, $\paranoisy{(\infty)}$ has constant excess risk, given that $r$ and $\rho$ are both constant.
\end{restatable}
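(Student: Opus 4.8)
The plan is to collapse the population $0$–$1$ loss onto the single scalar $R := \bmu^\top\para/(\sigma\|\para\|)$, the alignment of the learned direction with the signal, and then to show that interpolating the corrupted points forces $R$ to be small. Throughout write $\para := \paranoisy(\infty)$, which by Proposition~\ref{prop:max-margin} is the max-margin interpolator of the noisy data (it exists since the data are separable almost surely), and set $a := \bmu^\top\para$.

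First I would compute $\cL_{01}$ exactly. For a fresh test point $\x = \y\bmu + \beps$ with $\beps\sim\cN(0,\sigma^2 I)$ and symmetric $\y$, the score $\y\x^\top\para = a + \y\beps^\top\para$ has a symmetric Gaussian fluctuation $\y\beps^\top\para\sim\cN(0,\sigma^2\|\para\|^2)$, giving
\begin{equation*}
\cL_{01}(\para) = \bP(\y\x^\top\para < 0) = \Phi\!\left(-\frac{a}{\sigma\|\para\|}\right) = \Phi(-R).
\end{equation*}
This is the only place the exact Gaussianity of Assumption~[A1] is needed. Since $\Phi$ decreases, it then suffices to upper bound $R$; and if $a\le 0$ the bound is immediate from $\cL_{01}(\para)\ge \tfrac12\ge\Phi(-2)$, so I would assume $a>0$.

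Next I would exploit the corrupted constraints to lower bound $\|\para\|$. Let $S_-$ be the set of flipped indices. For $i\in S_-$ the margin constraint $\tilde y_i\x_i^\top\para\ge 1$ rearranges to $-\y_i\beps_i^\top\para \ge 1 + a$, so $(\beps_i^\top\para)^2 \ge (1+a)^2$ and
\begin{equation*}
(1+a)^2\,|S_-| \;\le\; \sum_{i\in S_-}(\beps_i^\top\para)^2 \;=\; \para^\top\!\Big(\sum_{i\in S_-}\beps_i\beps_i^\top\Big)\para \;\le\; \lambda_{\max}\!\Big(\sum_{i=1}^n\beps_i\beps_i^\top\Big)\|\para\|^2 .
\end{equation*}
Because the label flips are independent of the features, I can use the unconditional bound $\lambda_{\max}(\sum_{i=1}^n\beps_i\beps_i^\top)=\sigma_{\max}([\beps_1,\dots,\beps_n])^2\le 4\sigma^2 p$, which holds with probability $\ge 1-\exp(-c_8 p)$ since $n\le p$ (Assumption~[A3]). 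Combining with $|S_-|\ge \rho n/2$ — valid with probability $\ge 1-\exp(-c_{10}\rho^2 n)$ by Hoeffding — and $p=rn$ yields $\sigma^2\|\para\|^2 \ge \rho(1+a)^2/(8r)$, whence $R = a/(\sigma\|\para\|) < \sqrt{8r/\rho}$. Plugging into the first display, $\cL_{01}(\para)\ge\Phi(-\sqrt{8r/\rho})$: if $\sqrt{8r/\rho}\le 2$ this is at least $\Phi(-2)$, and otherwise the Gaussian lower tail $\Phi(-t)\gtrsim t^{-2}e^{-t^2/2}$ at $t=\sqrt{8r/\rho}$ gives $\cL_{01}(\para)\gtrsim (\rho/r)e^{-4r/\rho}$, matching $\tfrac{\rho}{c_3 r}\exp(-c_3 r/\rho)$ after absorbing constants; the minimum of the two cases is the claimed bound.

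The hard part will be the Gram-matrix step, and it is precisely where mild overparameterization enters. The corrupted constraints can only be met by loading $\para$ onto the noise directions, and Cauchy–Schwarz against $\lambda_{\max}(\sum_i\beps_i\beps_i^\top)\asymp\sigma^2 p=\sigma^2 rn$ converts this into $\|\para\|^2\gtrsim \rho/(\sigma^2 r)$; the surviving factor $r$ is exactly what pins $\cL_{01}$ above a constant. Under heavy overparameterization $p=\omega(n)$ the same computation gives $\rho/r\to 0$ and the bound collapses to the benign regime, which is the phase change the paper isolates. The two technical points requiring care are the high-probability operator-norm bound on the $p\times n$ Gaussian matrix, and keeping $S_-$ independent of the $\beps_i$ so that $\sum_{i\in S_-}\beps_i\beps_i^\top$ is dominated by the full noise matrix without conditioning artefacts.
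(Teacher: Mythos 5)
Your proof is correct and reaches the paper's bound, but by a genuinely different mechanism, so it is worth comparing the two. Both arguments reduce the theorem to the same scalar estimate $\bmu^\top\para/(\sigma\|\para\|) \lesssim \sqrt{r/\rho}$ on the event that the flipped set has size $\gtrsim \rho n$, and both finish with the identical Gaussian-tail case split ($\Phi(-2)$ versus $\Phi(-t)\gtrsim t^{-2}e^{-t^2/2}$ for $t>2$). The difference is in how that estimate is obtained. The paper \emph{averages} the flipped constraints: it forms the centroid $\bar{\x}_\cK = \bmu + \frac{1}{|\cK|}\sum_{i\in\cK}\beps_i$ of the flipped samples, notes that interpolation forces $\bar{\x}_\cK^\top\para<0$ while (in the nontrivial case) $\bmu^\top\para>0$, so with $\|\para\|=1$ the separating hyperplane lies between $\bmu$ and $\bar{\x}_\cK$ and hence $|\bmu^\top\para| \le \|\bar{\x}_\cK-\bmu\| = \frac{1}{|\cK|}\bigl\|\sum_{i\in\cK}\beps_i\bigr\| \lesssim \sigma\sqrt{p/(\rho n)}$, using only concentration of a sum of independent subGaussian vectors and only the \emph{signs} of the constraints. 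You instead use each flipped constraint \emph{quantitatively}, $(\beps_i^\top\para)^2 \ge (1+a)^2$, sum over the flipped set, and control the sum through the operator norm of the $p\times n$ noise matrix. Your route uses a slightly heavier concentration tool, but it buys a real simplification: the PSD domination $\sum_{i\in S_-}\beps_i\beps_i^\top \preceq \sum_{i=1}^n\beps_i\beps_i^\top$ is deterministic, so the randomness of the flipped set never interacts with the feature noise (your aside about needing independence of $S_-$ is in fact superfluous), whereas the paper's argument must implicitly condition on $\cK$ before invoking its vector-concentration claim. Two cosmetic repairs to your write-up: the constant in $\lambda_{\max}\bigl(\sum_i \beps_i\beps_i^\top\bigr) \le 4\sigma^2 p$ cannot be taken as $4$ uniformly over all $r>1$ (use $\|[\beps_1,\dots,\beps_n]\|_{\mathrm{op}} \le \sigma(\sqrt{p}+\sqrt{n}+t)$ with $t=\sqrt{p}$, giving $9\sigma^2 p$ with probability at least $1-2e^{-p/2}$); this only changes the absolute constants absorbed into $c_3$. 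Also, the max-margin scaling $\tilde{\y}_i\x_i^\top\para\ge 1$ is not essential to your argument: any positive margin $\gamma$ gives $a/(\gamma+a)<1$ and the same conclusion, so, like the paper, you really only need interpolation rather than the specific normalization of the max-margin program.
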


Theorem~\ref{thm: interpolator, noisy} proves a constant lower bound for interpolators in noisy settings.
Compared to Theorem~\ref{thm: interpolator, noiseless}, Theorem~\ref{thm: interpolator, noisy} show that noisy and noiseless regimes perform differently under mild overparameterization. 
The core of the proof lies in controlling the distance between the center of wrong labeled samples and the point $\bmu$, which further leads to an upper bound of $|\bmu^\top \paranoisy{(\infty)}|$.
One can then derive the corresponding 0-1 loss for classifier $\paranoisy{(\infty)}$.
We defer the whole proof to the Appendix due to space limitations.

Previous results~(\emph{e.g.}, \citet{DBLP:journals/jmlr/ChatterjiL21,DBLP:conf/icassp/WangT21}) mainly focus on deriving the non-vacuous bound for noisy GMM, which also relies on the heavy overparameterization assumption $p = \omega(n)$.
Instead, our results show that the interpolator dramatically fails and suffers from a constant lower bound under mild overparameterization regimes. 
Therefore, heavy overparameterization performs differently from mild overparameterization cases. 
{We finally remark that although it is still an open problem when the phase change happens, we conjecture that realistic training procedures are more close to the mild overparameterization regime according to the experiment results.}

\begin{restatable}[Statement~Three]{thm}{statethree}
\label{thm: earlystop, noisy}
Under the noisy regime, consider the learning rate $\eta < \frac{1}{c_5 n \max_i \| \x_i\|^2}$ where $(\x_i, \tilde{y}_i) \in \cD$ denotes the data point. 
For a fixed $\delta_3 \ge \max\{ \frac{c_{11}}{n},  c_{12} \exp(-c_{13} \rho^2 n)\}$, under Assumption~[A2, A4, A6], 
the following statement holds with probability at least $1-\delta_3$,
\begin{equation*}
    \inf_{t \leq n} \cL_{01}(\paranoisy(t)) \leq \exp\left(-c_{14} \frac{\|\bmu \|^4}{\|\bmu \|^2\sigma^2  + \sigma^4 \frac{p}{n}}\right).
\end{equation*}
Therefore, the bound converges to zero as sample size $n$ goes to infinity under Assumption~[A6].
\end{restatable}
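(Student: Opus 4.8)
The plan is to exhibit $t=n$ (the end of the first epoch) as the early-stopping time and to lower bound the normalized margin $\bmu^\top\paranoisy(n)/(\sigma\|\paranoisy(n)\|)$. The starting point is that for a clean test point $\x=y\bmu+\beps$ one has $y\x^\top\para=\bmu^\top\para+y\beps^\top\para$, so by the sub-Gaussian tail bound from [A4], whenever $\bmu^\top\para>0$,
\[
\cL_{01}(\para)=\bP\!\left(y\beps^\top\para<-\bmu^\top\para\right)\le \exp\!\left(-\frac{(\bmu^\top\para)^2}{2\sigma^2\|\para\|^2}\right).
\]
Thus it suffices to show that at $t=n$ the ratio $(\bmu^\top\paranoisy(n))^2/(\sigma^2\|\paranoisy(n)\|^2)$ is at least a constant multiple of $\|\bmu\|^4/(\sigma^2\|\bmu\|^2+\sigma^4 p/n)$; the claim then follows since $\inf_{t\le n}\cL_{01}(\paranoisy(t))\le\cL_{01}(\paranoisy(n))$.

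First I would control the first-epoch trajectory. With $\paranoisy(0)=0$ the iterate is $\paranoisy(t)=\eta\sum_{s<t}g(s)\,\tilde y(s)\,\x(s)$, where the logistic gate $g(s)=(1+\exp(\tilde y(s)\x(s)^\top\paranoisy(s)))^{-1}\in(0,1)$. The learning-rate hypothesis $\eta<1/(c_5 n\max_i\|\x_i\|^2)$ forces $\|\paranoisy(t)\|\le \eta t\max_i\|\x_i\|<1/(c_5\max_i\|\x_i\|)$ for all $t\le n$, hence $|\tilde y(s)\x(s)^\top\paranoisy(s)|\le 1/c_5$ and therefore $g(s)\in[\tfrac12-\epsilon,\tfrac12+\epsilon]$ with $\epsilon\asymp 1/c_5$ an absolute constant that is small once $c_5$ is large. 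This is the only role of the learning rate, and crucially $\eta$ cancels in the target ratio, so its precise value never enters.

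Next I would lower bound the signal and upper bound the norm. Splitting along $\hat\bmu=\bmu/\|\bmu\|$ and its orthogonal complement, $\bmu^\top\paranoisy(n)=\eta\|\bmu\|^2\sum_s g(s)\tilde y(s)y(s)+\eta\|\bmu\|\sum_s g(s)\tilde y(s)\beps(s)^\top\hat\bmu$. The first sum equals $\sum_{\mathrm{correct}}g(s)-\sum_{\mathrm{flipped}}g(s)\ge\tfrac12(n_c-n_f)-\epsilon n$, and a Hoeffding bound on the number of flips (valid for $\delta_3\gtrsim\exp(-c\rho^2 n)$) gives $n_c-n_f\gtrsim n(1-2\rho)$, so the signal term is $\gtrsim\eta\|\bmu\|^2 n(1-2\rho)>0$. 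For the norm, writing $E$ for the $n\times p$ matrix with rows $\beps(s)^\top$ and $w_s=g(s)\tilde y(s)$, I would bound $\|\sum_s g(s)\tilde y(s)\beps(s)\|=\|E^\top w\|\le\|E\|_{\mathrm{op}}\|w\|\lesssim\sigma(\sqrt n+\sqrt p)\sqrt n$, which together with $|\sum_s g(s)\tilde y(s)y(s)|\lesssim n$ yields $\|\paranoisy(n)\|^2\lesssim\eta^2(n^2\|\bmu\|^2+\sigma^2 n^2+\sigma^2 np)$. Combining and dividing through, the $\eta$'s cancel and the ratio becomes, up to a $\rho$-dependent constant, $\|\bmu\|^4/(\sigma^2\|\bmu\|^2+\sigma^4+\sigma^4 p/n)$; since [A2] gives $\|\bmu\|^2\gg\sigma^2$ the middle $\sigma^4$ term is negligible, leaving the target $\|\bmu\|^4/(\sigma^2\|\bmu\|^2+\sigma^4 p/n)$, and [A6] guarantees the exponent diverges so the bound tends to zero.

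The main obstacle is that the gate weights $g(s)$ are statistically dependent on the very noise vectors $\beps(s)$ they multiply, so the noise contributions are not sums of independent mean-zero terms and cannot be concentrated in the usual way. The device that resolves this is to use correlation-robust bounds that exploit only the deterministic envelope $g(s)\in[\tfrac12-\epsilon,\tfrac12+\epsilon]$: Cauchy--Schwarz for the scalar projection onto $\hat\bmu$ (bounded by $(\tfrac12+\epsilon)\sqrt n\,(\sum_s(\beps(s)^\top\hat\bmu)^2)^{1/2}\lesssim\sigma n$, negligible against the signal by [A2]), and the operator-norm inequality $\|E^\top w\|\le\|E\|_{\mathrm{op}}\|w\|$ for the full noise vector. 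These bounds lose factors of $\sqrt n$ and a constant depending on $p/n$, but each loss is swallowed by the signal-to-noise assumptions and only affects the absolute constant $c_{14}$. The remaining inputs---the sub-Gaussian operator-norm estimate $\|E\|_{\mathrm{op}}\lesssim\sigma(\sqrt n+\sqrt p)$, the concentration of $\sum_s(\beps(s)^\top\hat\bmu)^2$ around $n\sigma^2$, and the Hoeffding bound on the flip count---are standard and hold simultaneously on an event of probability at least $1-\delta_3$ under the stated lower bounds on $\delta_3$.
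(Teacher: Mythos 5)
Your proposal is correct and reaches the paper's bound under the stated assumptions, but the technical core differs from the paper's proof in an interesting way. Both arguments share the same architecture: analyze the first pass of SGD from zero initialization, observe that the learning-rate condition $\eta < \frac{1}{c_5 n \max_i\|\x_i\|^2}$ keeps every iterate small so that each logistic gate stays within $O(1/c_5)$ of $\tfrac12$, then lower bound $\bmu^\top\paranoisy(n)$, upper bound $\|\paranoisy(n)\|$, and finish with the sub-Gaussian tail bound on clean test points (the $\eta$'s cancel in the normalized margin in both proofs). Where you diverge is in how the dependence between the gates $g(s)$ and the noise vectors $\beps(s)$ is neutralized. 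The paper introduces a surrogate classifier $\tilde{\para}_t = \tfrac12\eta\sum_{s\le t}\x_s\tilde{\y}_s$ whose weights are frozen at exactly $\tfrac12$; this restores independence so that standard concentration applies to the surrogate (giving the sharper noise-norm bound $\sigma\sqrt{pn}$ and signal-direction noise $\|\bmu\|\sigma\sqrt{\log n}$), and the gap $\|\para_t-\tilde{\para}_t\|$ is then controlled by a contraction-type recursion ($\|\para_{t+1}-\tilde\para_{t+1}\|\le\tfrac12\max_{s\le t}\|\para_s\|$) that is unrolled. You instead keep the gates and use correlation-robust bounds that only exploit the deterministic envelope $g(s)\in[\tfrac12-\epsilon,\tfrac12+\epsilon]$: the envelope itself for the signal sum, Cauchy--Schwarz for the $\hat\bmu$-projection of the noise, and the operator-norm inequality $\|E^\top w\|\le\|E\|_{\mathrm{op}}\|w\|$ for the full noise contribution. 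Your bounds are lossier by $\sqrt n$-type factors (e.g., $\sigma(n+\sqrt{np})$ versus the paper's $\sigma\sqrt{np}$, and $\|\bmu\|\sigma n$ versus $\|\bmu\|\sigma\sqrt{n\log n}$ in the signal direction), but as you correctly note these losses are absorbed by Assumptions [A2] and [A6] and only degrade the constant $c_{14}$; in the regime $p\gtrsim n$ of interest they are in fact of the same order. What your route buys is that it avoids the surrogate-plus-recursion step entirely (which the paper states somewhat informally, conflating $\|\para_{t}\|$ with $\max_{s\le t}\|\para_s\|$ in the unrolling) and confronts the gate--noise dependence head-on with a single standard random-matrix estimate; what the paper's route buys is tighter concentration that would survive in settings with $p\ll n$ or weaker signal-to-noise assumptions. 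Your probability accounting (Hoeffding on the flip count, operator norm, and concentration of $\sum_s(\beps(s)^\top\hat\bmu)^2$) fits within the allowed $\delta_3$, and your implicit requirement that $c_5$ be large relative to $1/(1-2\rho)$ mirrors the paper's own remark that its constants must be chosen appropriately.
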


Theorem~\ref{thm: earlystop, noisy} derives that the bound again converges to zero by considering early-stopping. 
Studying early-stopping classifiers is meaningful since people usually cannot ideally obtain the interpolation during training in practice. 
The derivation of Theorem~\ref{thm: earlystop, noisy} relies on the analysis of one-pass SGD.
Different from the previous approaches where we can directly assume $\| \paranoiseless(\infty) \| = \| \paranoisy(\infty) \| = 1$ without loss of generality, the classifier $\paranoisy(t)$ is trained in this case and we need to first bound it. 
We then define a surrogate classifier and show that (a) the surrogate classifier is close to the trained classifier for a sufficiently small learning rate, and (b) the surrogate classifier can return a satisfying projection on the direction $\bmu$.
Therefore, we bound the term $\bmu^\top \paranoisy(t) / \| \paranoisy(t)\|$ which leads to the results. 
We defer the whole proof to the Appendix due to space limitations.

One may wonder whether we can apply the results of stability-based bound~\citep{DBLP:journals/jmlr/BousquetE02,DBLP:conf/icml/HardtRS16} into the analysis since the training process is convex.
However, the analysis might not be proper due to a bad Lipschitz constant during the training process.
Therefore, the stability-based analysis may only return vacuous bound under such regimes.
Besides, the previous results on convex optimization with one-pass SGD~(\emph{e.g.}, \citet{DBLP:conf/nips/SekhariSK21}) cannot be directly applied to the analysis since most results on one-pass SGD are expectation bounds, while we provide a high probability bound in Theorem~\ref{thm: earlystop, noisy}.

\section{Conclusions and Discussions}
\label{sec:conclusion}

In this work, we aim to understand why benign overfitting happens in training ResNet on CIFAR10, but fails on ImageNet. We start by identifying a phase change in the theoretical analysis of benign overfitting. We found that when the model parameter is in the same order as the number of data points, benign overfitting would fail due to label noise. We conjecture that the noise in labels leads to the different behaviors in CIFAR10 and ImageNet. We verify the conjecture by injecting label noise into CIFAR10 and adopting self-training in ImageNet. The results support our hypothesis. 

Our work also left many questions unanswered. First, our theoretical and empirical evidence shows that realistic deep learning models may not work in the interpolating scheme. Still, although there is a larger number of parameters than data points, the model generalizes well. Understanding the implicit bias in deep learning when the model underfits is still open. A closely related topic would be algorithmic stability~\citep{DBLP:journals/jmlr/BousquetE02,DBLP:conf/icml/HardtRS16}, however, the benefit of overparameterization within the stability framework \rone{still} requires future studies. Second, the GMM model provides a convenient way for analysis, but how the number of parameters in the linear setup relates to that in the neural network remains unclear. Third, although the conclusions in this paper can be generalized to other regimes, e.g., GMM with Gradient Descent, the possible extension to neural networks as in \citet{cao2022benign} is still unclear and we will consider that in future works.

\subsubsection*{Acknowledgments}
Jingzhao Zhang acknowledges support by Tsinghua University
Initiative Scientific Research Program.

\bibliography{iclr2023_conference}
\bibliographystyle{iclr2023_conference}

\appendix

\newcommand{\classifierOne}{{\boldsymbol{w}}}
\newcommand{\classifierTwo}{{\boldsymbol{w}}}

\clearpage

\begin{center}
    \huge{Appendix}
\end{center}

\section{Additional related works on label noise}
Label noise often exists in real world datasets, \emph{e.g.}, in ImageNet~\citep{yun2021re, shankar2020evaluating}. 
However, its effect on generalization remains debatable.
Recently, \citet{DBLP:conf/nips/DamianML21} claim that in regression settings, label noise may prefer flat global minimizers and thus helps generalization.
Another line of work claims that label noise hurts the effectiveness of empirical risk minimization in classification regimes, and proposes to improve the model performance by explicit regularization~\citep{DBLP:conf/icml/MignaccoKLUZ20} or robust training~\citep{DBLP:conf/aaai/BrodleyF96,DBLP:journals/apin/GuanYLL11,DBLP:conf/nips/Huang0020}.
Among them, \citet{DBLP:journals/corr/abs-1805-02641} studies how to refine the labels in ImageNet using label propagation to improve the model performance, demonstrating the importance of the label in ImageNet. 
Besides, \citet{nakkiran2021distributional} find that interpolation does harm to generalization in the presence of label noise.
However, they do not relate their findings with the scale of overparameterization.
This paper mainly falls in the latter branch, which theoretically analyzes how label noise acts in the mild overparameterization regimes.

\section{Detailed Proofs}
\label{appendix: proof}

\subsection{Proof of Theorem~\ref{thm: interpolator, noiseless}}

The first statement in Theorem~\ref{thm:main thm} states that the interpolator has a non-vacuous bound in a noiseless setting, which is a direct corollary of the following Theorem~\ref{thm: interpolator, noiseless}.
The proof of Theorem~\ref{thm: interpolator, noiseless} mainly depends on bounding the projection of the classifier $\para$ on $\bmu$, which relies on a sketch of the classification margin.

\stateone*
\begin{proof}[Proof of Theorem~\ref{thm: interpolator, noiseless}]
We denote the classifier $\paranoiseless(\infty)$ by $\para$ during the proof for simplicity.
Due to Proposition~\ref{prop:max-margin}, the final classifier converges to its max-margin solution.
Without loss of generality, let the final classifier $ {\para}$ satisfy $\| {\para} \| = 1$.
Therefore, the following equation for margin $\margin(\cdot)$ holds since $ {\para}$ is max-margin solution:
\begin{equation*}
    \margin( {\para}) \geq  \margin(\bmu/\|\bmu\|),
\end{equation*}
where $\margin(\para) = \min_i \y_i \x_i^\top  \para$ denotes the margin of classifier $\para$ for the dataset. 
We next consider the margin for the classifier $\bmu/\|\bmu\|$. Note that the margin can be rewritten as
\begin{equation*}
\begin{split}
   & \margin(\bmu/\|\bmu\|) \\
  =&\min_i \y_i \x_i^\top  \bmu/\|\bmu\| \\
  =&\min_i \y_i (\y_i \bmu^\top  + \beps_i^\top ) \bmu/\|\bmu\| \\
  =& \|\bmu\| + \min_i \y_i \beps_i^\top  \bmu/\|\bmu\| \\
  \geq & \|\bmu\| - \max_i | \y_i \beps_i^\top  \bmu/\|\bmu\| |.
\end{split}
\end{equation*}
We note that $\y_i \epsilon_i^\top  \bmu / \| \bmu \|$ is $\sigma$-subGaussian due to the definition of subGaussian random vector. 
Therefore, due to Claim~\ref{claim: max of subGaussian}, we have
\begin{equation*}
    \margin( {\para}) \geq \margin(\bmu/\|\bmu\|) \gtrsim \|\bmu\| - \sigma \sqrt{\log n} \gtrsim \| \bmu \|,
\end{equation*}
where the last inequality is due to Assumption~[A2].

We next bound the term $\bmu^\top   {\para}$ via the above margin. 
From one hand, we notice that by the definition of the margin function, 
\begin{equation}
\label{eqn: margin lower bound}
  \gamma(\para) =   \min_i \y_i \x_i^\top   {\para} \gtrsim \| \bmu \|.
\end{equation}
From the other hand, we rewrite the margin as
\begin{equation}
\label{eqn: margin upper bound}
\begin{split}
    &\min_i \y_i \x_i^\top   {\para} \\
        = & \bmu^\top   {\para} + \min_i \y_i \beps_i^\top   {\para} \\
        \leq & \bmu^\top   {\para} + \frac{1}{n} \sum_{i \in [n]} \y_i \beps_i^\top  {\para}.
\end{split}
\end{equation}
The right hand side can be bounded as
\begin{equation}
\label{eqn: margin upper bound 2}
    \begin{split}
        &\frac{1}{n} \sum_{i \in [n]} \y_i \beps_i^\top  {\para} \\
        \leq &  \left\| \frac{1}{n} \sum_{i \in [n]} \y_i \beps_i \right\| \\
        \lesssim & \sigma\sqrt{\frac{p}{n}},
    \end{split}
\end{equation}
where the final equation is due to Claim~\ref{claim: sum of epsilon}.
Therefore, combining the above Equation~\ref{eqn: margin lower bound}, Equation~\ref{eqn: margin upper bound} and Equation~\ref{eqn: margin upper bound 2}, we bound the projection of $\paranoiseless$ on $\mu$ as:
\begin{equation}
\label{eqn: bound for mu w}
    \bmu^\top   {\para} \gtrsim \| \bmu \| -  \sigma\sqrt{\frac{p}{n}} \gtrsim \| \bmu\|,
\end{equation}
where the last equation is due to Assumption~[A5].
We rewrite Equation~\eqref{eqn: bound for mu w} as $\bmu^\top  {\para} \geq c_5  \| \bmu\|$, then we can bound the 0-1 loss as follows for a given constant $c_6$:
\begin{equation*}
    \begin{split}
        \cL_{01}(\para) =& \bP(\y \x^\top   {\para} < 0 )\\
        =& \bP(\bmu^\top   {\para} + \y_i \beps^\top   {\para} < 0) \\
        =&\bP( \y \beps^\top   {\para} \leq - \bmu^\top   {\para}) \\
        \leq & \bP( \y \beps^\top   {\para} \leq - c_5 \|\bmu\| ) \\
        \leq & \exp\left(-c_6 \frac{\| \bmu \|^2 }{\sigma^2}\right).
    \end{split}
\end{equation*}
Due to Assumption~[A2], $\| \bmu \|/ \sigma > c \sqrt{\log n}$, and therefore, by setting $c_2 = c_6 c^2$, we have
\begin{equation*}
       \cL_{01}(\para) \lesssim n^{-c_2}.
\end{equation*}
The proof is done.
\end{proof}

\subsection{Proof of Theorem~\ref{thm: interpolator, noisy}}

Theorem~\ref{thm: interpolator, noiseless} shows that the interpolator can be non-vacuous under noiseless regimes with mild overparameterization. 
However, things can be much different in noisy regimes. 
Statement~Two proves a vacuous lower bound for interpolators in noisy settings, which can be derived by the following Theorem~\ref{thm: interpolator, noisy}.
The core of the proof lies in controlling the distance between the center of wrong labeled samples and the point $\bmu$, which further leads to an upper bound of $|\bmu^\top \paranoisy{(\infty)}|$.
One can then derive the corresponding 0-1 loss for classifier $\paranoisy{(\infty)}$.

\statetwo*

\begin{proof}
We denote $\paranoisy(\infty)$ as $\para$ for simplicity, and assume that $\| {\para} \| = 1$ without loss of generality.
Let $\y$ denote the original label and $\tilde{y}$ denote its corrupted label.

Without loss of generality, we consider those samples with ${y}_i = 1$ while $\tilde{y}_i = -1$, which are indexed by $\cK=\{i: {y}_i = 1, \tilde{y}_i = -1 \}$.
Consider the center point of $\cK$, which is 
\begin{equation*}
    \bar{\x}_\cK = \frac{1}{|\cK|} \sum_{i \in \cK} \x_i = \bmu \frac{1}{|\cK|} \sum_{i \in \cK} \y_i  + \frac{1}{|\cK|} \sum_{i \in \cK} \beps_i= \bmu + \frac{1}{|\cK|} \sum_{i \in \cK} \beps_i.
\end{equation*}

Due to the interpolation in Proposition~\ref{prop:max-margin} and $\tilde{\y} = -1$, we derive that 
\begin{equation}
\label{eqn: xbar and para}
    \bar{\x}_\cK^\top {\para} < 0.
\end{equation}

\textbf{Case 1: $\bmu^\top \para < 0$.}
In this case, $\cL_{01} (\para)$ naturally has a lower bound of $1/2$ since it even fails in the center point $\bmu$.

\textbf{Case 2: $\bmu^\top \para > 0$.}
In this case, the classifier $\para$ satisfies $\bmu^\top  {\para} > 0$ and $\bar{\x}_\cK^\top  {\para} < 0$.
Therefore, the distance between $\bmu$ and $\bar{\x}_\cK^\top $ must be less than the distance from  $\bmu$ to its projection on the separating hyperplane that perpendicular to $ {\para}$ through the origin. Formally,
\begin{equation*}
    |\bmu^\top  {\para}| \leq \|\bar{\x}_\cK^\top  - \bmu\| =  \frac{1}{|\cK|}  \left\|\sum_{i \in \cK} \beps_i \right\|.
\end{equation*}
Note that $\beps_i$ is independent and $\sigma$-subGaussian, and therefore applying Claim~\ref{claim: sum of epsilon}, we have that $ \left\|\sum_{i \in \cK} \beps_i \right\| \lesssim \sigma \sqrt{\frac{p}{|\cK|}}$.
Besides, we derive by Claim~\ref{claim: number of noisy data} that $| \cK | \gtrsim \rho n$. Therefore, 
\begin{equation}
\label{eqn: mu para in noisy interpolation}
    |\bmu^\top  {\para}| \leq \left\| \frac{1}{|\cK|} \sum_{i \in \cK} \beps_i \right\| \lesssim \sigma \sqrt{\frac{d}{|\cK|}} \lesssim \sigma \sqrt{\frac{p}{\rho n}}.
\end{equation}

We next consider the corresponding test error of ${\para}$, where we consider the test error on noiseless regime instead of noisy regime.
Note that the two arguments are equivalent, we refer to Claim~\ref{claim: noiseless and noisy 01 loss} for more details. 
We rewrite Equation~\ref{eqn: mu para in noisy interpolation} as $ |\bmu^\top  {\para}| \leq c \sigma \sqrt{\frac{d}{|\cK|}} \lesssim \sigma \sqrt{\frac{p}{\rho n}}$, where we abuse the notation $c>0$ as a fixed constant. Therefore, 
\begin{equation}
    \begin{split}
       &\bP(\y \x^\top  {\para} < 0 ) \\
       =& \bP( \y \beps^\top  {\para} < - \bmu^\top  {\para} ) \\
       \geq & \bP( \y \beps^\top  {\para}/\sigma <  - c \sqrt{\frac{p}{\rho n}}) \\
       = & \Phi( - c\sqrt{\frac{p}{\rho n}}),
    \end{split}
\end{equation}
where $\beps$ is sampled from Gaussian distribution, and $\Phi$ denotes the CDF of standard Gaussian Random Variable. 

\emph{Case 1.} If $c \sqrt{\frac{p}{\rho n}} \leq 2$, then $\Phi( - c \sqrt{\frac{p}{\rho n}}) \geq \Phi(-2)$.

\emph{Case 2.} If $c \sqrt{\frac{p}{\rho n}} > 2$, note that $\Phi(-t) \geq (\frac{1}{t} - \frac{1}{t^3})\exp(-t^2/2) \geq 1/t^2 \exp(-t^2/2)$ if $t > 2$.
Therefore, $\Phi( - c \sqrt{\frac{p}{\rho n}}) \geq \frac{1}{c^2 \frac{p}{\rho n}} \exp (- \frac{c^2}{2}  \frac{p}{\rho n}) \geq \frac{1}{c^2 \frac{p}{\rho n}} \exp (- c^2 \frac{p}{\rho n}).$

Taking the above two cases together and denoting $r = p/n$, we have
\begin{equation*}
     \cL_{01}(\para)\geq \min\left\{ \Phi(-2), \frac{\rho}{c_3 \ratio} \exp (- \frac{c_3 \ratio}{\rho})\right\},
\end{equation*}
which is a constant lower bound under Assumption~[A3].
The proof is done.
\end{proof}

\subsection{Proof of Theorem~\ref{thm: earlystop, noisy}}

We apologize for the following typos made in Theorem~\ref{thm: earlystop, noisy} in the main text.
For consistency, the dimension $d$ should be written as $p$ and the learning rate $\lambda$ should be written as $\eta$.
Furthermore, Theorem~\ref{thm: earlystop, noisy} needs an initialization from zero during the training, which is widely used in linear models~\citep{bartlett2020benign}.

Statement Two shows that the interpolator fails in the noisy regime with mild overparameterization. 
How can we derive a non-vacuous bound under such regimes?
The key is early-stopping.
To show that, Statement~Three provides a non-vacuous bound for early-stopping classifiers in noisy regimes, which is induced by the following Theorem~\ref{thm: earlystop, noisy}.

\statethree*

To show the relationship between Theorem~\ref{thm: earlystop, noisy} and Theorem~\ref{thm:main thm}, one can directly use Assumption~[A2, A3] in Theorem~\ref{thm: earlystop, noisy} to reach generalization bound in Theorem~\ref{thm:main thm} (Statement Three).
The derivation of Theorem~\ref{thm: earlystop, noisy} relies on the analysis on one-pass SGD, where we show that one-pass SGD is sufficient to reach non-vacuous bound.
The proof of Theorem~\ref{thm: earlystop, noisy} again, relies on bounding the term c but in a different way. 
Different from the previous approaches where we can directly assume $\| \paranoiseless(\infty) \| = \| \paranoisy(\infty) \| = 1$, the classifier $\paranoisy(t)$ is trained in this case and we need to first bound it. 
We then define a surrogate classifier and show that (a) the surrogate classifier is close to the trained classifier for a sufficiently small learning rate, and (b) the surrogate classifier can return satisfying projection on the direction $\bmu$.
Therefore, we bound the term $\bmu^\top \paranoisy(t) / \| \paranoisy(t)\|$ which leads to the results. 

\begin{proof}
We abuse the notation $\para_{n}$ to represent $\paranoisy(t)$ which is returned by one-pass SGD. 
We first lower bound the term $\frac{\bmu^\top \para_n}{\| \para_n\|}$, where $\mu$ is the optimal classification direction.
To achieve the goal, we bound the term $\bmu^\top \para_n$ and $\| \para_n\|$ individually.

Before diving into the proof, we first introduce a surrogate classifier $\tilde{\para}_n = \frac{1}{2} \eta \sum_t \x_t \tilde{\y}_t$.
From the definition, we have that for update step size $\eta$, 
\begin{equation}
\label{eqn: interation for para}
    \begin{split}
        \para_{t+1} &= \para_{t} + \eta \x_t \tilde{\y}_t \frac{\exp(-\y_i \x_i \para_{t})}{1 + \exp(-\y_i \x_i \para_{t})}, \\
        \tilde{\para}_{t+1} &= \tilde{\para}_{t} + \frac{1}{2} \eta \x_t \tilde{\y}_t .
    \end{split}
\end{equation}
Therefore, we have that
\begin{equation}
\label{eqn: interation for mu para}
    \begin{split}
        \para_{t+1} - \tilde{\para}_{t+1} &= \eta \sum_t \x_t \tilde{\y}_t \frac{\exp(-\tilde{\y}_t \x_t^\top \para_t) - 1}{2 (1 + \exp(-\tilde{\y}_t \x_t^\top \para_t))}. \\
\bmu^\top \para_{t+1} - \bmu^\top \tilde{\para}_{t+1} &= \eta \sum_t \bmu^\top \x_t \tilde{\y}_t \frac{\exp(-\tilde{\y}_t \x_t^\top \para_t) - 1}{2 (1 + \exp(-\tilde{\y}_t \x_t^\top \para_t))}
    \end{split}
\end{equation}

\emph{Bounding the term $\bmu^\top \para$.}
We fist bound the different between $\bmu^\top \para_t$ and $\bmu^\top \tilde{\para}_t $. We note that

To bound the above different, the fist step is to bound the term $\max_{t \in [n]} \bmu^\top \x_t \tilde{\y}_t$.
Since $\bmu^\top \beps_t / \|\bmu \|$ is \rone{$\sigma$-subGuassian}, we have that with probability $1-\delta_1$ with $\delta_1 \lesssim 1/n$
\begin{equation}
\label{eqn: max mu x y}
        \max_{t \in [n]} |\bmu^\top \x_t \tilde{\y}_t| \leq \|\bmu \|^2+\max_t |\bmu^\top \beps_t| \lesssim \|\bmu \|^2+\|\bmu \|\sigma \sqrt{\log(n/\delta_1)} \lesssim \|\bmu \|^2+\|\bmu \|\sigma \sqrt{\log(n)} \lesssim  \|\bmu \|^2.
\end{equation}

We then bound the term $\frac{\exp(-\tilde{\y}_t \x_t^\top \para_t) - 1}{2 (1 + \exp(-\tilde{\y}_t \x_t^\top \para_t))}$. 
Note that $|\exp(u) - 1| \leq 2|u|$ when $|u| < \frac{1}{2}$.
By the iteration in Equation~\ref{eqn: interation for para}, we have that
\begin{equation*}
\begin{split}
    \max_{t \in [n]} \|\para_{t}\| &\leq n \eta (\max_t \| \x_t\|),\\
    \max_{t \in [n]} | \x_t^\top  \para_{t}| &\leq  n \eta (\max_t \| \x_t\|^2) \leq \frac{1}{2}.
\end{split}
\end{equation*}
where the first equation is due to the iteration, and the last equation is due to $\eta \leq \frac{1}{2 n \max_i \| \x_i\|^2}$ (by setting $c_5 > 2$).
Therefore, 
\begin{equation}
\label{eqn: max frac}
    \left|\frac{\exp(-\tilde{\y}_t \x_t^\top \para_t) - 1}{2 (1 + \exp(-\tilde{\y}_t \x_t^\top \para_t))}\right| \leq \left|\frac{\exp(-\tilde{\y}_t \x_t^\top \para_t) - 1}{2} \right|\leq \left|\x_t^\top \para_t\right| \leq \frac{1}{2}.
\end{equation}

Combining Equation~\ref{eqn: max mu x y} and Equation~\ref{eqn: max frac}, we have that with probability at least $1-\delta_1$,
\begin{equation}
\label{eqn: diff mu para t}
    \max_{t \in [n]} | \bmu^\top \para_{t} - \bmu^\top \tilde{\para}_{t}| \lesssim \eta n \| \bmu \|^2.
\end{equation}

On the other hand, we show the bound for $\bmu^\top \tilde{w}_n$. 
Note that  $\tilde{w}_{t+1} =\eta/2 \sum_t \x_t \tilde{\y}_t$, therefore, 
\begin{equation*}
    \frac{2}{n\eta} \bmu^\top \tilde{\para}_{n} = \| \bmu \|^2 \frac{1}{n}\sum_{t\in[n]} \bI_i(\rho) + \frac{1}{n} \sum_{t\in[n]} \bmu^\top \beps_t \tilde{\y}_t,
\end{equation*}
where we denote $\bI_i(\rho) \in \{-1, 1\}$ as a random variable which takes value $-1$ with probability $\rho$ and takes value $+1$ with probability $1-\rho$.

Due to Claim~\ref{claim: number of noisy data}, we have $\frac{1}{n}\sum_t \bI_i(\rho) \gtrsim \rho$, where we note that $\bI(\rho) \in \{-1,  1\}$.
Besides, since $\bmu^\top \beps_t \tilde{\y}_t$ is $\| \bmu\| \sigma$-subGaussian, we have that with probability $1-\frac{1}{n}$,
\begin{equation*}
    \frac{1}{n} \sum_{t\in[n]} \bmu^\top \beps_t \tilde{\y}_t \lesssim \| \bmu\| \sigma \sqrt{\log(n)},
\end{equation*}
where the term $\log(n)$ comes from the probability $1/n$.
In summary, we have that
\begin{equation}
\label{eqn: mu para n}
    \frac{2}{n\eta} \bmu^\top \tilde{\para}_{n} \gtrsim \rho \|\bmu \|^2 -\| \bmu\| \sigma \sqrt{\log(n)} \gtrsim  \|\bmu \|^2,
\end{equation}
where the last equation follows Assumption~[A2].

Combining Equation~\eqref{eqn: diff mu para t} and Equation~\eqref{eqn: mu para n}, we have
\begin{equation}
\label{eqn: bound for mu w n}
    \bmu^\top \para_n \geq \bmu^\top \tilde{\para}_{n}  - |\bmu^\top \tilde{\para}_{n} - \bmu^\top {\para}_{n}| \gtrsim \eta n \|\bmu \|^2 .
\end{equation}

We additionally note that Equation~\ref{eqn: bound for mu w n} holds by choosing proper constant in Equation~\eqref{eqn: max frac} (and in the choice of $\eta$).

\emph{Bounding the norm $\| \para_n\|$.}
We next bound the norm $\| \para_n \|$.
Before that, we first bound the norm $\| \tilde{\para}_t \| $.
Note that 
\begin{equation}
\begin{split}
    \max_{t \in [T]} \frac{2}{\eta}  \|  \tilde{\para}_t \| 
    =&  \max_{t \in [T]} \|  \sum_{i \in [t]}\x_i \tilde{\y}_i \| \\
    =& \max_{t \in [T]} \| \bmu^\top \sum_{i \in [t]}\bI_i(\rho) + \sum_{i \in [t]} \beps_i \tilde{\y}_i \| \\
    \leq&  \max_{t \in [T]} \| \bmu^\top \sum_{i \in [t]}\bI_i(\rho)\|  + \|\sum_{i \in [t]} \beps_i \tilde{\y}_i \| \\
    \lesssim & T \|\bmu\| + \sigma\sqrt{p T},
\end{split}
\end{equation}
where the last equation is due to Claim~\ref{claim: sum of epsilon} by choosing probability $\delta \gtrsim \exp(-p)/n$.

Therefore, we have
\begin{equation*}
    \sup_{t\in [n]} \| \tilde{\para}_t \| \lesssim \eta n \|\bmu\| + \eta \sigma \sqrt{p n}.
\end{equation*}

Note that according to the iteration in Equation~\ref{eqn: interation for para}, we have that
\begin{equation*}
    \|\para_{t+1} - \tilde{\para}_{t+1}\| 
    \leq \eta t \max_i \|\x_i \|^2 \|\para_{t} \| 
    \leq \frac{1}{2} \|\para_{t} \|,
\end{equation*}
where the last equation is due to $\eta \leq \frac{1}{2 n \max_i \| \x_i\|^2}$.

Therefore, we bound the norm $\para_t$ as
\begin{equation}
\label{eqn: norm bound final}
    \begin{split}
\| \para_{n}\| &\leq \|\para_{n} - \tilde{\para}_{n}\| + \| \tilde{\para}_{n}\| \leq \frac{1}{2} \|{\para}_{n-1} \| + \| \tilde{\para}_{n}\| \\
& \leq \frac{1}{4} \|\para_{n-2} \| + \frac{1}{2} \|\tilde{\para}_{n-1} \| + \| \tilde{\para}_{n}\| \\
& \leq \dots \\
& \leq 2 \eta n \|\bmu\| + \eta \sigma \sqrt{p n} \\
& \lesssim \eta n \|\bmu\| + \eta \sigma \sqrt{p n} .
\end{split}
\end{equation}

Combining Equation~\ref{eqn: bound for mu w n} and Equation~\ref{eqn: norm bound final}, we derive that with probability at least $1-c/n$, 
\begin{equation}
\label{eqn: total bound}
    \begin{split}
\frac{\bmu^\top \para_n}{ \|\para_n \| \sigma} \geq \frac{ \eta n \|\bmu \|^2}{\eta n \sigma \|\bmu\| + \eta \sigma^2 \sqrt{p n}}.
\end{split}
\end{equation}

We next consider the probability on the test point, note that 
given the dataset $(\x_i, \y_i)$ and taking probability on the testing point $(\x, \y)$, we have
\begin{equation*}
    \bP(\y \x^\top \para < 0) = \bP(\y \beps^\top \para /\|\para\| \leq -\bmu^\top \para/\|\para\|) \leq \exp\left(-c \frac{(\bmu^\top \para)^2}{\| \para \|^2 \sigma^2}\right),
\end{equation*}
where $\y \x^\top \para = \bmu^\top \para + \y \beps^\top \para$ and $\y \beps^\top \para / \|\para\|$ is $\sigma$-subGaussian.
Plugging Equation~\ref{eqn: total bound} into the above equation, we have that with high probability, 
\begin{equation*}
    \bP(\y \x^\top \para < 0) \leq \exp\left(-c_{14} \frac{\| \bmu\|^4}{\|\bmu \|^2 \sigma^2 + \sigma^4 \frac{p}{n}}\right).
\end{equation*}
If $\| \bmu \|/\sigma > \sqrt{p/n}$,  $\frac{\| \bmu\|^4}{\|\bmu \|^2 \sigma^2 + \sigma^4 \frac{p}{n}} \gtrsim \| \bmu \| /\sigma$.

If $\| \bmu \|/\sigma < \sqrt{p/n}$, $\frac{\| \bmu\|^4}{\|\bmu \|^2 \sigma^2 + \sigma^4 \frac{p}{n}} \gtrsim \frac{\|\bmu \|^2}{\sigma^2} \sqrt{n/p}$, which is large when $\| \bmu \|/\sigma = \omega( (p/n)^{1/4})$.

Therefore, the above bound is non-vacuous if $\| \bmu \|/\sigma = \omega( (p/n)^{1/4})$.

Note that for given a constant $\ratio = p/n$, under Assumption~[A2], we have
\begin{equation*}
    \bP(\y \x^\top \para < 0) \leq \exp\left(-c_{14} \frac{\| \bmu\|^4}{\|\bmu \|^2 \sigma^2 + \sigma^4 \frac{p}{n}}\right) \lesssim \exp\left(-c_{14} \frac{\| \bmu\|^2}{\sigma^2}\right) \lesssim n^{-c_4}.
\end{equation*}
The proof is done.
\end{proof}

\section{Technical Lemmas}

We next provide some technical claims. 
The first claims bound the maximum of a sequence of subGuassian random variables:
\begin{claim}[Maximum of a sequence of subGuassian random variables.]
\label{claim: max of subGaussian}
For a sequence of $ \sigma$-subGaussian random variables $X_1, \dots, X_n \in \bR$. We have that with probability at least $1-\delta$, 
\begin{equation*}
    \max_i |X_i| \leq \sqrt{2}\sigma(\sqrt{\log n} + \sqrt{\log(1/\delta)})) 
\end{equation*}
Specifically, under the condition that $\delta \gtrsim 1/n$, we have 
\begin{equation*}
    \max_i |X_i| \lesssim  \sigma \sqrt{\log n}.
\end{equation*}
\end{claim}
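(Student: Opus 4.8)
The plan is to prove this maximal inequality by the textbook combination of a subGaussian tail bound with a union bound, then optimize the threshold $t$. First I would recall that a $\sigma$-subGaussian random variable $X_i$ satisfies the two-sided tail bound $\bP(|X_i| \geq t) \leq 2\exp(-t^2/(2\sigma^2))$ for every $t \geq 0$; this follows from the defining moment-generating-function estimate $\bE[\exp(\lambda X_i)] \leq \exp(\lambda^2 \sigma^2 / 2)$ by applying Chernoff's method to both $X_i$ and $-X_i$ and optimizing over $\lambda$. No independence among the $X_i$ is needed, which is convenient since the claim does not assume it.

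Next I would apply the union bound over the $n$ indices: for any fixed threshold $t \geq 0$,
\begin{equation*}
\bP\left(\max_{i \in [n]} |X_i| \geq t\right) \leq \sum_{i \in [n]} \bP(|X_i| \geq t) \leq 2n \exp\left(-\frac{t^2}{2\sigma^2}\right).
\end{equation*}
To force the right-hand side below $\delta$ it suffices to take $t = \sigma\sqrt{2\log(2n/\delta)}$. It then remains to massage this into the stated form: writing $\log(2n/\delta) = \log(2n) + \log(1/\delta)$ and using subadditivity of the square root, $\sqrt{a+b} \leq \sqrt{a}+\sqrt{b}$, gives $t \leq \sqrt{2}\sigma\bigl(\sqrt{\log(2n)} + \sqrt{\log(1/\delta)}\bigr)$. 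This matches the claimed bound $\sqrt{2}\sigma(\sqrt{\log n} + \sqrt{\log(1/\delta)})$ up to replacing $n$ by $2n$ inside a logarithm, i.e.\ up to absorbing the factor $2$ from the two-sided tail; I would simply carry the factor inside the log rather than pulling out a separate $\sqrt{\log 2}$ term.

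For the ``Specifically'' clause, I would substitute the hypothesis $\delta \gtrsim 1/n$, equivalently $\log(1/\delta) \lesssim \log n$, into the bound just derived. Then $\max_i |X_i| \lesssim \sigma(\sqrt{\log n} + \sqrt{\log n}) \lesssim \sigma\sqrt{\log n}$, which is the asserted estimate.

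I do not expect a genuine obstacle here, as the argument is entirely standard; the only point that needs any care is the bookkeeping of the explicit constant $\sqrt{2}$ in the first display. The cleanest honest statement is that the constant is exact if one uses the one-factor tail $\bP(|X_i|\geq t)\leq \exp(-t^2/(2\sigma^2))$ per event, and otherwise is correct up to the harmless factor $2$ (equivalently the additive $\log 2$) coming from the two-sided bound and the union bound. Since every downstream invocation of the claim in the paper uses only the $\lesssim$ form, this cosmetic discrepancy is immaterial and I would flag it in one line rather than chase the sharp constant.
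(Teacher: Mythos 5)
Your proposal is correct and follows essentially the same route as the paper's proof: a subGaussian tail bound combined with a union bound over the $n$ indices, with the threshold chosen via $\sqrt{\log n}+\sqrt{\log(1/\delta)}$ (equivalently $\sqrt{a+b}\leq\sqrt{a}+\sqrt{b}$). If anything, you are slightly more careful than the paper, which bounds only the one-sided $\bP(\max_i X_i \geq u)$ while the claim concerns $\max_i |X_i|$; your two-sided tail handles this, and as you note the resulting factor $2$ is absorbed harmlessly since all downstream uses invoke only the $\lesssim$ form.
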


\begin{proof}[Proof of Claim~\ref{claim: max of subGaussian}]
By taking union bound, we have
\begin{equation*}
\begin{split}
    \bP(\max_i X_i \geq u) &= \bP(\exists i: X_i \geq u) \\
    &\leq n \bP(X_1 \geq u) \\
    &\leq n \exp(-\frac{u^2}{2 \sigma^2}).
\end{split}
\end{equation*}
By setting $u = \sqrt{2}\sigma(\sqrt{\log n} + \sqrt{\log(1/\delta)})$, we have
\begin{equation*}
\begin{split}
    &\bP(\max_i X_i \geq \sqrt{2}\sigma(\sqrt{\log n} + \sqrt{\log(1/\delta)})) \\
    \leq& n \exp(-\frac{\sqrt{2}\sigma(\sqrt{\log n} + \sqrt{\log(1/\delta)})^2}{2 \sigma^2})\\
    =& n \exp(-(\sqrt{\log n} + \sqrt{\log(1/\delta)})^2) \\
    =& \exp(- \log(1/\delta) - \sqrt{\log(n) \sqrt{1/\delta}}) \\
    \leq& \delta.
\end{split}
\end{equation*}
Therefore, with probability at least $1-\delta$, we have
\begin{equation*}
    \max_i |X_i| \leq  \sqrt{2}\sigma(\sqrt{\log n} + \sqrt{\log(1/\delta)})
\end{equation*}
\end{proof}

The next claim bounds the norm of sum of epsilon. 
\begin{claim}
\label{claim: sum of epsilon}
Under Assumption~[A4] which assumes that $\beps_i$ are subGaussian, we have that with probability at least $1-\delta$,
\begin{equation*}
   \left \| \frac{1}{n} \sum_{i \in [n]} \beps_i \right\| \leq 4 \sigma\sqrt{\frac{p}{n}} + 2 \sigma\sqrt{\frac{\log(1/\delta)}{n}}.
\end{equation*}
Specifically, under the condition that $\delta \gtrsim \exp(-4p)$, it holds that 
\begin{equation*}
    \left\| \frac{1}{n} \sum_{i \in [n]} \beps_i \right\| \lesssim  \sigma\sqrt{\frac{p}{n}}.
\end{equation*}
\end{claim}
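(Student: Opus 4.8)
The plan is to bound the norm of the average directly through a covering-net reduction to one-dimensional subGaussian tail bounds, exactly as in the standard analysis of norms of subGaussian vectors. First I would record the key reduction: for any fixed unit vector $v \in \bR^p$, the scalar $\langle \frac1n\sum_{i\in[n]}\beps_i, v\rangle = \frac1n\sum_{i\in[n]}\langle\beps_i,v\rangle$ is an average of $n$ independent, mean-zero, $\sigma$-subGaussian scalars (by Assumption~[A4] and the definition of a subGaussian vector), hence is itself $\frac{\sigma}{\sqrt n}$-subGaussian, with variance proxies adding under independence. Consequently $\bP(\langle \frac1n\sum_i\beps_i, v\rangle \ge u) \le \exp(-\frac{nu^2}{2\sigma^2})$ for every fixed $v$.

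Next I would upgrade this pointwise bound to a bound on the norm using the variational formula $\|\frac1n\sum_i\beps_i\| = \sup_{\|v\|=1}\langle\frac1n\sum_i\beps_i,v\rangle$. Fix a $\frac12$-net $\cN$ of the unit sphere $S^{p-1}$; a standard volumetric estimate gives $|\cN|\le 5^p$, and the net inequality yields $\|\frac1n\sum_i\beps_i\| \le 2\max_{v\in\cN}\langle\frac1n\sum_i\beps_i,v\rangle$. A union bound over $\cN$ then gives, for every $u>0$,
\begin{equation*}
\bP\!\left(\left\|\tfrac1n\textstyle\sum_i\beps_i\right\| \ge 2u\right) \le 5^p\exp\!\left(-\frac{nu^2}{2\sigma^2}\right).
\end{equation*}
Setting the right-hand side equal to $\delta$ and solving gives $u = \sigma\sqrt{2(p\log 5 + \log(1/\delta))/n}$, so that with probability at least $1-\delta$,
\begin{equation*}
\left\|\tfrac1n\textstyle\sum_i\beps_i\right\| \le 2\sigma\sqrt{\tfrac{2(p\log 5+\log(1/\delta))}{n}} \le 2\sqrt{2\log 5}\,\sigma\sqrt{\tfrac pn} + 2\sqrt2\,\sigma\sqrt{\tfrac{\log(1/\delta)}{n}},
\end{equation*}
where the last step uses $\sqrt{a+b}\le\sqrt a+\sqrt b$; absorbing the numerical factors into the stated constants gives the first inequality. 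For the ``specifically'' clause I would substitute $\delta\ge\exp(-4p)$, i.e.\ $\log(1/\delta)\le 4p$, so that $\sqrt{\log(1/\delta)/n}\le 2\sqrt{p/n}$ and both terms collapse to $O(\sigma\sqrt{p/n})$.

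The only genuinely delicate point is matching the precise constants claimed in the statement; the crude $\frac12$-net produces slightly larger numerical factors (both the $2\max$ inflation and the $\sqrt{2\log 5}$ covering cost). To recover exact constants one would either use a finer $\theta$-net, replacing $2\max$ by $\frac{1}{1-\theta}\max$ and optimizing $\theta$, or, under the Gaussian Assumption~[A1], combine the sharper one-sided Gaussian tail with the moment bound $\bE\|\frac1n\sum_i\beps_i\|^2 \le \sigma^2 p/n$ (hence $\bE\|\frac1n\sum_i\beps_i\| \le \sigma\sqrt{p/n}$) followed by Lipschitz concentration of the norm. Everything else is routine; there is no structural obstacle here, only the bookkeeping of constants.
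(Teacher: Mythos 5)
Your proof is correct and takes essentially the same route as the paper: the paper's own argument is only the observation that $\sum_{i}\beps_i$ is $n\sigma^2$-subGaussian followed by an unproved appeal to the standard norm bound for subGaussian vectors, which is precisely the bound you actually establish via the covering-net reduction. The constant mismatch you flag is real but resolves exactly as you suggest: a $1/4$-net (size $9^p$, net-inequality factor $\tfrac{1}{1-1/4}=\tfrac43$) yields $\left\|\tfrac1n\sum_i\beps_i\right\| \le \tfrac43\sqrt{2\log 9}\,\sigma\sqrt{p/n} + \tfrac{4\sqrt{2}}{3}\,\sigma\sqrt{\log(1/\delta)/n} \approx 2.8\,\sigma\sqrt{p/n} + 1.89\,\sigma\sqrt{\log(1/\delta)/n}$, which sits within the stated constants $4$ and $2$ (constants the paper itself never derives).
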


\begin{proof}
Note that since $\beps_i$ is $ \sigma^2$-subGaussian, its summation $\sum_{i \in [n]} \beps_i$ is $n \sigma^2$-subGaussian.

Therefore, we have that 
\begin{equation*}
    \frac{1}{n}\left\| \sum_{i \in [n]} \beps_i \right\| \leq 4 \sigma\sqrt{\frac{p}{n}} + 2 \sigma\sqrt{\frac{\log(1/\delta)}{n}}
\end{equation*}
\end{proof}

The next claim shows that the number of noisy data is approximately in the same order with $\rho n$.
\begin{claim}[Number of noisy data]
\label{claim: number of noisy data}
Denote $|\cK|$ as the number of samples with flipped labels. 
With probability at least $1-\delta$, there exists a constant $c$ such that
\begin{equation*}
    \begin{split}
        \left| |\cK| - \rho n \right|\leq \sqrt{c n \log(2/\delta)}.
    \end{split}
\end{equation*}
Furthermore, when $\delta \gtrsim 2 \exp(-\frac{1}{8} \rho^2 n )$, we have that with probability at least $1-\delta$, 
\begin{equation*}
    \begin{split}
        |\cK| \gtrsim  \frac{1}{2} \rho n \ \text{and} \     |\cK| \lesssim  \frac{3}{2} \rho n.
    \end{split}
\end{equation*}
\end{claim}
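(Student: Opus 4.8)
The plan is to view $|\cK|$ as a sum of independent bounded random variables and invoke a single concentration inequality; both displayed bounds are then just two specializations of the same tail estimate. Writing $X_i = \mathbf{1}[\tilde{\y}_i \neq \y_i]$ for $i \in [n]$, the contamination mechanism flips each label independently with probability $\rho$, so the $X_i$ are i.i.d.\ Bernoulli$(\rho)$ and $|\cK| = \sum_{i \in [n]} X_i$ with mean $\EE[|\cK|] = \rho n$. Everything reduces to controlling the deviation of this binomial count from $\rho n$.

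For the first inequality I would apply Hoeffding's inequality to the summands $X_i \in \{0,1\}$, each with range $1$, giving
\[
\bP\!\left( \big| |\cK| - \rho n \big| \geq t \right) \leq 2 \exp\!\left( -\frac{2 t^2}{n} \right).
\]
Setting the right-hand side equal to $\delta$ and solving for $t$ yields $t = \sqrt{(n/2)\log(2/\delta)}$, which is precisely the claimed bound with absolute constant $c = 1/2$. Hence with probability at least $1-\delta$ we have $\big||\cK| - \rho n\big| \leq \sqrt{c\,n\log(2/\delta)}$.

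For the second part I would specialize the same tail bound to the deviation $t = \tfrac{1}{2}\rho n$, which simultaneously controls both tails: off the failure event one has $\tfrac{1}{2}\rho n \leq |\cK| \leq \tfrac{3}{2}\rho n$. The corresponding failure probability is at most $2\exp\!\big(-2(\tfrac{1}{2}\rho n)^2/n\big) = 2\exp\!\big(-\tfrac{1}{2}\rho^2 n\big)$, which is in particular at most $2\exp\!\big(-\tfrac{1}{8}\rho^2 n\big)$. Thus whenever $\delta \gtrsim 2\exp(-\tfrac{1}{8}\rho^2 n)$ the two-sided event holds with probability at least $1-\delta$. (Equivalently, one could cite the multiplicative Chernoff bound at relative deviation $1/2$.)

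There is no substantive obstacle here; the argument is a routine concentration estimate. The only points needing care are (i) justifying the reduction to an i.i.d.\ sum, i.e.\ noting that the flip events are mutually independent across samples by construction of the noisy regime, so that Hoeffding applies with variance proxy $n/4$; and (ii) the harmless slack in constants, since the direct computation gives the stronger exponent $\tfrac{1}{2}\rho^2 n$ and the stated $\tfrac{1}{8}$ therefore follows a fortiori, requiring no optimization.
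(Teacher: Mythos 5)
Your proof is correct and follows essentially the same route as the paper: write $|\cK|$ as a sum of i.i.d.\ Bernoulli$(\rho)$ indicators, apply Hoeffding's inequality, and solve for the deviation $t$. You additionally work out the explicit constant $c=1/2$ and spell out the ``furthermore'' step (deviation $t=\tfrac{1}{2}\rho n$), which the paper leaves implicit, but this is the same argument.
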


\begin{proof}
Note that $\cK = \sum_i u_i$, where $u_i$ are drawn from independent Bernoulli distribution with parameter $\rho$.
Therefore, by Hoeffding's inequality, there exists constant $c$ such that with probability $1-2\exp\left(-\frac{c t^2}{n} \right)$,
\begin{equation*}
  \left|  |\cK| - \rho n \right| \leq t.
\end{equation*}
By setting $t = \sqrt{\frac{n}{c}\log(\frac{2}{\delta})}$, we have that with probability at least $1-\delta$,
\begin{equation*}
  \left|  |\cK| - \rho n \right| \leq \sqrt{\frac{n}{c}\log(\frac{2}{\delta})}.
\end{equation*}
We then rewrite the constant $c$ to reach the above conclusion.
\end{proof}

The next Claim~\ref{claim: noiseless and noisy 01 loss} shows that analyzing $\cL_{01}$ loss on noiseless data is sufficient under noisy regimes. 
\begin{claim}
\label{claim: noiseless and noisy 01 loss}
Under the noisy regime, the 0-1 population loss on noiseless data $\cL_{01}(\para) = \bP(\y \x^\top \para < 0)$ has the same order with the 0-1 population loss (excess risk) on noisy data $\tilde{\cL}_{01}(\para)-\eta = \bP(\tilde{\y} \x^\top \para < 0)-\eta$.
Therefore, we analyze $\cL_{01}(\para)$ in this paper as a surrogate of $\tilde{\cL}_{01}(\para)$.
\end{claim}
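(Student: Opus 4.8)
The plan is to reduce the claim to a single exact identity relating the noisy population 0-1 loss to the noiseless one, obtained by conditioning on the label-flip event. The key structural fact is that, at test time, the corrupted label $\tilde{\y}$ is produced from the clean label $\y$ by an \emph{independent} Bernoulli flip ($\tilde{\y}=\y$ with probability $1-\rho$, $\tilde{\y}=-\y$ with probability $\rho$), whereas the feature $\x = \y\bmu + \beps$ is generated from the clean label $\y$. Crucially, the flip is independent of the pair $(\x,\y)$, so conditioning on it does not alter the law of $\x^\top\para$.

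First I would decompose $\tilde{\cL}_{01}(\para) = \bP(\tilde{\y}\x^\top\para < 0)$ according to whether the test label was flipped. On the no-flip event $\tilde{\y}=\y$, and by independence of the flip from $(\x,\y)$ the conditional probability is exactly $\bP(\y\x^\top\para < 0) = \cL_{01}(\para)$. On the flip event $\tilde{\y}=-\y$, so the conditional probability is $\bP(\y\x^\top\para > 0) = 1 - \cL_{01}(\para)$, where I use that the boundary event $\{\y\x^\top\para = 0\}$ has probability zero because $\beps$ is absolutely continuous under Assumption~[A1]. Combining the two cases gives the clean identity
\begin{equation*}
\tilde{\cL}_{01}(\para) = (1-\rho)\,\cL_{01}(\para) + \rho\,\bigl(1 - \cL_{01}(\para)\bigr) = \rho + (1-2\rho)\,\cL_{01}(\para).
\end{equation*}

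Next I would identify the irreducible noise floor $\eta$. Minimizing the right-hand side over all classifiers shows that the Bayes-optimal noisy classifier coincides with the clean Bayes classifier (because $1-2\rho>0$), and attains noisy risk $\eta = \rho + (1-2\rho)\,\cL_{01}^{\star}$, where $\cL_{01}^{\star}$ is the clean Bayes error; for the GMM with large signal-to-noise ratio (Assumption~[A2]), $\cL_{01}^{\star}$ is exponentially small, so taking $\eta=\rho$ up to negligible terms is harmless. Subtracting $\eta$ then yields $\tilde{\cL}_{01}(\para) - \eta = (1-2\rho)\,\bigl(\cL_{01}(\para) - \cL_{01}^{\star}\bigr)$. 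Since $\rho$ is a fixed constant strictly below $1/2$, the factor $1-2\rho$ is an absolute constant bounded away from $0$, and in our regime $\cL_{01}(\para)$ dominates $\cL_{01}^{\star}$, so the noisy excess risk and the noiseless loss $\cL_{01}(\para)$ agree up to this constant factor, i.e. are of the same order.

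I expect the only genuine subtlety — rather than a real obstacle — to be pinning down precisely what $\eta$ denotes and verifying that the clean excess risk $\cL_{01}(\para) - \cL_{01}^{\star}$ is comparable to $\cL_{01}(\para)$ itself; this needs only that the upper bounds proved on $\cL_{01}(\para)$ (e.g.\ in Theorem~\ref{thm: earlystop, noisy}) are never smaller in order than the exponentially small clean Bayes error, which holds throughout the mild-overparameterization regime. Everything else is the elementary conditioning computation above, and it relies on $\rho$ being a constant with $\rho<1/2$ so that the factor $1-2\rho$ does not degenerate.
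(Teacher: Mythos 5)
Your proof is correct and rests on exactly the same computation as the paper's: conditioning on the independent label-flip event to obtain the identity $\tilde{\cL}_{01}(\para) = \rho + (1-2\rho)\,\cL_{01}(\para)$, so that the noisy excess risk equals $(1-2\rho)\,\cL_{01}(\para)$ up to the constant factor $1-2\rho$ (the paper writes the flip rate as $\eta$ in this claim and takes the noise floor to be exactly that rate, rather than routing through the clean Bayes error as you do, but this is a cosmetic difference). Your added remarks on the zero-probability boundary event and on $\rho<1/2$ make explicit two points the paper leaves implicit.
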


\begin{proof}
Note that we here consider the $\cL_{01}$ on the noiseless data, and one can get $\tilde{\cL}_{01}$ on the noisy data by just adding $\rho$ to  $\cL_{01}$.
That is to say, 
\begin{equation*}
\begin{split}
    \tilde{\cL}_{01} 
    =& \bP(\tilde{\y} \x^\top \para < 0) \\
    =& \eta \bP(- \y \x^\top \para < 0) + (1-\eta) \bP(\y \x^\top \para < 0) \\
    =& \eta (1 - {\cL}_{01}) + (1-\eta){\cL}_{01} \\
    =& \eta + (1-2\eta) {\cL}_{01},
\end{split}
\end{equation*}
And therefore $\tilde{\cL}_{01} - \eta$ has the same order with ${\cL}_{01}$.
\end{proof}

\section{Experiments}
\label{appendix: Experiments}

\subsection{Implemented Tasks}

We conduct experiments on three public available datasets along with synthetic GMM data. First, we briefly introduce each dataset.

\textbf{ImageNet} is an image classification dataset containing 1.2 million pictures belonging to 1000 different classes. \textbf{CIFAR10} is an image classification dataset containing 60000 pictures belonging to 10 different classes. \textbf{Penn Tree Bank(PTB)} is a dataset containing natural sentences with 10000 different tokens. We use this dataset for language modelling task. \textbf{Synthetic GMM data} is generated following the setup in Section \ref{sec: Overparameterized Linear Classification}. We generate data for $n \in \{ 2^4,...,2^9 \},p \in \{2^4,...,2^9\},\rho \in \{0,0.4\},|\mu| = 40,\sigma = 1, n \le p$.

We then describe experiment details for each dataset.

For \textbf{ImageNet}, we train a ResNet50 from scratch. We use standard cross entropy loss. We first train the model for 90 epochs using $SGD$ as optimizer, with initial learning rate $1$e$-1$, momentum $0.9$ and weight decay $1$e$-4$. The learning rate will decay by a factor of $10$ for every $30$ iterations. Then we train the model for another $410$ epochs, with initial learning rate $1$e$-3$,momentum $0.9$ and weight decay $1$e$-4$ and learning rate will decay by a factor of $1.25$ for every $50$ iterations.

For \textbf{CIFAR10}, we randomly flip the label with probability $\{0,0.1,0.2,0.3,0.4,0.5,0.6\}$ and for each train a ResNet18 from scratch. We use standard cross entropy loss. We train each model for 200 epochs using $SGD$ as optimizer, with learning rate $1$e$-1$, momentum $0.9$ and weight decay $5$e$-4$. We use a cosine learning rate decay scheduler. For ablation study, we fix the flip probability to be $0.2$ and train WideResNet16-$W$ on the noisy dataset with $W \in {1,1.25,1.5,1.75,2,3,4}$. We repeat each experiments for four times. We train each model for 200 epochs using $SGD$ as optimizer, with learning rate $1$e$-1$, momentum $0.9$ and weight decay $5$e$-4$. We use a cosine learning rate decay scheduler 

For \textbf{Penn Tree Bank}, we train a standard transformer from scratch. We train one model for 4800 epochs using $ADAM$ as optimizer, with learning rate $5$e$-4$, beta $(0.9,0.98)$ and weight decay $1$e$-2$. We use an inverse square learning rate scheduler.\rone{ We train another model for 140 epochs using $SGD$ as optimizer with learning rate 5.0. We use a step learning rate schedule with step size 1 and gamma $0.95$.}

For \textbf{Synthetic GMM data}, we train a linear classifier for each dataset, initialized from $0$. We use logistic loss. We train each model using $SGD$ as optimizer with learning rate $1$e$-5$ until training loss decrease below $0.05$.

\subsection{Overfitting on PTB with language modelling}

In this subsection, we show that the training a small Transformer on PTB for language modelling also leads to overfitting, as shown in Figure~\ref{figptb}. This is expected based on our analysis. If we view language modelling as a classification task: predicting the next word based on a prefixed sequence, then the ground truth label will naturally be noisy.  

\begin{figure*}[t]  
\centering
\subfigure[PTB Adam]{
\begin{minipage}{0.45\linewidth}
\centerline{\includegraphics[width=1\textwidth,height = 4cm]{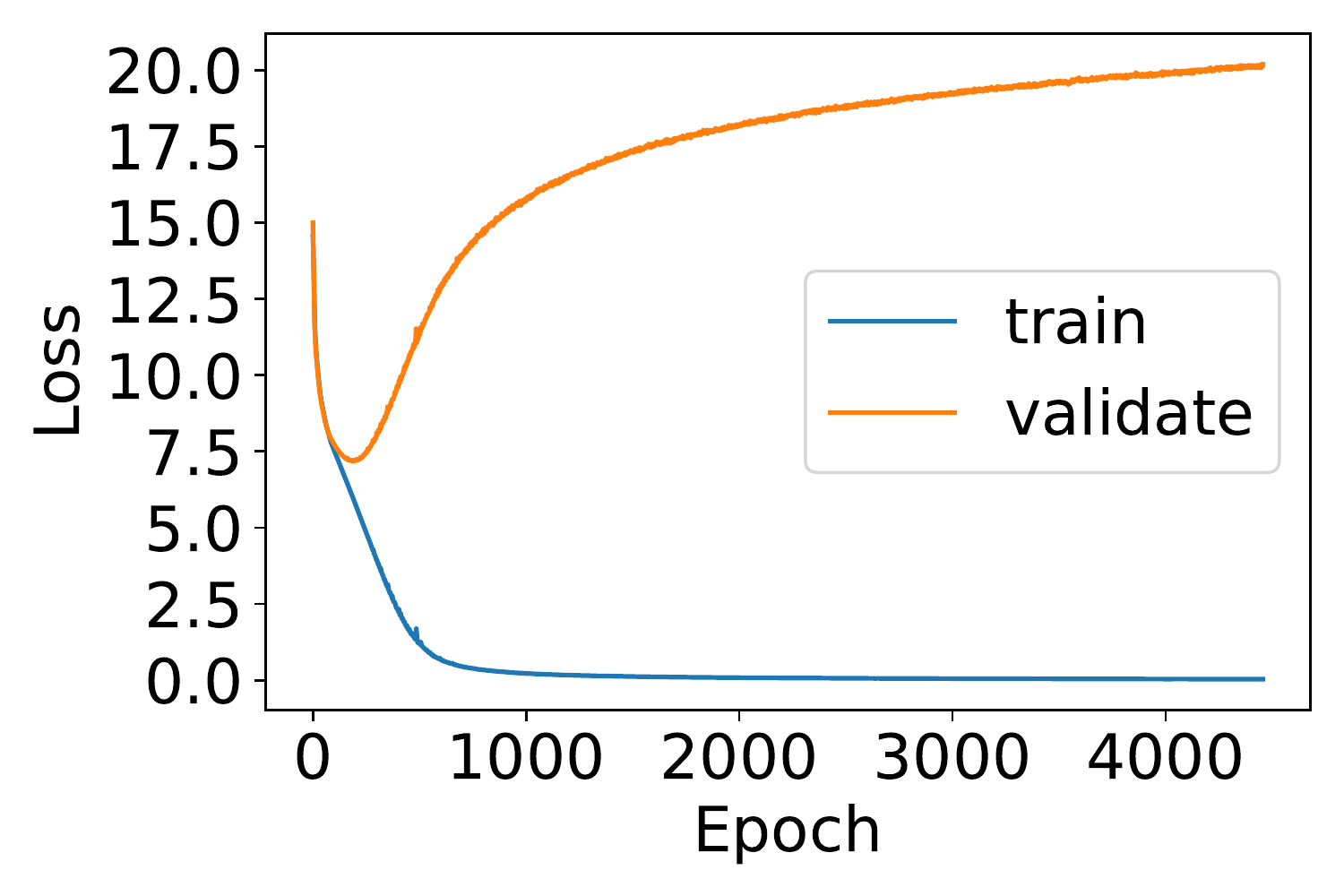}}
\end{minipage}
}
\subfigure[PTB SGD]{
\begin{minipage}{0.45\linewidth}
\centerline{\includegraphics[width=1\textwidth,height = 4cm]{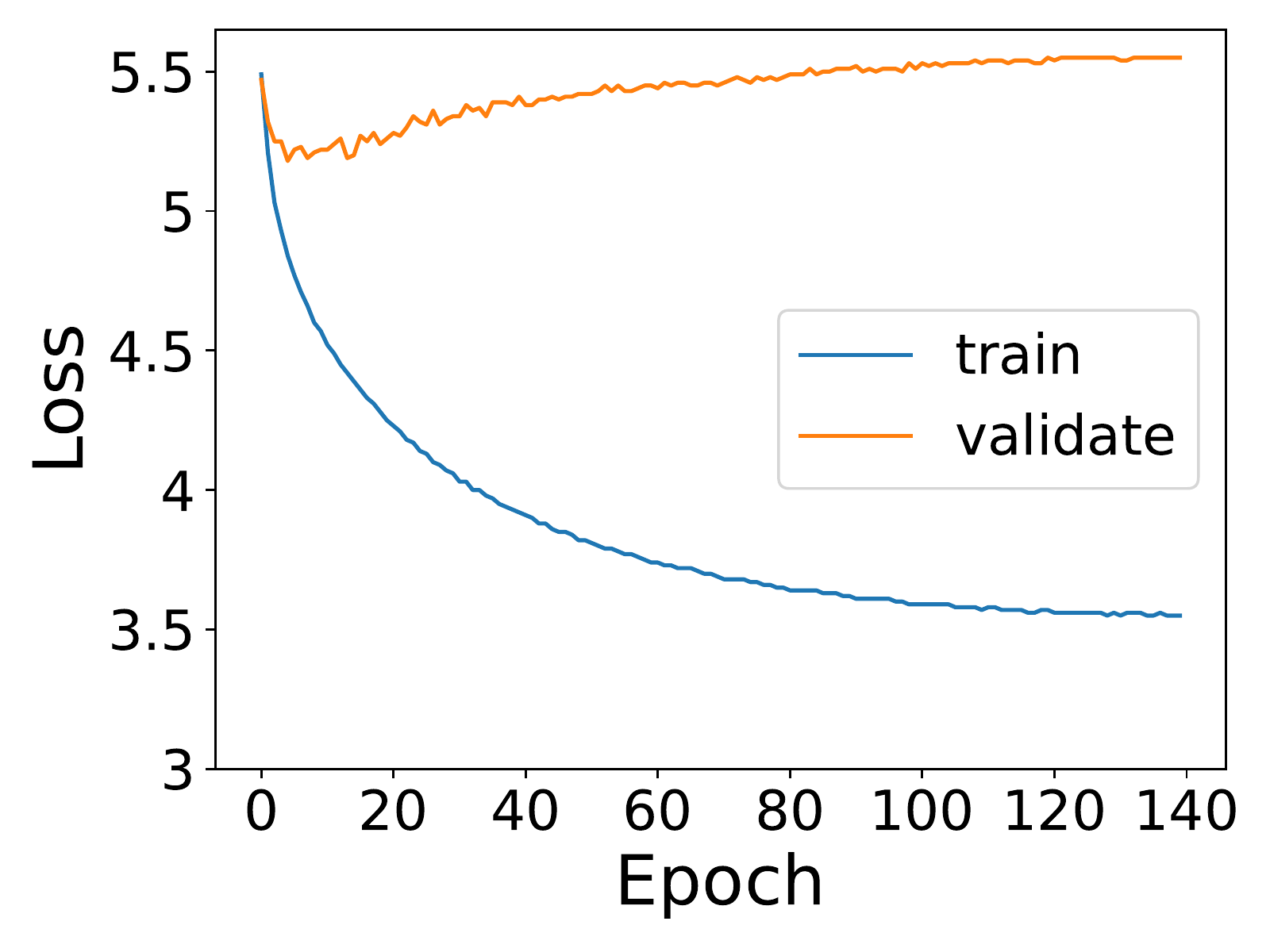}}
\end{minipage}
}
\quad
\caption{\textbf{Overfitting Behavior for language modelling.} We use transformer to train language modelling on Penn Tree Bank and plot the training loss as well as validation loss. We find that transformer overfits Penn Tree Bank significantly.
}
\label{figptb}
\end{figure*}

\subsection{\rone{More GMM Experiments}}
\rone{We hereby present more GMM experiments results}.
\rone{We consider signal-to-noise ratio between $\{10,20,40,80,160\}$ and the result is shown in figure~\ref{GMM}.}
\rone{
Notice that according to our theory, we can roughly separate the GMM models into three cases ignoring some technical conditions.
\begin{itemize}
    \item [Case 1] $\|\mu\|/\sigma = o((p/n)^{1/4})$ which corresponds to the case where ~[A6] doesn't hold. In this case, as the signal-to-noise ratio is too small, even on noiseless data, the model will suffer a constant excess risk.
    \item[Case 2] $\| \mu\|/\sigma = O((p/n)^{1/4} + \sqrt{\log n})$ while $p = \Theta(n)$. In this case the signal-to-noise ratio is great enough, however the overparameterization is mild, hence although the model can benign overfit on noiseless data, it will fail to benigh overfit noisy data. However early stopping on the noisy data will still gives a classifier that have nonvacuous generalization guarantee.
    \item[Case 3] $\| \mu\|/\sigma = O((p/n)^{1/4} + \sqrt{\log n})$ while $p = \omega(n)$. In this heavy overparameteized regime, the model will benignly overfit both the noisy and clean data.
\end{itemize}
}

\rone{
Our simulation aligns with the theoretical prediction as 
\begin{itemize}
    \item [(1)] For small signal-to-noise ratio as $5$, we observe that when $p/n$ grows to $2^7$, the linear model starts to suffer an constant excess risk even for the noiseless data. For the noisy data, we need a larger signal-to-noise ratio compared with the overparameterization.
This corresponds to Case 1 in our prediction and explain the constant excess risk in Figure~\ref{fig:Noiseless Sim5} and the overfitting in the heavy overparameterized regime in Figure~\ref{fig:Noisy Sim5}, Figure~\ref{fig: Noisy Sim Early5}, Figure~\ref{fig:Noisy Sim10} and Figure~\ref{fig: Noisy Sim Early10}. 
\item[(2)] We observe similar effect as in Figure~\ref{simulation} for $\frac{\| \mu\|}{\sigma}$ spanning across $\{10,20,40,80,160\}$ that in the mild overparameterized regime, non-benign overfitting happens consistently, while for the heavy overparameterized regime, benign overfitting takes place.
\item[(3)] We also notes that using technique as early stopping prevents overfitting and gives a classifier with generalization guarantee for $\frac{\| \mu\|}{\sigma}$ spanning across $\{10,20,40,80,160\}$ as predicted.
\end{itemize}
}

\begin{figure*}[t]  
\centering
\subfigure[Noiseless $\|\mu\|/\sigma = 5$]{
\label{fig:Noiseless Sim5}
\begin{minipage}{0.30\linewidth}
\centerline{\includegraphics[width=1\textwidth]{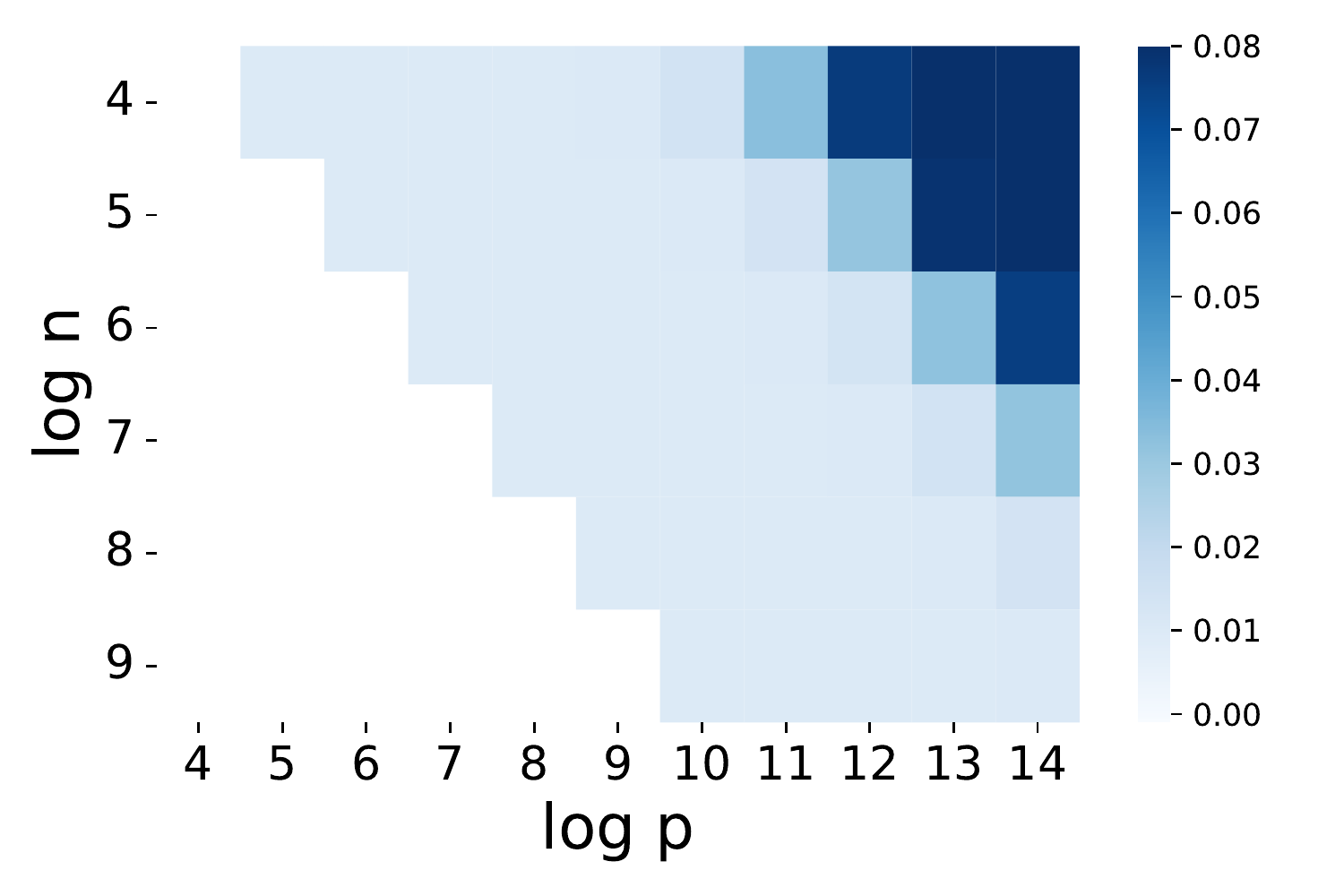}}
\end{minipage}
}
\quad
\subfigure[Noisy $\|\mu\|/\sigma = 5$]{
\label{fig:Noisy Sim5}
\begin{minipage}{0.30\linewidth}
\centerline{\includegraphics[width=1\textwidth]{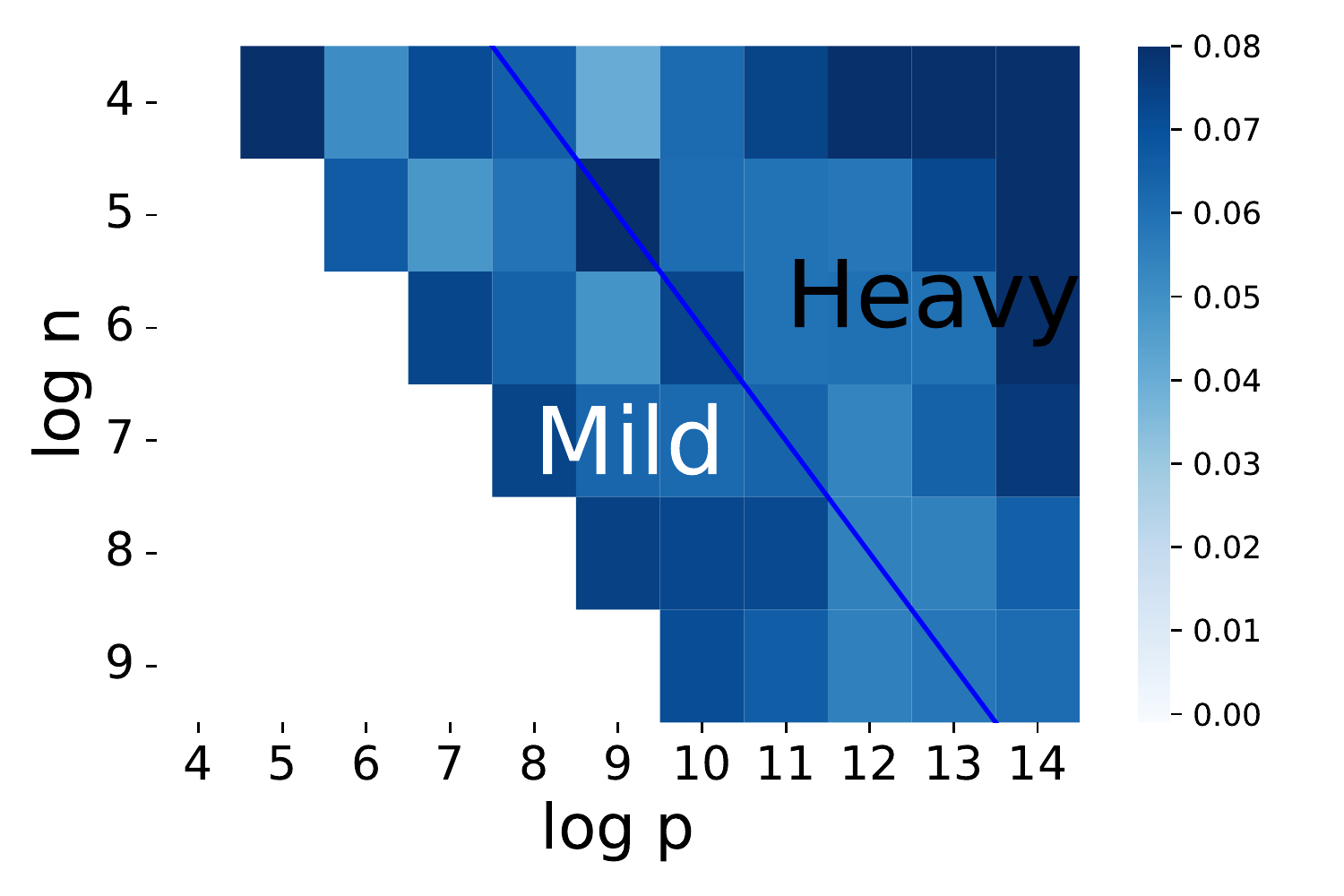}}
\end{minipage}
}
\quad
\subfigure[Early Stop $\|\mu\|/\sigma = 5$]{
\label{fig: Noisy Sim Early5}
\begin{minipage}{0.30 \linewidth}
\centerline{\includegraphics[width = 1\textwidth]{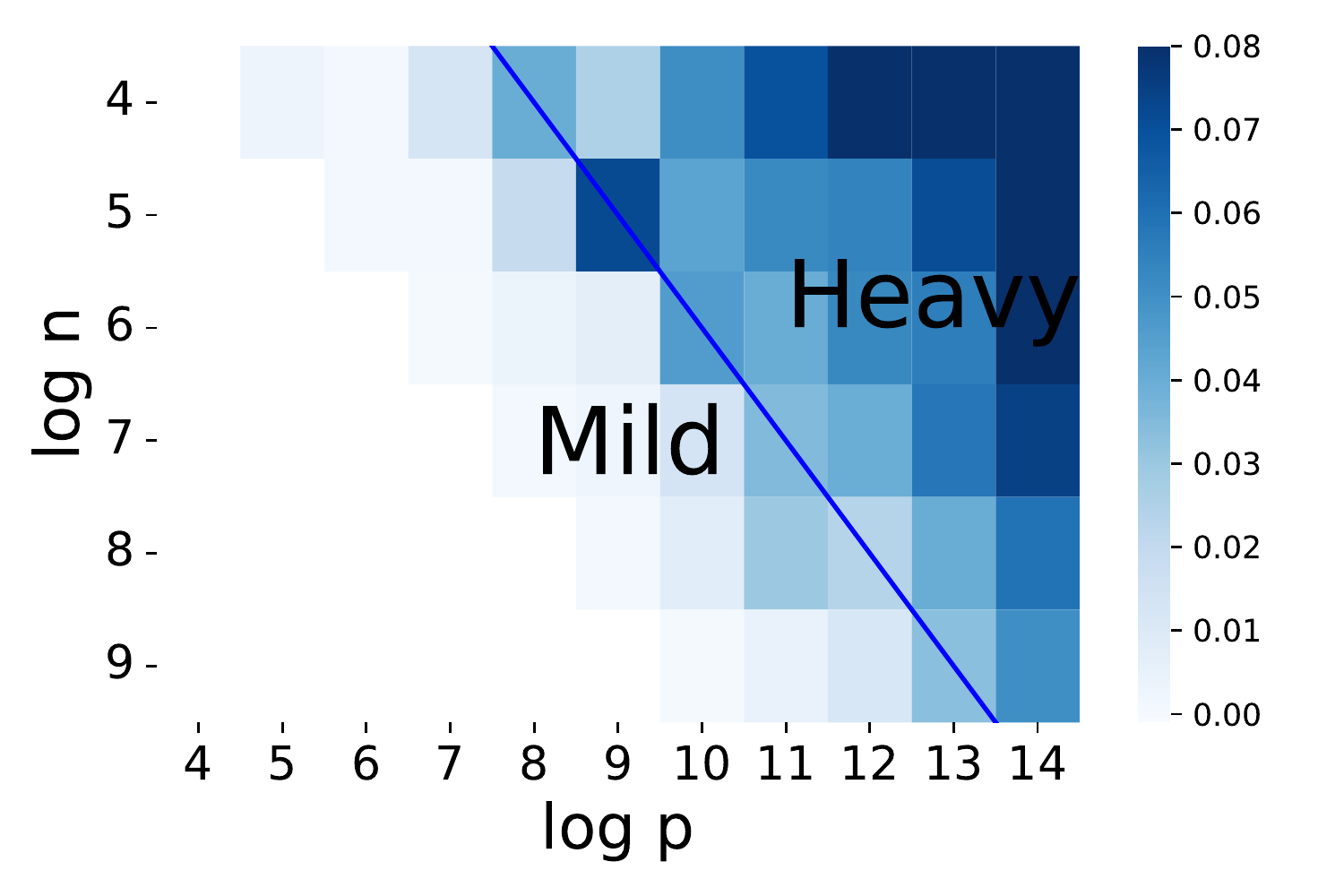}}
\end{minipage}
}
\subfigure[Noiseless $\|\mu\|/\sigma = 10$]{
\label{fig:Noiseless Sim10}
\begin{minipage}{0.30\linewidth}
\centerline{\includegraphics[width=1\textwidth]{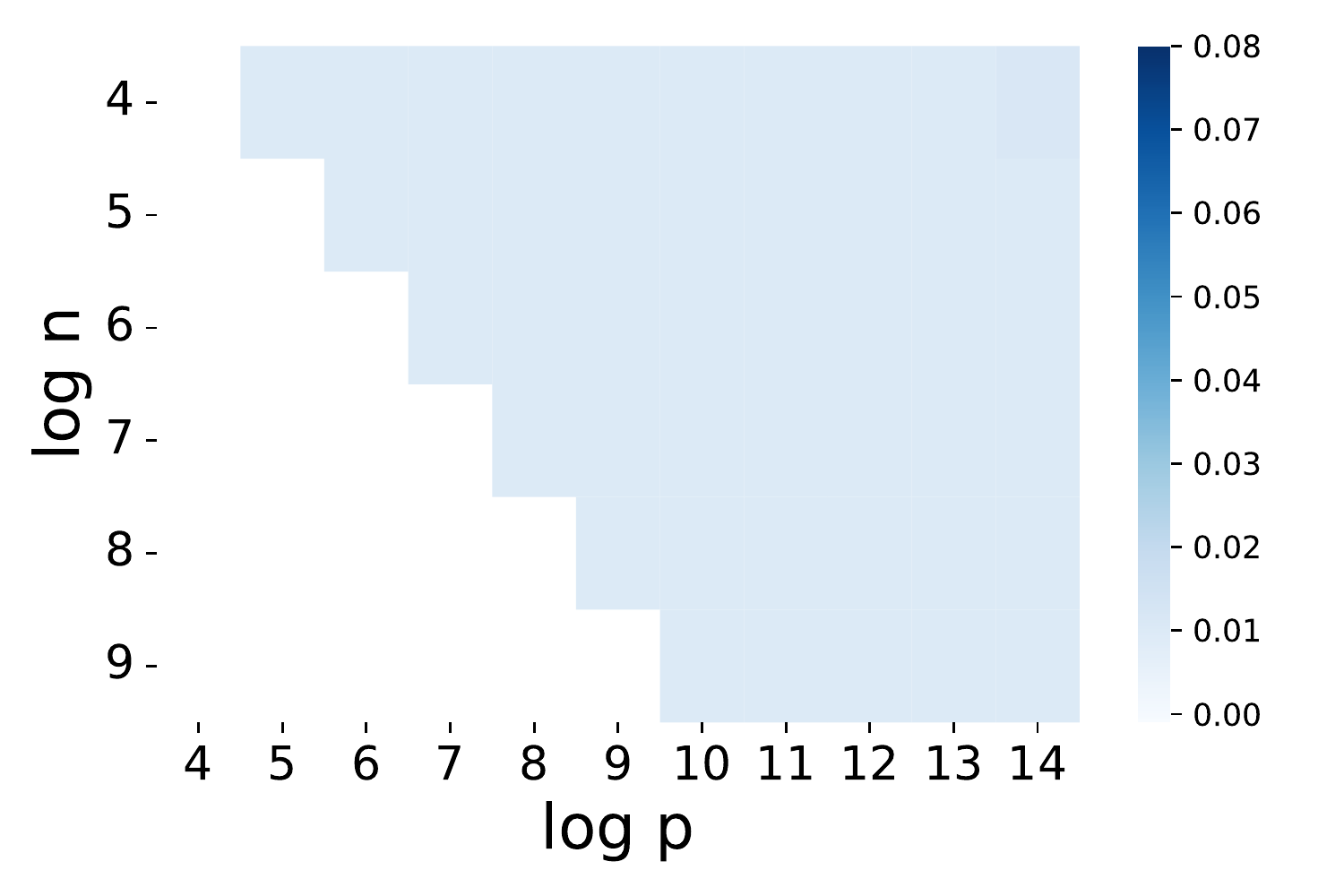}}
\end{minipage}
}
\quad
\subfigure[Noisy $\|\mu\|/\sigma = 10$]{
\label{fig:Noisy Sim10}
\begin{minipage}{0.30\linewidth}
\centerline{\includegraphics[width=1\textwidth]{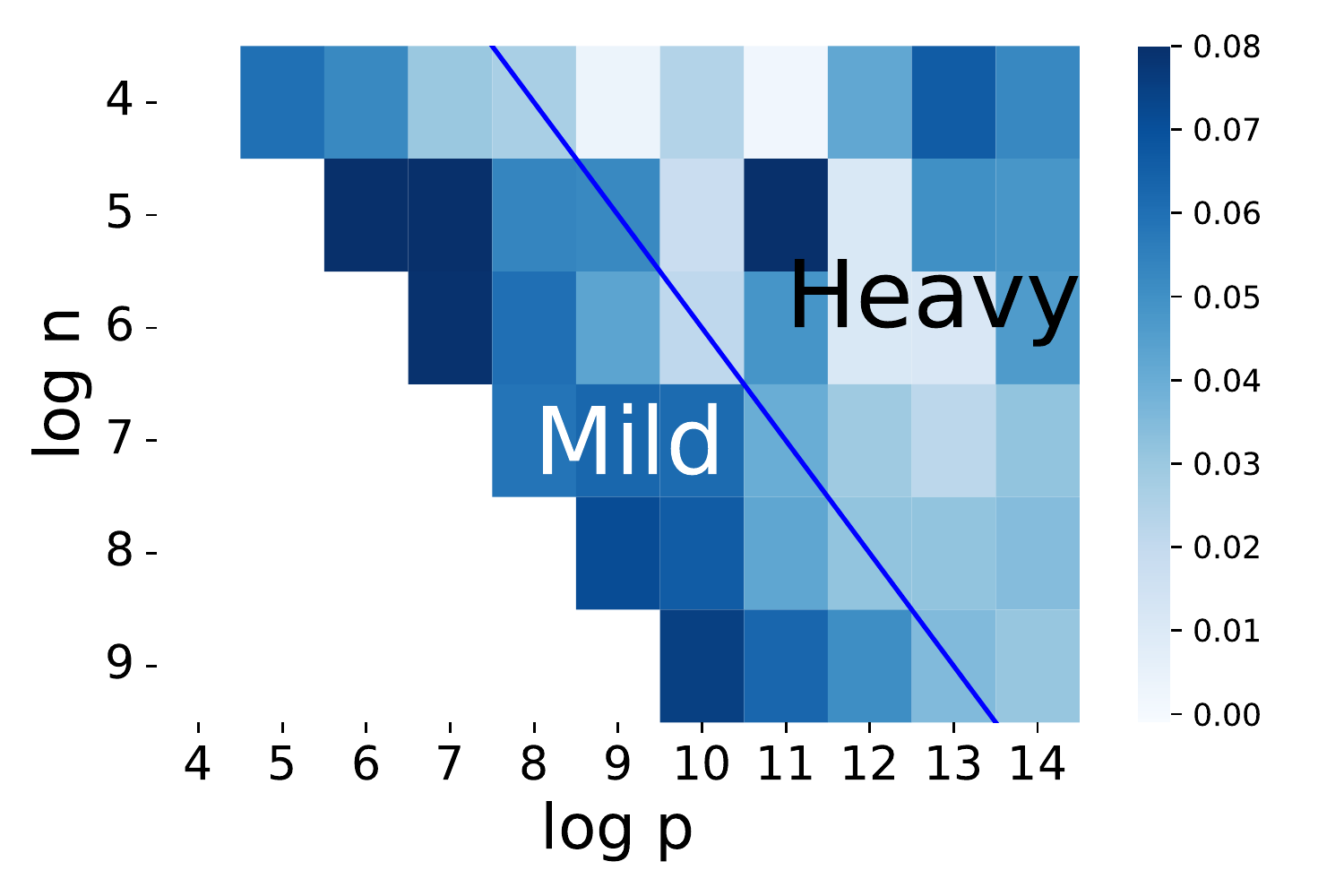}}
\end{minipage}
}
\quad
\subfigure[Early Stop $\|\mu\|/\sigma = 10$]{
\label{fig: Noisy Sim Early10}
\begin{minipage}{0.30 \linewidth}
\centerline{\includegraphics[width = 1\textwidth]{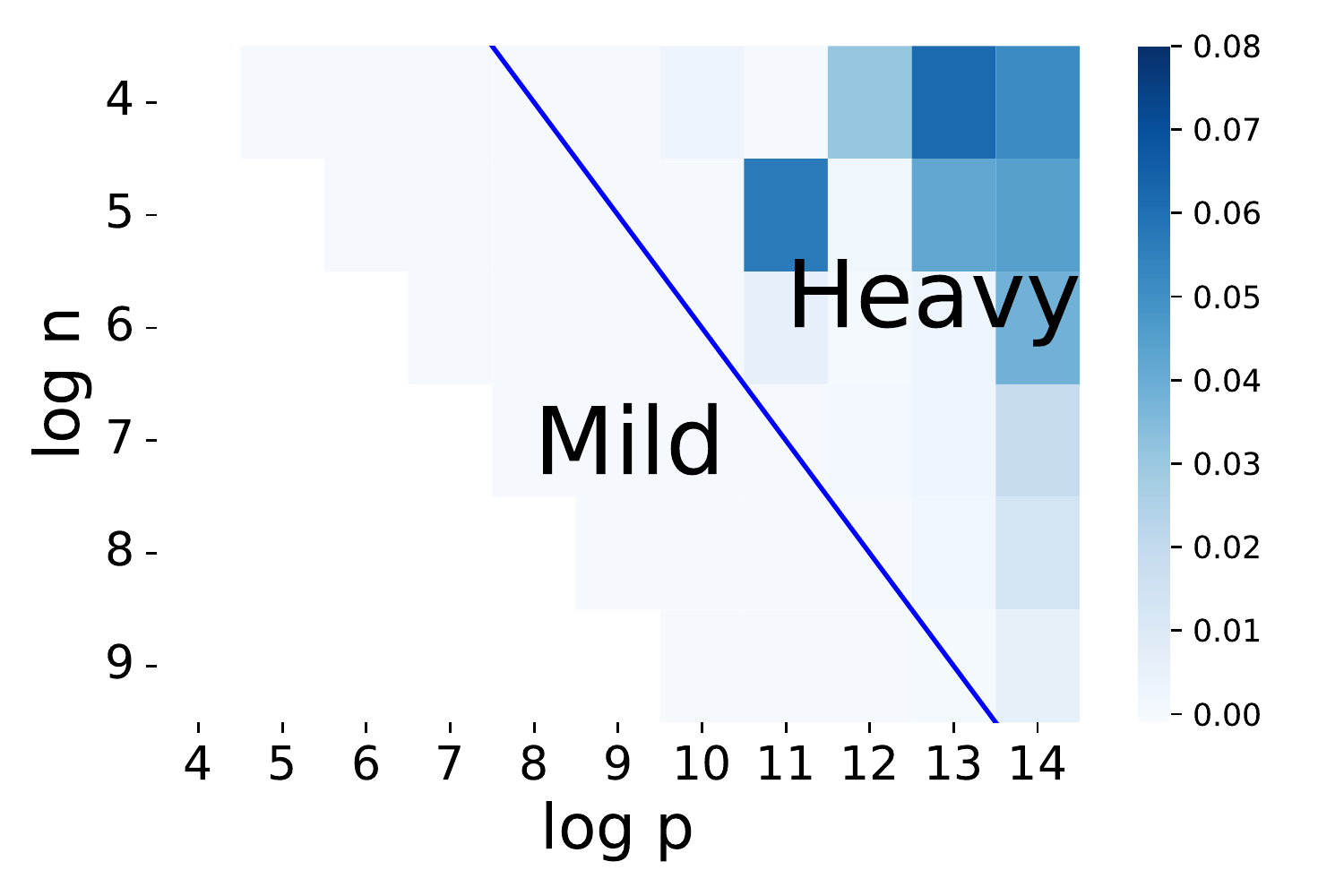}}
\end{minipage}
}
\subfigure[Noiseless $\|\mu\|/\sigma = 20$]{
\label{fig:Noiseless Sim20}
\begin{minipage}{0.30\linewidth}
\centerline{\includegraphics[width=1\textwidth]{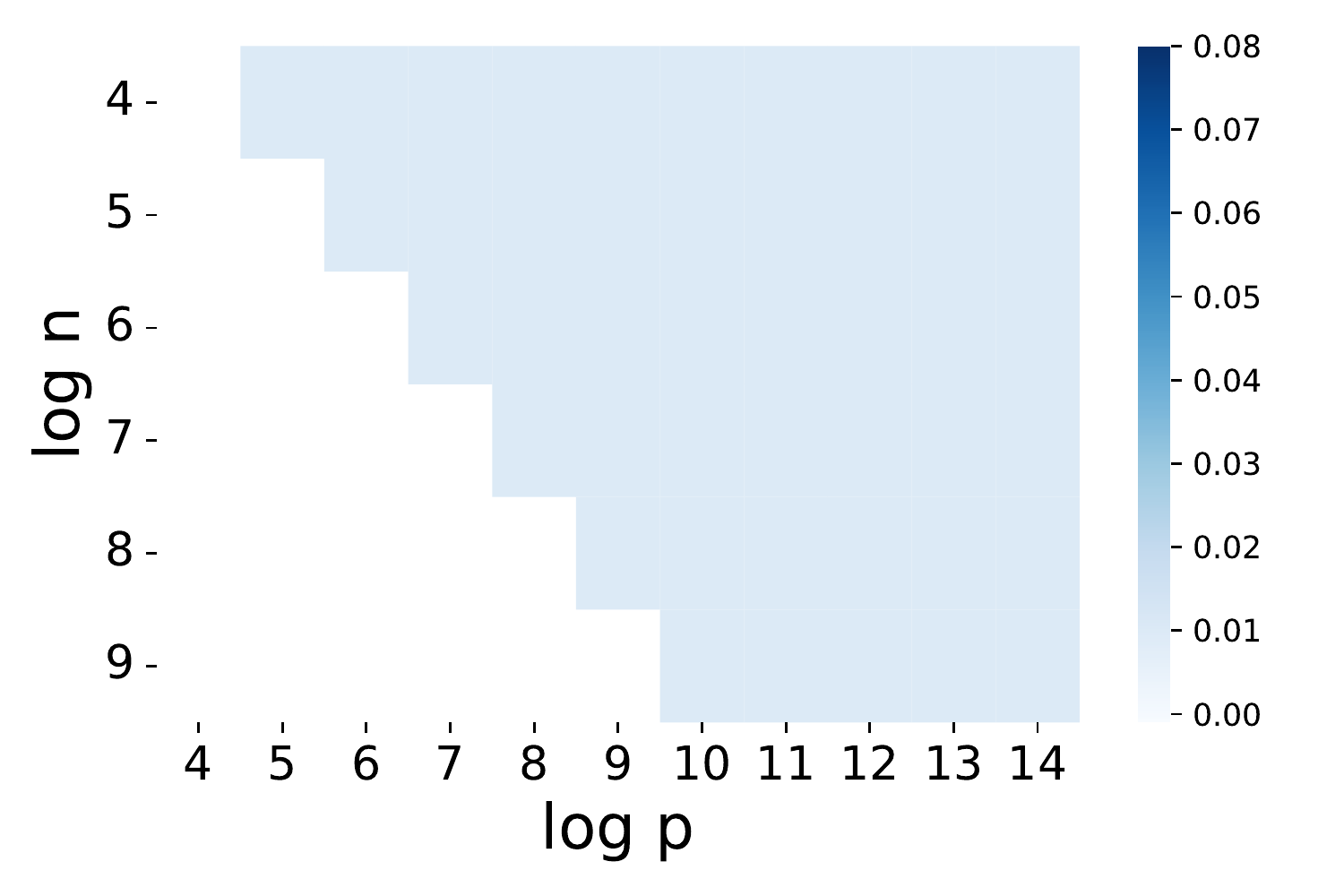}}
\end{minipage}
}
\quad
\label{fig:Noisy Sim20}
\subfigure[Noisy $\|\mu\|/\sigma = 20$]{
\begin{minipage}{0.30\linewidth}
\centerline{\includegraphics[width=1\textwidth]{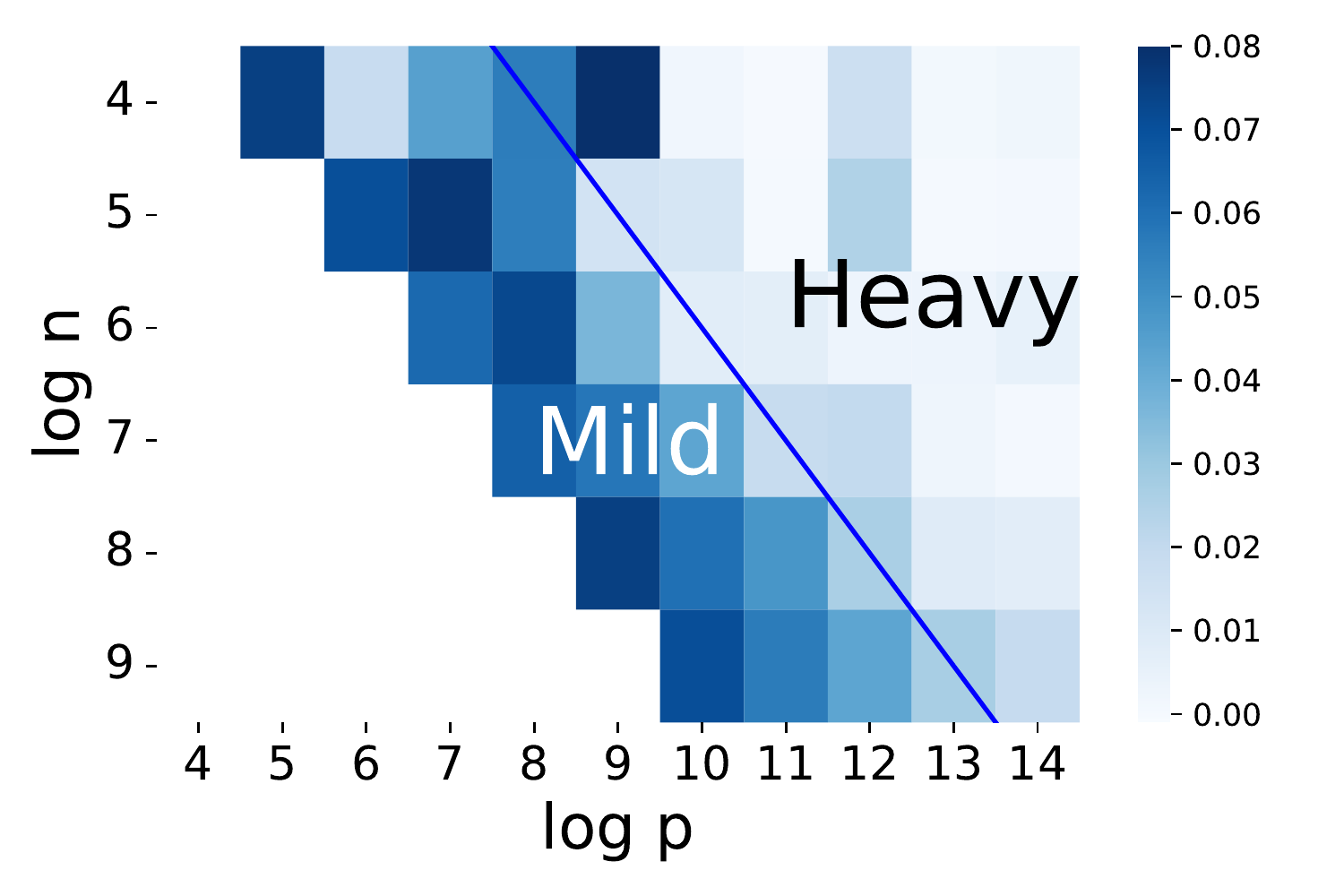}}
\end{minipage}
}
\quad
\label{fig: Noisy Sim Early20}
\subfigure[Early Stop $\|\mu\|/\sigma = 20$]{
\begin{minipage}{0.30 \linewidth}
\centerline{\includegraphics[width = 1\textwidth]{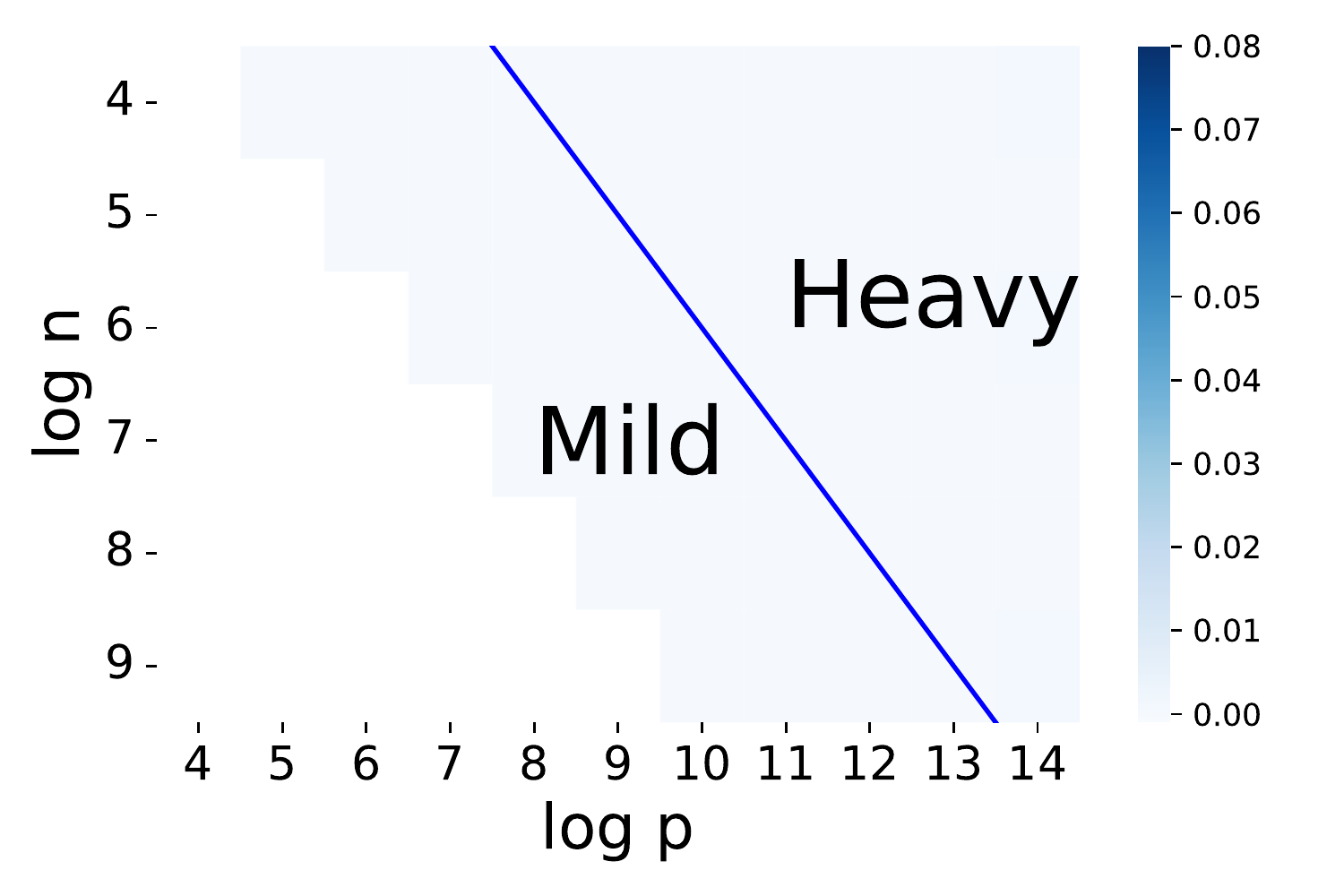}}
\end{minipage}
}
\label{fig:Noiseless Sim80}
\subfigure[Noiseless $\|\mu\|/\sigma = 80$]{
\begin{minipage}{0.30\linewidth}
\centerline{\includegraphics[width=1\textwidth]{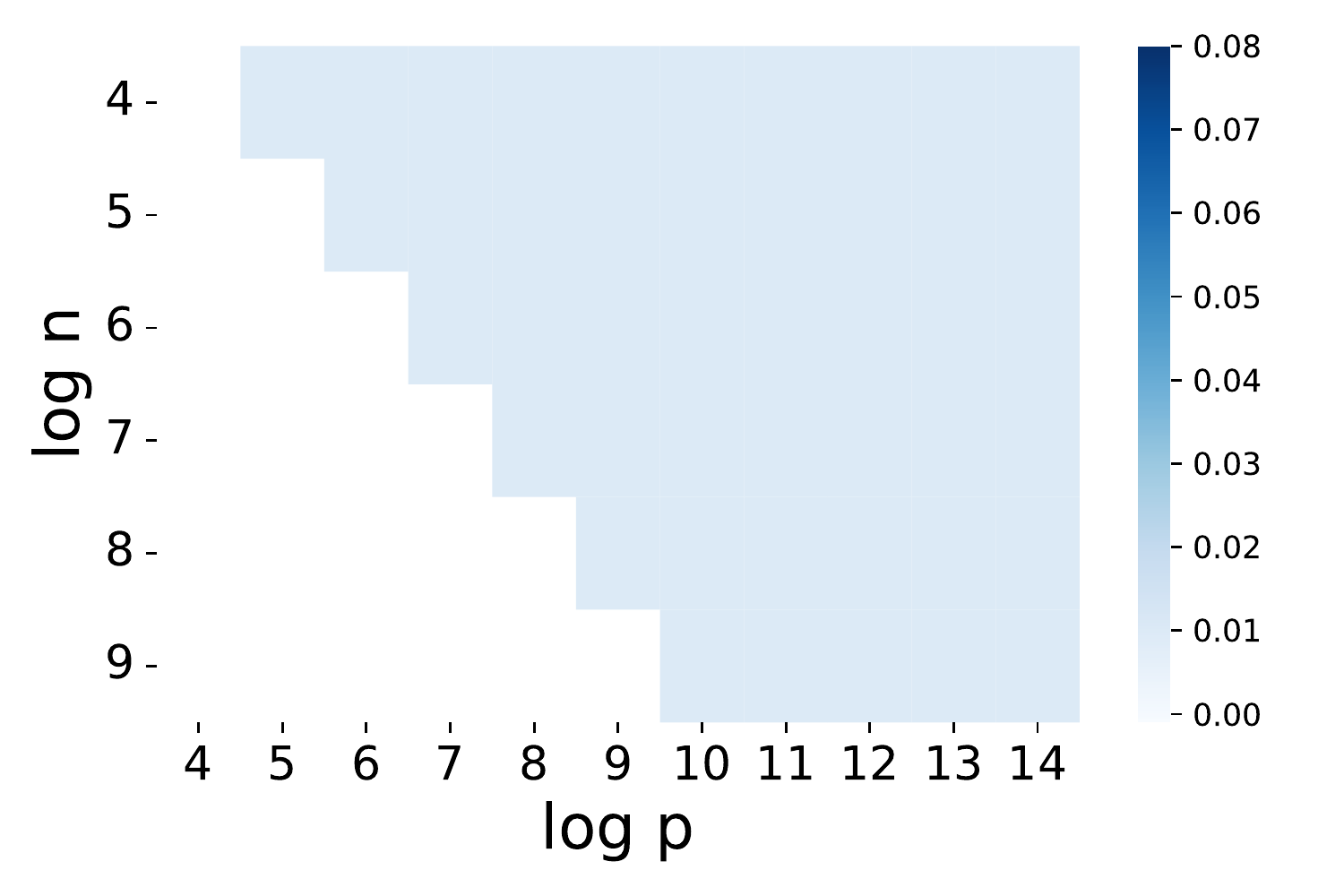}}
\end{minipage}
}
\quad
\label{fig:Noisy Sim80}
\subfigure[Noisy $\|\mu\|/\sigma = 80$]{
\begin{minipage}{0.30\linewidth}
\centerline{\includegraphics[width=1\textwidth]{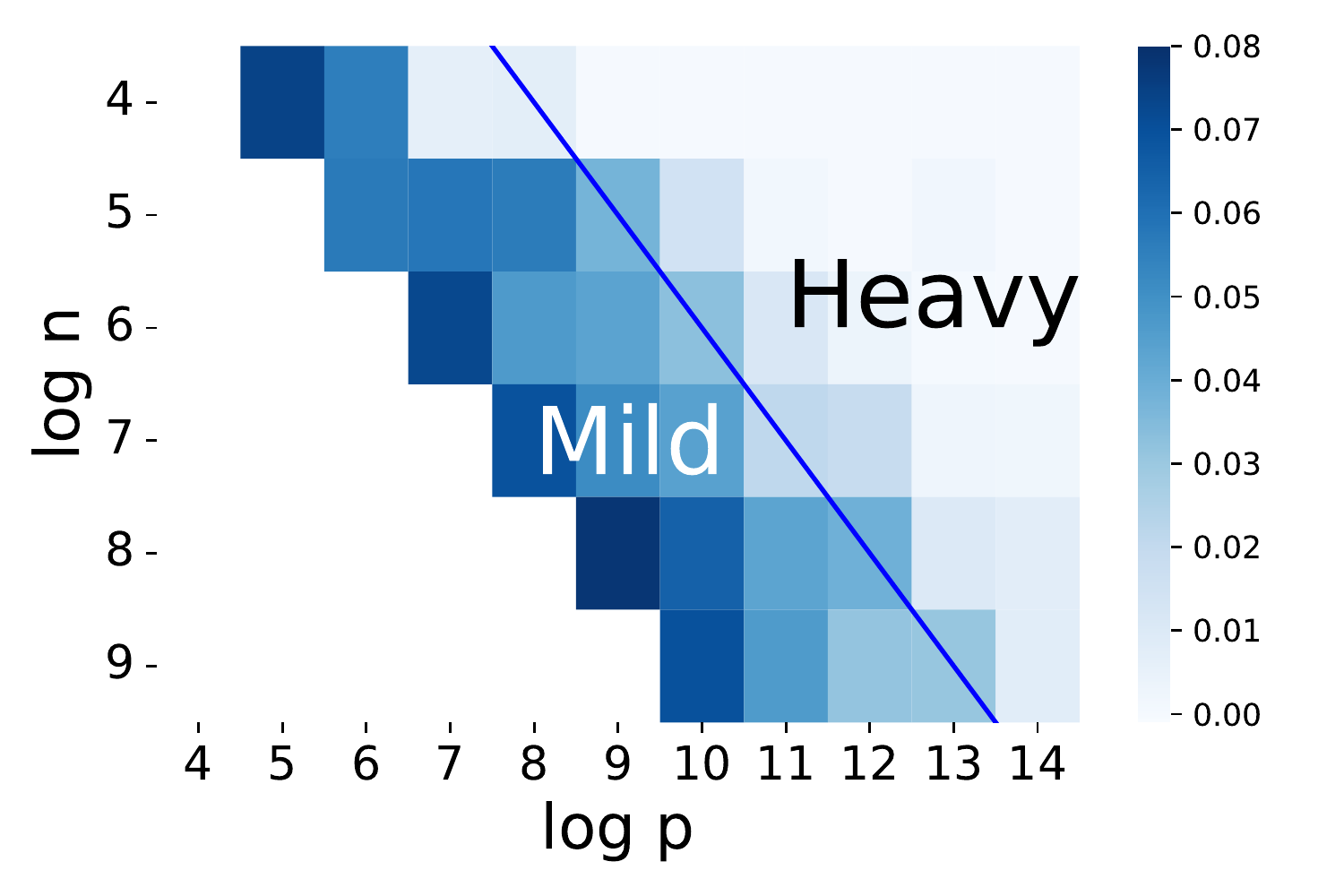}}
\end{minipage}
}
\quad
\label{fig: Noisy Sim Early80}
\subfigure[Early Stop $\|\mu\|/\sigma = 80$]{
\begin{minipage}{0.30 \linewidth}
\centerline{\includegraphics[width = 1\textwidth]{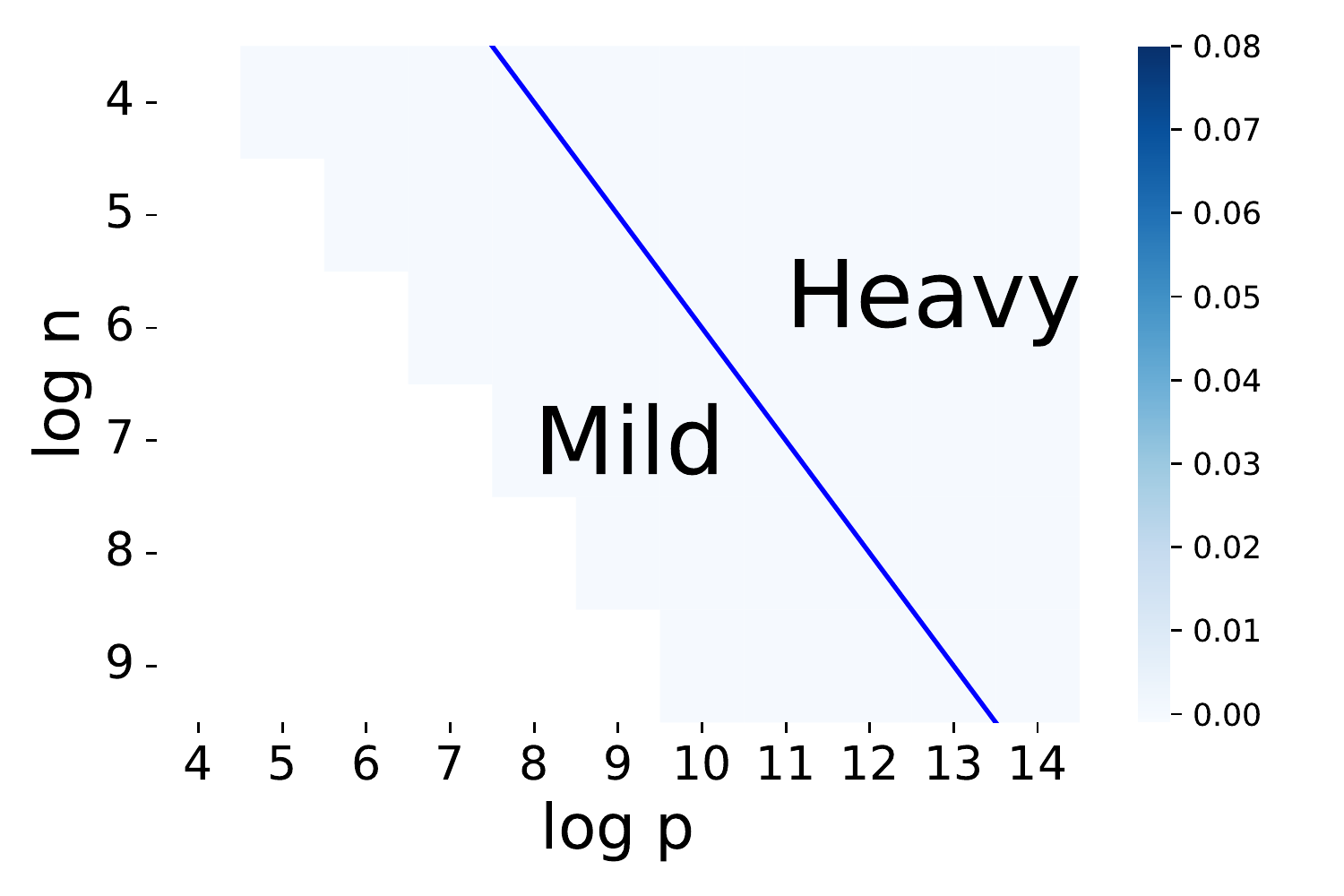}}
\end{minipage}
}\label{fig:Noiseless Sim160}
\subfigure[Noiseless $\|\mu\|/\sigma = 160$]{
\begin{minipage}{0.30\linewidth}
\centerline{\includegraphics[width=1\textwidth]{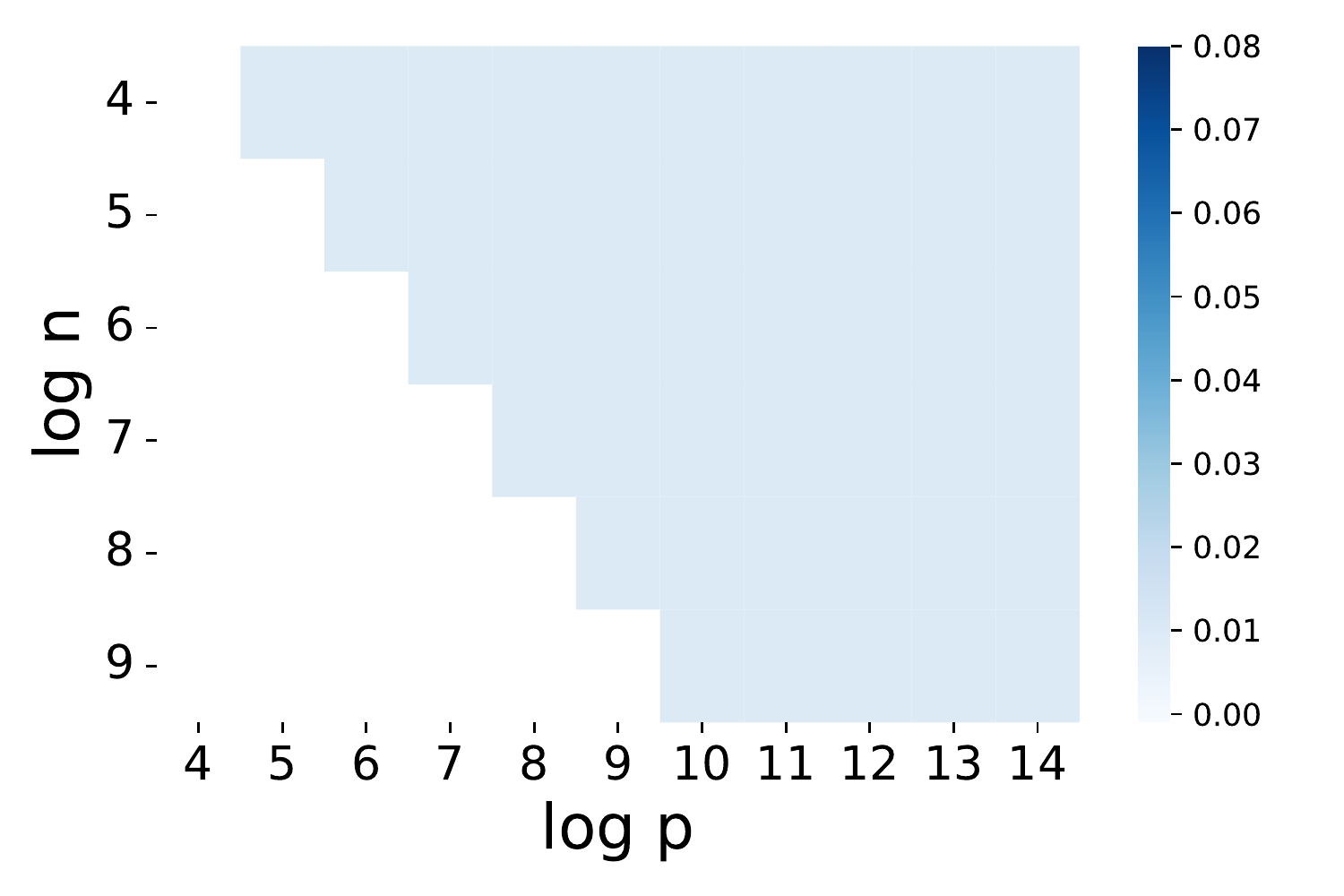}}
\end{minipage}
}
\label{fig:Noisy Sim160}
\subfigure[Noisy $\|\mu\|/\sigma = 160$]{
\begin{minipage}{0.30\linewidth}
\centerline{\includegraphics[width=1\textwidth]{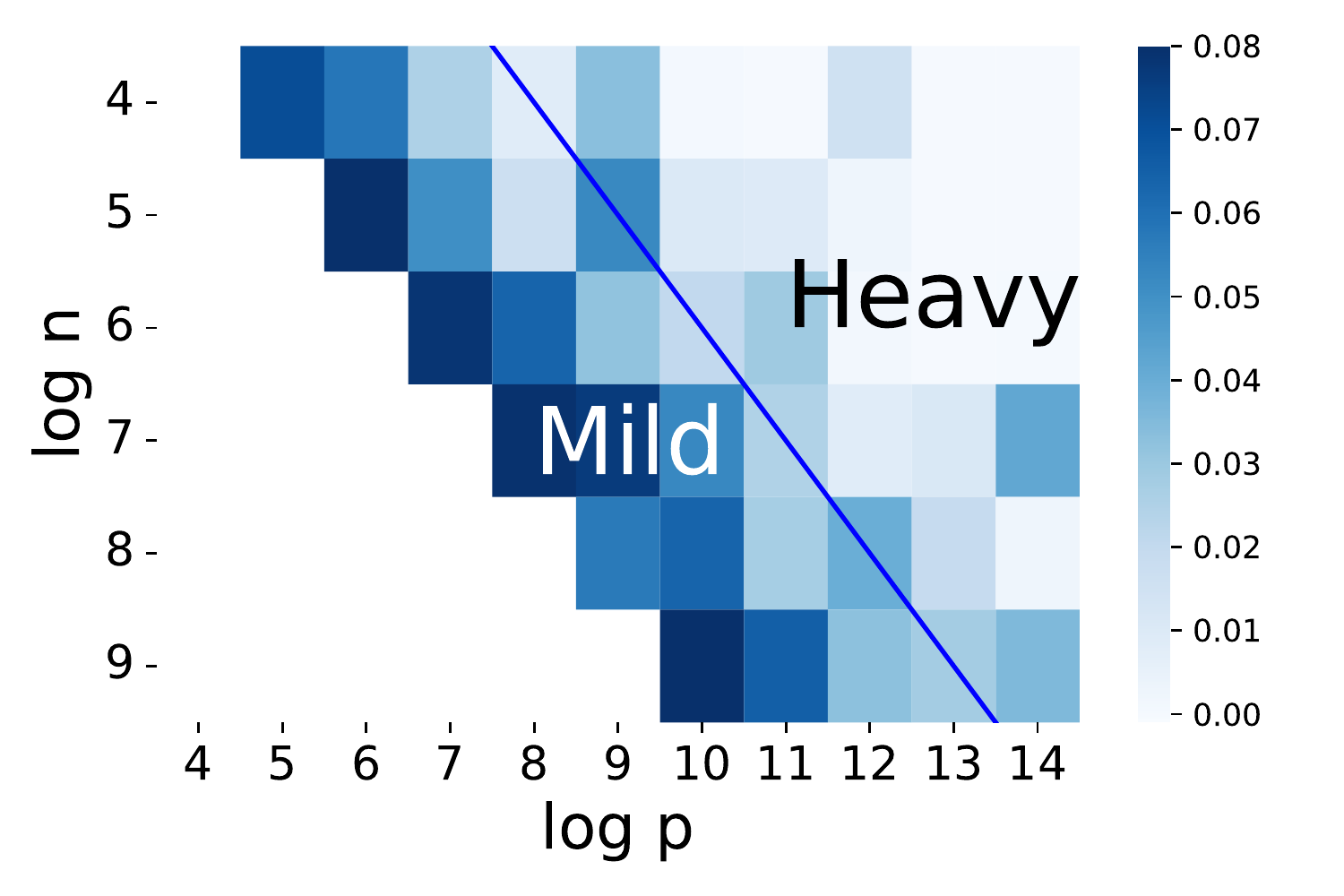}}
\end{minipage}
}
\quad
\label{fig: Noisy Sim Early160}
\subfigure[Early Stop $\|\mu\|/\sigma = 160$]{
\begin{minipage}{0.30 \linewidth}
\centerline{\includegraphics[width = 1\textwidth]{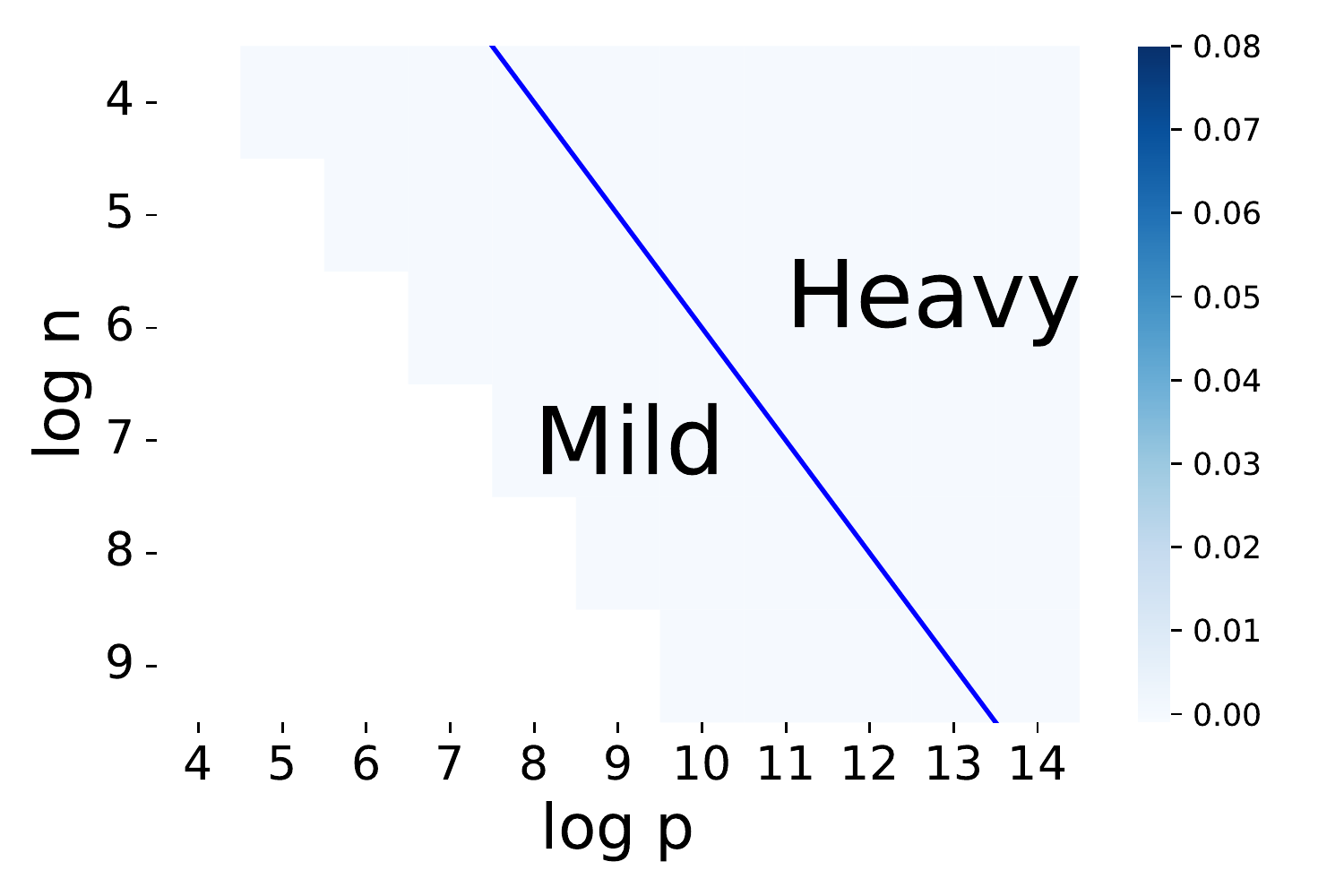}}
\end{minipage}
}
\caption{\textbf{Extended GMM Experiments.} The figure is derived in the same way as Figure~\ref{simulation}.}
\label{GMM}
\end{figure*}

\section{Discussion}

\subsection{Comparison between regression and classification}
\label{appendix: reg and classifcation}

\rone{Although \citet{bartlett2020benign} shows similar findings in regression regimes, we emphasize that the techniques can be pretty different. 
The difference between regression and classification problems can be split into two folds: \\
\textbf{(a) technical difference.} Regression problem usually permits closed-form solution through matrix pseudo-inverse while classification problem does not. Therefore, one needs to use other terms, \emph{e.g.}, margin, to study the generalization bound. Besides, the commonly-used operations in regression settings, \emph{e.g.}, variance-bias decomposition do not apply in classification settings. Therefore, we need some different techniques during the proof. For example, when proving Theorem 3.1.2 we need to choose those samples with wrong labels, which is impossible in regression regimes. \\
\textbf{(b) different label noise type.} In regression problems, the label noise is usually subGaussian type. However,  the label noise is totally different in classification problems and therefore their results cannot be extended to the classification settings. However, as the most commonly-used learning task, the analysis of classification is important. 
}

\subsection{Detecting Benign Overfitting in Practice}
\label{appendix: benign overfitting criterion in practice}

 Benign overfitting refers to that \emph{large models can fit data well without giving poor generalization performance}~\citep{bartlett2021deep}, indicating the models operate outside the realm of uniform convergence. There could be multiple mathematical instantiations of the statement. For example, \citet{bartlett2020benign} bound the generalization error of the last-epoch model with sample size and model size to characterize benign overfitting, and prove that the validation accuracy approaches Bayesian optimal as sample size increases. Our theoretical claims (Statements 1 and 3 of Theorem 3.1) follow the same criterion. We show that the validation error of a fully trained linear model approaches the label error, and hence is Bayesian optimal.

However, the criterion \emph{generalization error converges to zero} is hard to verify in experiments, as it is hard to determine when the model size and sample complexity are \emph{large enough} for the generalization bound to be sufficiently small, given that the data set size is upper bounded. Hence we adopt two mild assumptions to test whether benign overfitting happens empirically.
\begin{enumerate}
    \item Training the model for long-enough epochs (at least three times that of the standard practice) can work as an honest proxy for training for infinite epochs. 
    \item The gap of verification accuracy between the best-epoch model and the last-epoch model can work as an efficient proxy for the gap between the Bayesian optimal accuracy and the validation accuracy of the last-epoch model. This assumption is supported by our theoretical predictions (Statements 2 and 3 of Theorem 3.1).
\end{enumerate}

Hence we adopt the different notion of benign overfitting, as the reviewer mentioned, 'validation performance does not drop while the model fits more training data points'. This notion is
\begin{enumerate}
    \item coherent to the intuitive definition \emph{embed all of the noise of the labels into the parameter estimate —without harming prediction accuracy};
    \item easily experimentally verifiable;
    \item closely related to theoretical predictions in our work and previous works, in the sense that based on theoretical prediction, in the mild overparameterization regime (our work), the drop will be significant regardless of model size, and in the heavy overparameterization regime (previous works), the drop will diminish when the model size and sample complexity increases.
\end{enumerate}

Finally, we would like to stress that the similarity between the overparameterization through model size and training a model longer is not required for our analysis.

\end{document}